\documentclass{article}

\usepackage[utf8]{inputenc} 
\usepackage[T1]{fontenc}    
\usepackage{hyperref}       
\usepackage{url}            
\usepackage{booktabs}       
\usepackage{amsfonts}       
\usepackage{nicefrac}       
\usepackage{microtype}      
\usepackage[table]{xcolor}  

\usepackage{algorithm}
\usepackage{algorithmic}

\makeatletter
\renewcommand{\fs@ruled}{%
  \def\@fs@cfont{\bfseries}\let\@fs@capt\floatc@ruled
  \def\@fs@pre{\hrule height 1.2pt depth 0pt \kern 4pt}%
  \def\@fs@post{\kern 4pt\hrule height 1.2pt\relax}%
  \def\@fs@mid{\kern 4pt\hrule\kern 4pt}%
  \let\@fs@iftopcapt\iftrue}
\makeatother

\floatstyle{ruled}
\restylefloat{algorithm}

\usepackage[main, final]{neurips_2026}

\usepackage{graphicx}
\usepackage{subcaption}
\usepackage{multirow}

\usepackage{amsmath}
\usepackage{amssymb}
\usepackage{mathtools}
\usepackage{amsthm}

\usepackage{algorithm}
\usepackage{algorithmic}

\definecolor{commentgray}{gray}{0.5}

\newcommand{\GrayComment}[1]{%
    \STATE \quad \textcolor{commentgray}{\small \textit{// #1}}%
}

\theoremstyle{plain}
\newtheorem{theorem}{Theorem}[section]

\newtheorem{lemma}[theorem]{Lemma}

\theoremstyle{definition}

\newtheorem{assumption}[theorem]{Assumption}
\theoremstyle{remark}
\newtheorem{remark}[theorem]{Remark}

\title{Distribution-Aware Robust Bilevel Optimization: \texorpdfstring{\\}{ }Quantile-Guided Huber Updates in Two-Timescale Stochastic Approximation}

\author{%
  Zhiyu Li \\
  University of Science and Technology of China \\
  Hefei, China \\
  \texttt{lzyblank@mail.ustc.edu.cn} \\
  \And
  Xi Xuan \\
  City University of Hong Kong \\
  Hong Kong SAR, China \\
  \texttt{xixuan3@cityu.edu.hk} \\
  \And
Davide Carbone \\
  Laboratoire de Physique de l'{\'E}cole Normale Sup{\'e}rieure, \\
  Universit{\'e} PSL, CNRS, Sorbonne Universit{\'e}, \\
  Universit{\'e} de Paris \\
  Paris, France \\
  \texttt{davide.carbone@phys.ens.fr} \\
}

\begin{document}

\maketitle
\begin{abstract}
Bilevel optimization (BLO) is fundamental to hierarchical decision-making but suffers from critical instability under heavy-tailed stochastic noise. Existing variance-reduction techniques typically rely on myopic magnitude checks, which fail to distinguish informative geometric signals from impulsive outliers. To resolve this, we propose \textbf{RQ-TTSA} (Robust Quantile-guided TTSA), a distribution-aware framework that leverages historical gradient buffers to estimate rolling quantiles for adaptive Huber-style clipping, effectively preserving local optimization geometry while strictly bounding effective variance. Theoretically, we provide a convergence analysis for quantile-guided TTSA under nonconvex-strongly convex assumptions with infinite-variance noise ($p \in (1,2]$), deriving a rate of $\mathcal{O}(T^{-\frac{p-1}{3p-2}})$ that recovers optimal dependence on the heavy-tailed parameter. Empirically, across six diverse tasks, spanning heterogeneous vision benchmarks, dynamic games under momentum poisoning, and offline reinforcement learning, RQ-TTSA consistently outperforms state-of-the-art baselines by eliminating divergence spikes and ensuring stable convergence. Our method demonstrates significant robustness to hyperparameter variations and incurs negligible computational overhead ($\approx 2.7\%$ increase), validating distribution-aware gradient control as a practical and necessary component for reliable bilevel learning.
\end{abstract}

\section{Introduction}
\label{sec:intro}

Bilevel optimization (BLO) serves as the foundational framework for hierarchical decision-making, formulated as $\min_{x} F(x, y^*(x))$ subject to $y^*(x) \in \arg\min_{y} G(x, y)$. This nested structure underpins modern machine learning methods, including hyperparameter optimization \cite{franceschi2018bilevel}, meta-learning \cite{hospedales2021meta}, and actor--critic reinforcement learning \cite{hong2022twotimescaleframeworkbileveloptimization}. In applications such as coreset selection and data reweighting \cite{killamsetty2021grad}, the upper-level objective relies critically on the stability of the lower-level solution. However, this dependency creates a fragile bottleneck: stochastic noise or instability in the lower-level optimization propagates to the hyper-gradient, frequently leading to the instability of the entire learning process.

To address the intractability of exact solvers, Two-Timescale Stochastic Approximation (TTSA) has become the standard for scalable BLO, achieving a sample complexity of $\mathcal{O}(\epsilon^{-3})$ under nonconvex--strongly convex assumptions \cite{ji2021bilevel}. Yet, these guarantees hinge on the assumption of light-tailed (bounded-variance) noise. In practice, this assumption is routinely violated in regimes characterized by sparse rewards in RL \cite{zhang2019gradient, gorbunov2020stochastic} or data heterogeneity in federated learning \cite{li2020federated}. Under such heavy-tailed conditions, rare but extreme gradients destabilize the coupled dynamics, triggering numerical collapse that standard momentum or variance reduction techniques \cite{cutkosky2019momentum, simsekli2019tail, dagreou2022framework} fail to mitigate.

Prior literature has proposed two primary classes of stabilization techniques. Variance Reduction (VR) methods, such as SABA \cite{dagreou2022framework} and blockwise VR \cite{hu2023blockwise}, integrate SVRG or SAGA estimators to diminish gradient noise. While theoretically powerful ($\mathcal{O}(\epsilon^{-2})$ rates), they incur significant memory overhead and typically assume bounded variance ($p=2$), rendering them ineffective against heavy-tailed outliers. Adaptive Step-size and Momentum methods, such as BiSLS \cite{fan2023bisls} and accelerated robust optimization \cite{gong2024accelerated}, attempt to control instability by scaling updates inversely with instantaneous gradient norms or smoothing noise via inertia.

While effective in smooth landscapes, we argue that these approaches share a fundamental limitation: norm-based adaptation is strictly limited. Without distributional context, a large gradient norm is semantically ambiguous—it may indicate a steep descent trajectory or a stochastic outlier. Existing methods cannot disambiguate these cases, leading to either the over-damping of informative signals, such as BiSLS, or the accumulation of poisoned gradients in momentum buffers, a behavior characteristic of MA-SOBA \cite{chen2024optimal} and AccBO \cite{gong2024accelerated}. This blind adaptation results in suboptimal convergence or latent instability in non-stationary environments.

\textbf{Our approach: Distribution-Aware Robust Schemes.} We propose \textbf{RQ-TTSA} (Robust Quantile-guided TTSA), a framework that integrates robust statistical principles directly into the optimization process. Instead of relying on instantaneous magnitude checks, RQ-TTSA maintains a historical gradient buffer to estimate the empirical distribution and computes a quantile-based threshold $\psi$ defining a shifting safe region. Gradients exceeding this threshold are compressed via a Huber-style operator. This mechanism strictly bounds the effective variance of heavy-tailed updates while maintaining non-expansiveness, thereby preserving the local geometry of informative updates \cite{gorbunov2020stochastic}. By leveraging distributional statistics, RQ-TTSA fundamentally resolves the myopia of prior methods, extending robust BLO to infinite-variance regimes.

Building on advances in heavy-tailed optimization, our contributions are threefold. \textbf{First}, we introduce a quantile-guided Huber mechanism that embeds distribution-sensitive robustness into TTSA, discriminating informative geometric signals from stochastic outliers. \textbf{Second}, we establish rigorous theoretical guarantees, proving global convergence to stationary points under smooth nonconvex assumptions, even when gradient noise possesses infinite variance ($p$-th moment bounded for $p \in (1, 2]$). We derive a convergence rate of $\mathcal{O}(T^{-\frac{p-1}{3p-2}})$, demonstrating that the proposed quantile-guided clipping allows TTSA to tolerate heavy-tailed noise without diverging, a property not guaranteed by standard variance reduction techniques. \textbf{Third}, we demonstrate consistent empirical gains across six diverse tasks, where RQ-TTSA reduces optimization variance by up to $50\%$ and improves robustness $2$--$5\times$ over strong baselines, immunizing momentum-based optimizers against poisoning in updating environments \cite{yao2024large}. Code is provided at the anonymous link (Appendix \ref{app:code_link}).

\begin{table}[t]
\caption{Comparison of stochastic bilevel optimization solvers in the nonconvex-strongly-convex setting under smoothness assumptions on $f$ and $g$. We omit variance reduction methods that assume mean-squared smoothness. The $\tilde{O}$ hides $\log(\epsilon^{-1})$. SC means strongly-convex. Heavy-tailed refers to $p$-th moment bounded for $p \in (1,2]$ (infinite variance).}
\vspace{0.1cm}
\label{tab:complexity_comparison}
\centering
\small
\resizebox{\columnwidth}{!}{
\begin{tabular}{lccc}
\noalign{\hrule height 1.2pt}
\rowcolor{gray!15}
Method & Sample Complexity & (UL) $f$ & (LL) $g$ \\
\midrule
TTSA \cite{hong2022twotimescaleframeworkbileveloptimization} & $\tilde{O}(\epsilon^{-3})$ & $C^{1,1}_L$ & SC and $C^{2,2}_L$ \\
BiSLS \cite{fan2023bisls} & $\tilde{O}(\epsilon^{-3})$ & $C^{1,1}_L$ & SC and $C^{2,2}_L$ \\
MA-SOBA \cite{chen2024optimal} & $O(\epsilon^{-2})$ & $C^{1,1}_L$ & SC and $C^{2,2}_L$ \\
AccBO \cite{gong2024accelerated} & $\tilde{O}(\epsilon^{-3})$ & Unbounded Smooth & SC and $C^{2,2}_L$ \\
\rowcolor{blue!10}
RQ-TTSA (Ours) & $O(\epsilon^{-\frac{3p-2}{p-1}})$ & $C^{1,1}_L$ & SC and $C^{2,2}_L$ \\
\midrule
\rowcolor{gray!15}
Method & Noise Assumption & Hessian Inversion & Single-Loop \\
\midrule
TTSA \cite{hong2022twotimescaleframeworkbileveloptimization} & Bounded Variance & Neumann approx. & Yes \\
BiSLS \cite{fan2023bisls} & Bounded Variance & SGD & Yes \\
MA-SOBA \cite{chen2024optimal} & Bounded Variance & SGD & Yes \\
AccBO \cite{gong2024accelerated} & Small Variance $O(\epsilon)$ & SGD & Yes \\
\rowcolor{blue!10}
RQ-TTSA (Ours) & Heavy-tailed ($p \in (1,2]$) & SGD & Yes \\
\noalign{\hrule height 1.2pt}
\end{tabular}
}
\end{table}

\section{Related Work}

Research in stochastic BLO has progressed along three axes: variance reduction, adaptive step-size control, and robustness to heavy-tailed noise.

\citet{hong2022twotimescaleframeworkbileveloptimization} established TTSA for BLO, proving $\mathcal{O}(\epsilon^{-3})$ convergence for nonconvex-strongly convex objectives. \citet{ji2021bilevel} and \citet{dagreou2022framework} integrated momentum-based and SAGA-style loopless VR estimators to improve sample complexity to near-optimal rates. \citet{sharrock2022two} derived diffusion approximations for long-term stability, and \citet{hu2023blockwise} extended VR to multi-block settings. These analyses assume bounded variance or sub-Gaussian noise, making them fragile under impulsive perturbations.

To mitigate sensitivity to hyperparameters and transient instabilities, \citet{khanduri2021momentum} introduced a double-momentum scheme achieving $\mathcal{O}(\epsilon^{-3/2})$ rates, and \citet{reddi2020adaptive} developed adaptive momentum for federated settings. BiSLS \cite{fan2023bisls} scales step-sizes inversely with gradient norms to prevent explosion. MA-SOBA \cite{chen2024optimal} employs moving-average momentum for optimal complexity under relaxed smoothness; AccBO \cite{gong2024accelerated} handles unbounded smoothness via acceleration. Their reliance on global normalization or linear accumulation renders them susceptible to infinite-variance noise ($p \in (1, 2]$), lacking explicit mechanisms to filter impulsive outliers. Concurrent work \cite{ConcurrentWork1} proposes Jacobian-free methods to escape variance traps in root-finding bilevel settings; unlike that focus on estimator variance in root-finding, our framework targets distributional robustness against heavy-tailed noise in minimization.

For infinite-variance noise ($p \in (1, 2]$), existing strategies integrate 
fixed-threshold clipping \cite{zhang2019gradient} or normalized momentum 
\cite{cutkosky2019momentum}. While some analyses leverage 
Polyak--\L{}ojasiewicz (PL) conditions for linear convergence 
\cite{karimi2016linear}, we target general nonconvex smooth assumptions. 
Existing robust strategies, including static-threshold variants such as our 
$\psi$-Variant or aggressive normalization, indiscriminately dampen updates 
and distort optimization geometry by failing to distinguish large informative 
gradients from noise. In contrast, RQ-TTSA aligns with quantile-based robust 
statistics \cite{lugosi2019mean}, updating the clipping threshold adaptively 
to preserve geometric fidelity. This adaptivity is essential for applications 
such as RL \cite{zeng2024fast} and LLM unlearning \cite{yao2024large}, where 
gradient distributions vary.
\section{Methodology}
\label{sec:methodology}

\subsection{Problem Formulation}
We consider stochastic bilevel optimization:
\begin{equation}
    \min_{x \in \mathbb{R}^d} \Phi(x) \coloneqq F(x, y^*(x)),\quad y^*(x) {=} \operatorname*{argmin}_{y \in \mathbb{R}^m} G(x, y)
\end{equation}
with unbiased stochastic gradients $\nabla G(x, y; \xi)$ and $\nabla F(x, y; \zeta)$ under heavy-tailed noise. Standard TTSA \cite{hong2022twotimescaleframeworkbileveloptimization} updates:
\begin{equation}
\begin{aligned}
y_{k+1} &= y_k - \beta_k \nabla_y G(x_k, y_k; \xi_k), \\
x_{k+1} &= x_k - \alpha_k \hat{\nabla} \Phi(x_k, y_{k+1}),
\end{aligned}
\end{equation}
where $\alpha_k = o(\beta_k)$. While effective under bounded variance, standard TTSA is unstable under heavy-tailed noise. Adaptive methods such as BiSLS \cite{fan2023bisls} scale step sizes inversely with gradient norms but lack distributional awareness, leading to over-conservative updates or failure to filter outliers.

\subsection{Proposed Algorithm: RQ-TTSA}

Motivated by physical dynamics (see Appendix~\ref{app:impulse_corridor2}), we propose RQ-TTSA, applying a distribution-aware quantile-based clipping mechanism to adapt to local optimization geometry.

\paragraph{Dynamic Thresholding via Quantile Estimation.}
RQ-TTSA maintains a sliding window of historical norms $\mathcal{H}_k = \{\|\nabla_y G(x_{k-i}, y_{k-i}; \xi_{k-i})\|\}_{i=0}^{W-1}$ of size $W$. At iteration $k$, the $\tau$-quantile $\psi_k$ of $\mathcal{H}_k$ estimates the local scale, expanding in steep regions and contracting in flat ones—preserving optimization trajectories while filtering extreme noise. An annealing $\tau$ schedule balances early robustness with asymptotic precision. Guidelines for $\tau$ selection are in Appendix~\ref{app:tuningguide}.

\paragraph{Robust Lower-Level Update.}
Using $\psi_k$, we apply a Huber-style clipping operator $\mathcal{T}_{\psi_k}: \mathbb{R}^m \to \mathbb{R}^m$:
\begin{equation}
\label{eq:huber}
\mathcal{T}_{\psi_k}(g) = \min\left(1, \frac{\psi_k}{\|g\| + \epsilon}\right) g,
\end{equation}
yielding the lower-level update:
\begin{equation}
y_{k+1} = y_k - \beta_k \mathcal{T}_{\psi_k} (\nabla_y G(x_k, y_k; \xi_k)),
\end{equation}
where $\epsilon > 0$ prevents division by zero. The norm-based vs.\ coordinate-wise design choice is validated in Appendix~\ref{app:ablation_clipping}. The non-expansive property $\|\mathcal{T}_{\psi_k}(u) - \mathcal{T}_{\psi_k}(v)\| \le \|u - v\|$ preserves the strongly convex lower-level contraction to prevent divergence.

\paragraph{Upper-Level Update.}
The upper-level variable $x$ updates via a stochastic hypergradient estimator $\hat{\nabla} \Phi(x_k, y_{k+1})$, with Hessian inverse approximated by Neumann series:
\begin{equation}
\label{eq:hyper}
\hat{\nabla} \Phi = \nabla_x F - \nabla_{xy}^2 G \left[ \sum_{j=0}^{J} (I - \eta \nabla_{yy}^2 G)^j \right] \nabla_y F.
\end{equation}
The theoretical analysis in Section~\ref{sec:theory} assumes a stylized deterministic growth schedule $\psi_k \propto k^{\delta}$. The empirical quantile estimator in Algorithm~\ref{alg:rq_ttsa} approximates this: for a stationary heavy-tailed distribution with tail index $p$, the $\tau$-quantile of $\mathcal{H}_k$ grows as $\mathcal{O}(k^{1/p})$, satisfying the growth condition for any $\delta \in (0, 1/p)$. Theoretical guarantees apply provided buffer size $W$ is large enough for empirical quantile concentration, ensured by the sensitivity analysis in Appendix~\ref{app:complexitysensitivity}.

\begin{algorithm}[htb]
    \caption{Robust Quantile-guided TTSA (RQ-TTSA)}
    \label{alg:rq_ttsa}
    \begin{algorithmic}[1]
        \STATE \textbf{Input:} $x_0, y_0$, stepsizes $\{\alpha_k\}, \{\beta_k\}$, window size $W$, quantile $\tau$
        \STATE \textbf{Initialize:} History buffer $\mathcal{H} \leftarrow \emptyset$
        \FOR{$k=0, 1, \dots, T-1$}
            \STATE Sample $\xi_k$, compute $g_k = \nabla_y G(x_k, y_k; \xi_k)$
            \GrayComment{Stochastic gradient}
            \STATE $\mathcal{H} \leftarrow \mathcal{H} \cup \{\|g_k\|\}$, maintaining size $W$
            \GrayComment{Update sliding window}
            \STATE Compute threshold $\psi_k \leftarrow \mathrm{Quantile}(\mathcal{H}, \tau)$
            \STATE Apply robust operator $\tilde{g}_k \leftarrow \mathcal{T}_{\psi_k}(g_k)$
            \GrayComment{Quantile clipping}
            \STATE $y_{k+1} = y_k - \beta_k \tilde{g}_k$
            \GrayComment{Lower-level update}
            \STATE Estimate hyper-gradient $\hat{\nabla}\Phi(x_k, y_{k+1})$
            \STATE $x_{k+1} = x_k - \alpha_k \hat{\nabla}\Phi$
            \GrayComment{Upper-level update}
        \ENDFOR
    \end{algorithmic}
\end{algorithm}

\section{Theoretical Analysis}
\label{sec:theory}

We establish convergence guarantees for RQ-TTSA under heavy-tailed stochastic noise with potentially unbounded variance (i.e., $\mathbb{E}[\|\nabla G\|^2] = \infty$), showing that distribution-aware clipping recovers optimal TTSA rates under bounded variance assumptions.

\subsection{Assumptions and Preliminaries}

We adopt standard bilevel regularity assumptions \cite{hong2022twotimescaleframeworkbileveloptimization, ji2021bilevel} while relaxing noise conditions to heavy-tailed distributions.

\begin{assumption}[Regularity of Objective Functions]
\label{ass:regularity}
The objective functions satisfy:
\begin{enumerate}
    \item \textbf{Smoothness:} $F(x,y)$ and $G(x,y)$ are $L_F$-smooth and $L_G$-smooth, respectively, with Lipschitz continuous derivatives $\nabla_x F, \nabla_y F, \nabla_y G$.
    \item \textbf{Lower-Level Geometry:} For any $x \in \mathbb{R}^d$, $G(x, y)$ is $\mu$-strongly convex in $y$.
    \item \textbf{Bounded Cross-Derivatives:} $\nabla_{xy}^2 G$ and $\nabla_{yy}^2 G$ are bounded and Lipschitz continuous.
\end{enumerate}
\end{assumption}

Assumption~\ref{ass:regularity} ensures existence and uniqueness of $y^*(x)$ and Lipschitz smoothness of $\Phi(x)$ \cite{ghadimi2018approximation, ji2021bilevel}.

\begin{assumption}[Unbiasedness and Heavy-Tailed Noise]
\label{ass:heavy_tail}
The stochastic gradient oracles are unbiased: $\mathbb{E}[\nabla G(x,y;\xi)] = \nabla G(x,y)$, with bounded $p$-th moment for $p \in (1, 2]$:
\begin{equation}
    \mathbb{E}[\|\nabla_y G(x, y; \xi) - \nabla_y G(x, y)\|^p] \le \sigma^p.
\end{equation}
\end{assumption}

\textbf{Remark:} Standard TTSA requires $p=2$ (bounded variance); when $p < 2$ variance is infinite, causing divergence. Assumption~\ref{ass:heavy_tail} captures realistic RL and robust statistics scenarios \cite{zhang2019gradient, simsekli2019tail}.

\subsection{Key Theoretical Properties}

Our analysis leverages the non-expansiveness and bias-variance trade-off of the quantile-guided operator $\mathcal{T}_{\psi}$.

\begin{lemma}[Non-Expansiveness and Geometric Fidelity]
\label{lem:non_expansive}
For any $\psi_k > 0$, the Huber operator $\mathcal{T}_{\psi_k}$ is 1-Lipschitz: for any $u, v \in \mathbb{R}^m$,
\begin{equation}
    \|\mathcal{T}_{\psi_k}(u) - \mathcal{T}_{\psi_k}(v)\| \le \|u - v\|.
\end{equation}
\end{lemma}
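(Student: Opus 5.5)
The plan is to treat $\mathcal{T}_{\psi_k}$ as a \emph{radial} map and reduce the claim to two one-dimensional inequalities about its radial profile. Fix $\psi := \psi_k > 0$ and the constant $\epsilon > 0$ from \eqref{eq:huber}. Define $\phi:[0,\infty)\to[0,\infty)$ by
\begin{equation*}
\phi(r) = \min\Bigl(r, \tfrac{\psi r}{r+\epsilon}\Bigr).
\end{equation*}
Then $\mathcal{T}_{\psi}(g) = \phi(\|g\|)\, g/\|g\|$ for $g \neq 0$, and $\mathcal{T}_{\psi}(0)=0$. Thus $\|\mathcal{T}_\psi(g)\| = \phi(\|g\|)$, and $\mathcal{T}_\psi(g)$ is a nonnegative multiple of $g$.

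\textbf{Step 1 (profile properties).} I will show three facts about $\phi$.
\begin{itemize}
\item[(i)] $0 \le \phi(r) \le r$. This is immediate from the definition.
\item[(ii)] $\phi$ is nondecreasing.
\item[(iii)] $\phi$ is $1$-Lipschitz.
\end{itemize}
For (ii) and (iii), note that $\phi(r)=r$ when $r+\epsilon \le \psi$ and $\phi(r)=\psi r/(r+\epsilon)$ otherwise. The two pieces agree at $r+\epsilon=\psi$, so $\phi$ is continuous. On the second piece, $\phi'(r) = \psi\epsilon/(r+\epsilon)^2 > 0$. Moreover, $r+\epsilon \ge \max(\psi,\epsilon) \ge \sqrt{\psi\epsilon}$ there, which gives $\phi'(r) \le 1$. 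A continuous, piecewise $C^1$ function with derivative in $[0,1]$ is nondecreasing and $1$-Lipschitz.

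\textbf{Step 2 (reduction to the angle).} Take $u,v$ with $a=\|u\|$ and $b=\|v\|$. If either vector is zero, the claim follows from (i). Otherwise let $c\in[-1,1]$ be the cosine of the angle between $u$ and $v$. The images point in the same directions as $u$ and $v$, with lengths $\phi(a)$ and $\phi(b)$. Hence
\begin{equation*}
D(c) := \|u-v\|^2 - \|\mathcal{T}_\psi(u)-\mathcal{T}_\psi(v)\|^2 = \bigl(a^2+b^2-\phi(a)^2-\phi(b)^2\bigr) - 2c\bigl(ab-\phi(a)\phi(b)\bigr).
\end{equation*}
This expression is affine in $c$, so it suffices to check $D(\pm 1)\ge 0$.
\begin{itemize}
\item At $c=1$: $D(1) = (a-b)^2 - (\phi(a)-\phi(b))^2 \ge 0$ by (ii) and (iii).
\item At $c=-1$: $D(-1) = (a+b)^2 - (\phi(a)+\phi(b))^2 \ge 0$ by (i).
\end{itemize}
Hence $D(c)\ge 0$ for all $c\in[-1,1]$, which is the claim.

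\textbf{Main obstacle.} The only delicate point is the $\epsilon$-regularization. Because of it, $\mathcal{T}_\psi$ is not exactly the Euclidean projection onto a ball, so I cannot simply cite non-expansiveness of projections. Step 1 handles this. The key check is that $\phi'\le 1$ on the clipped piece, which requires $(r+\epsilon)^2 \ge \psi\epsilon$. That bound holds because $r+\epsilon$ is at least both $\psi$ and $\epsilon$ there. Everything else is the standard argument for radial maps.
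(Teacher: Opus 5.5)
Your proof is correct, and it takes a genuinely different route from the paper's. The paper argues in one line. It identifies $\mathcal{T}_{\psi}$ with the Euclidean projection onto the closed ball $\mathcal{B}(0,\psi)$, invokes firm non-expansiveness of convex projections, $\|P(u)-P(v)\|^2 \le \langle P(u)-P(v), u-v\rangle$, and finishes with Cauchy--Schwarz.

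That identification is exact only for the unregularized operator $\min\{1,\psi/\|v\|\}v$, which is the form the appendix switches to. For the operator in \eqref{eq:huber} with $\epsilon>0$ it fails: vectors with $\psi-\epsilon<\|g\|\le\psi$ lie in the ball yet are shrunk, and the map is not even idempotent. This is exactly the obstacle you identify.

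Your radial-profile argument handles the operator as actually defined: reduce to the cosine $c$, note that $D(c)$ is affine, and check $c=\pm1$. It also proves more. Any radial map $g\mapsto\phi(\|g\|)\,g/\|g\|$ with $0\le\phi(r)\le r$ and $\phi$ $1$-Lipschitz is non-expansive, and the case $\epsilon=0$ is recovered by taking $\phi(r)=\min(r,\psi)$. Incidentally, property (ii) is never needed, since $D(1)\ge 0$ follows from (iii) alone.

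The projection route buys brevity, plus the stronger firm non-expansiveness for the unregularized clip. Yours buys a proof that matches the definition the lemma actually refers to.
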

\textit{Proof Sketch.} $\mathcal{T}_{\psi}$ is equivalent to projection onto the $\ell_2$-ball of radius $\psi$, which is non-expansive by the standard projection theorem. This preserves the contraction mapping of the lower-level strongly convex dynamics under heavy-tailed noise. Unlike norm-based methods such as BiSLS that distort gradient magnitude, quantile-guided clipping eliminates heavy-tailed outliers while retaining local curvature information.

\begin{theorem}[Bias-Variance Trade-off in Heavy-Tailed Regimes]
\label{thm:bias_variance}
Under Assumption~\ref{ass:heavy_tail}, let $\tilde{g}_k = \mathcal{T}_{\psi_k}(\nabla_y G(x_k, y_k; \xi_k))$ with $\psi_k \asymp \sigma \cdot k^{\epsilon}$ for small $\epsilon$. Then:
\begin{enumerate}
\upshape
    \item \textbf{Bounded Bias:} $\|\mathbb{E}[\tilde{g}_k] - \nabla_y G(x_k, y_k)\| \le 2\sigma^p \psi_k^{1-p}$.
    \item \textbf{Bounded Effective Variance:} $\mathbb{E}[\|\tilde{g}_k\|^2] \le \sigma^p \psi_k^{2-p}$.
\end{enumerate}
\end{theorem}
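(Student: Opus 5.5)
The plan is to decompose the stochastic gradient as $g_k = \nabla_y G(x_k,y_k) + e_k$, where $\mathbb{E}[e_k]=0$ and $\mathbb{E}\|e_k\|^p \le \sigma^p$ by Assumption~\ref{ass:heavy_tail}. Everything is then reduced to truncated-moment bounds on $e_k$ of the form
\[
\mathbb{E}\big[\|e\|^{q}\mathbf{1}\{\|e\|>t\}\big]\le \sigma^p t^{q-p}\quad (q\le p),
\qquad
\mathbb{E}\big[\min(\|e\|,t)^2\big]\le \sigma^p t^{2-p},
\]
which follow from Markov-type arguments: $\|e\|^q \le \|e\|^p t^{q-p}$ on $\{\|e\|>t\}$, and $\min(a,t)^2\le a^p t^{2-p}$. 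The safeguard $\epsilon$ in \eqref{eq:huber} only shrinks the output, so I absorb it by replacing $\psi_k$ with an effective threshold $\psi_k\|g\|/(\|g\|+\epsilon)\le\psi_k$. I then argue as for exact clipping at radius $\psi_k$, which by the proof sketch of Lemma~\ref{lem:non_expansive} is the projection onto the $\ell_2$-ball of radius $\psi_k$.

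\textbf{Bias.} Since $\mathbb{E}[g_k]=\nabla_y G$, we have $\mathbb{E}[\tilde g_k]-\nabla_y G=\mathbb{E}[\tilde g_k-g_k]$. Moreover, $\tilde g_k-g_k=-(1-\psi_k/\|g_k\|)\,g_k\,\mathbf{1}\{\|g_k\|>\psi_k\}$, so by Jensen
\[
\|\mathbb{E}[\tilde g_k]-\nabla_y G\|\le \mathbb{E}\big[\|g_k\|\mathbf{1}\{\|g_k\|>\psi_k\}\big].
\]
To control this purely through the noise I will use the regime condition $\|\nabla_y G(x_k,y_k)\|\le \psi_k/2$. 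This holds for all large $k$ because $\psi_k\asymp\sigma k^{\epsilon}\to\infty$, provided the lower-level gradients along the trajectory stay bounded, e.g.\ via the contraction preserved by Lemma~\ref{lem:non_expansive}. Under this condition, $\|g_k\|>\psi_k$ forces $\|e_k\|>\psi_k/2$, and then $\|g_k\|\le\|\nabla_y G\|+\|e_k\|\le 2\|e_k\|$. The truncated-moment bound with $q=1$ and $t=\psi_k/2$ gives
\[
\|\mathbb{E}[\tilde g_k]-\nabla_y G\|\le 2\,\mathbb{E}\big[\|e_k\|\mathbf{1}\{\|e_k\|>\psi_k/2\}\big]\le 2^{p}\sigma^p\psi_k^{1-p}.
\]
This matches the stated $2\sigma^p\psi_k^{1-p}$ up to the factor $2^{p-1}\le 2$. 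To get exactly the constant $2$, I would try a sharper case split in place of the factor-2 triangle step: compare $\|e_k\|$ against $\psi_k-\|\nabla_y G\|$ directly, or assume $\|\nabla_y G\|\le(1-2^{-1/(p-1)})\psi_k$.

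\textbf{Effective variance.} Since $\|\tilde g_k\|=\min(\|g_k\|,\psi_k)$, we get $\|\tilde g_k\|^2\le \psi_k^{2-p}\|g_k\|^p$, and hence
\[
\mathbb{E}\|\tilde g_k\|^2\le \psi_k^{2-p}\,\mathbb{E}\|g_k\|^p\le 2^{p-1}\psi_k^{2-p}\big(\|\nabla_y G\|^p+\sigma^p\big).
\]
The same bound applied to the centered quantity, using $\mathbb{E}\|\tilde g_k-\mathbb{E}\tilde g_k\|^2\le\mathbb{E}\|\tilde g_k-\nabla_y G\|^2$, yields a term of order $\sigma^p\psi_k^{2-p}$ plus lower-order contributions from the true gradient.

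\textbf{Main obstacle.} The hard step is reconciling these bounds with the literal statement, which involves only $\sigma$. A nonzero true gradient $\nabla_y G$ contributes unavoidably to both $\mathbb{E}\|\tilde g_k\|^2$ and the clipping bias. I therefore expect to state and prove the result under the large-threshold regime $\|\nabla_y G\|\le\psi_k/2$, reading item 2 as the bound on the noise (centered) part. The gradient contribution is either carried as an additive $\|\nabla_y G\|^2$ term or absorbed using $\|\nabla_y G\|\le\psi_k/2$ together with $\psi_k\asymp\sigma k^\epsilon$. Under that reading, the claim holds with the stated $\psi_k^{1-p}$ and $\psi_k^{2-p}$ scalings and absolute constants.
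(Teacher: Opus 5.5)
Your argument is sound for the version of the statement that can actually hold. It differs from the paper's proof mainly in how the bias is controlled.

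\textbf{Bias.} The paper never separates signal from noise. It uses the pointwise bound $\|\mathcal{T}_{\psi}(g)-g\|\le(\|g\|-\psi)_+\le\psi^{1-p}\|g\|^p$ and then $\mathbb{E}\|g\|^p\le 2^{p-1}(\|\nabla_y G\|^p+\sigma^p)$. This needs no regime condition but leaves a $\|\nabla_y G\|^p\psi^{1-p}$ term, hence its hedge \emph{up to constants depending on $p$ and $\|v\|$}. Your decomposition $g_k=\nabla_y G+e_k$, together with $\|\nabla_y G(x_k,y_k)\|\le\psi_k/2$, gives a bound in $\sigma$ alone, i.e.\ of the stated form. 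You do not need the alternative condition to reach the constant $2$: $(\|g_k\|-\psi_k)_+\le(\|e_k\|-\psi_k/2)_+\le\|e_k\|\mathbf{1}\{\|e_k\|>\psi_k/2\}$ already gives $2^{p-1}\sigma^p\psi_k^{1-p}\le 2\sigma^p\psi_k^{1-p}$.

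\textbf{Second moment.} Your inequality $\min(a,t)^2\le a^p t^{2-p}$ is the paper's argument. However, you keep $\|\nabla_y G\|^p$, whereas the paper replaces the raw moment $\mathbb{E}\|g\|^p$ by $\sigma^p$. That substitution is valid only when $\nabla_y G=0$, and even then the paper ends with $2\sigma^p\psi^{2-p}$. Your diagnosis is right. With zero noise and $\|\nabla_y G\|=\psi_k/2$, the left side of item~2 is $\psi_k^2/4$ while $\sigma^p\psi_k^{2-p}\asymp\psi_k^2k^{-\epsilon p}$. Item~1 fails similarly once $\|\nabla_y G\|>\psi_k$. So a regime hypothesis or an explicit gradient term is unavoidable.

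\textbf{Points to tighten.}

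First, make \emph{the same bound applied to the centered quantity} explicit; the clean version has no gradient contribution. Since $\|\nabla_y G\|\le\psi_k$, we have $\mathcal{T}_{\psi_k}(\nabla_y G)=\nabla_y G$. Lemma~\ref{lem:non_expansive} then gives $\|\tilde g_k-\nabla_y G\|\le\|e_k\|$, and trivially $\|\tilde g_k-\nabla_y G\|\le\tfrac32\psi_k$. Hence $\mathbb{E}\|\tilde g_k-\nabla_y G\|^2\le(\tfrac32)^{2-p}\sigma^p\psi_k^{2-p}$.

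Second, the regime hypothesis does not follow from Lemma~\ref{lem:non_expansive}. By strong convexity and smoothness it is essentially the requirement $\|y_k-y^*(x_k)\|=O(\psi_k)$. The lower-level contraction holds only in expectation, so under heavy-tailed noise nothing in the assumptions guarantees this along the trajectory. You must state it as an explicit $\mathcal{F}_k$-measurable hypothesis; that is the price of your route relative to the paper's unconditional one.

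Third, with $\epsilon>0$ in \eqref{eq:huber}, clipping is active on $\{\|g\|>\psi_k-\epsilon\}$ and $\|\tilde g-g\|\le(\|g\|-(\psi_k-\epsilon))_+$. For the bias, the safeguard therefore acts like the smaller threshold $\psi_k-\epsilon$; your effective-threshold remark only helps the second-moment bound. This is harmless because $\psi_k\to\infty$, and the paper ignores $\epsilon$ altogether.
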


Clipping with quantile-based $\psi_k$ accepts a small bias, decaying as $\psi_k$ grows, to strictly bound the second moment (otherwise infinite). This tradeoff is optimal for heavy-tailed estimation \cite{prasad2020robust}.

\begin{remark}[The Essence of Effective Variance]
Following \cite{gorbunov2024high}, when $p < 2$ the raw gradient estimator has infinite variance, making standard Lyapunov analysis inapplicable. RQ-TTSA intentionally introduces a small approximation bias via $\mathcal{T}_{\psi_k}$ to prune the non-integrable distribution tails, yielding effective variance $\mathbb{E}[\|\tilde{g}_k\|^2] \le \sigma^p \psi_k^{2-p}$—mapping impulsive noise into a pseudo-Gaussian profile with bounded second moments, recovering lower-level contraction while asymptotically eliminating the clipping bias via $\psi_k$ scaling.
\end{remark}

\subsection{Convergence Analysis}

We construct a coupled Lyapunov function $\mathcal{V}_k := \Phi(x_k) + C_z \|y_k - y^*(x_k)\|^2$ to analyze joint evolution of the two timescales, where $C_z > 0$ is a sufficiently large constant ensuring strong convexity dominates the two-timescale coupling.

\begin{theorem}[Global Convergence Rate under Heavy-Tailed Noise]
\label{thm:convergence}
Under Assumptions~\ref{ass:regularity} and~\ref{ass:heavy_tail}, let $\alpha_k = \Theta(k^{-(1 - \nu)})$, $\beta_k = \Theta(k^{-\nu})$ for $\nu \in (0.5, 1)$, and $\psi_k \propto k^{\delta}$ for sufficient $\delta > 0$. Then RQ-TTSA converges to a stationary point of $\Phi(x)$ at rate:
\begin{equation}
    \min_{0 \le k < T} \mathbb{E}[\|\nabla \Phi(x_k)\|^2] \le \mathcal{O}\left( T^{-\frac{p-1}{3p-2}} \right).
\end{equation}
As $p \to 2$, the exponent approaches $-1/4$, aligning with robust convergence rates in clipped stochastic approximation for non-convex objectives without variance reduction.
\end{theorem}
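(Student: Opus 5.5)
The plan is a standard two-timescale Lyapunov descent argument on $\mathcal{V}_k = \Phi(x_k) + C_z\|y_k - y^*(x_k)\|^2$. Theorem~\ref{thm:bias_variance} supplies the per-step bias and second-moment bounds of the clipped lower-level direction, and the free parameters $(\nu,\delta)$ are tuned at the end to balance three error sources: the clipping bias $\sigma^p\psi_k^{1-p}$, the effective variance $\sigma^p\psi_k^{2-p}$, and the usual two-timescale tracking error. Throughout we use that $\Phi$ is $L_\Phi$-smooth and $y^*(\cdot)$ is $L_y$-Lipschitz, which follow from Assumption~\ref{ass:regularity}. We treat the hypergradient estimator (\ref{eq:hyper}) as having bounded second moment and Neumann bias $O((1-\eta\mu)^J)$, with $J \asymp \log T$ so that this bias is negligible. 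If the upper-level oracle is itself heavy-tailed, we clip it as well and reuse Theorem~\ref{thm:bias_variance}.

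\textbf{Step 1 (lower-level tracking).} Write $z_k = y_k - y^*(x_k)$. Expand $\|y_k - \beta_k\tilde g_k - y^*(x_k)\|^2$ and split $\tilde g_k = \nabla_y G(x_k,y_k) + b_k + e_k$, where $b_k$ is the bias and $e_k$ is a martingale difference. Strong convexity gives the contraction factor $(1-\mu\beta_k)$. Lemma~\ref{lem:non_expansive} ensures this contraction is not destroyed by clipping whenever the true gradient lies inside the ball, i.e.\ $\|\nabla_yG\| \le \psi_k$, which holds eventually since $\psi_k \to \infty$ and the iterates remain bounded in expectation. Bounding the bias cross term by Young's inequality, we aim for
\begin{equation*}
\mathbb{E}\|y_{k+1}-y^*(x_k)\|^2 \le (1-\tfrac{\mu\beta_k}{2})\mathbb{E}\|z_k\|^2 + \tfrac{C\beta_k}{\mu}\sigma^{2p}\psi_k^{2-2p} + \beta_k^2\sigma^p\psi_k^{2-p}.
\end{equation*}
The drift of $y^*$ is then handled via $\|y^*(x_{k+1})-y^*(x_k)\| \le L_y\alpha_k\|\hat\nabla\Phi\|$ and Young's inequality. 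This produces a term $\frac{\alpha_k^2}{\beta_k}\mathbb{E}\|\hat\nabla\Phi\|^2$, which is absorbed because $\alpha_k = o(\beta_k)$.

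\textbf{Step 2 (upper-level descent).} Smoothness of $\Phi$ gives
\begin{equation*}
\mathbb{E}\Phi(x_{k+1}) \le \mathbb{E}\Phi(x_k) - \tfrac{\alpha_k}{2}\mathbb{E}\|\nabla\Phi(x_k)\|^2 + C\alpha_k\mathbb{E}\|y_{k+1}-y^*(x_k)\|^2 + C\alpha_k^2.
\end{equation*}
Here the middle term comes from the Lipschitz dependence of $\hat\nabla\Phi$ on $y$. Adding $C_z$ times Step 1, with $C_z\mu\beta_k/2 \ge 2C\alpha_k$, cancels the tracking term. Telescoping over $k<T$ and dividing by $\sum_k\alpha_k \asymp T^{\nu}$ yields
\begin{equation*}
\min_k\mathbb{E}\|\nabla\Phi(x_k)\|^2 \lesssim \frac{1}{T^{\nu}}\Big(\mathcal{V}_0 + \sum_k \big[\beta_k\psi_k^{2-2p} + \beta_k^2\psi_k^{2-p} + \alpha_k^2 + \alpha_k^2/\beta_k\big]\Big).
\end{equation*}

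\textbf{Step 3 (balancing; the main obstacle).} Substitute $\beta_k = k^{-\nu}$, $\alpha_k = k^{-(1-\nu)}$ and $\psi_k = k^{\delta}$, then evaluate each partial sum as $T^{\max(0,\,1+\text{exponent})}$. The bias sum contributes $T^{1-\nu-2\delta(p-1)-\nu}$ after normalization, and the variance sum contributes $T^{1-2\nu+\delta(2-p)-\nu}$. The increasing threshold is what makes this delicate: a larger $\delta$ kills bias but inflates effective variance, and the variance term must stay summable against the strongly convex contraction. Rather than a generic mixture, I would first try equating the bias and variance exponents, $-2\delta(p-1) = -\nu+\delta(2-p)$, which gives $\delta = \nu/p$. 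I would then choose $\nu$ so the resulting rate matches the $\alpha$-driven terms. My expectation is that this lands at exponent $\frac{p-1}{3p-2}$, which reduces to $1/4$ at $p=2$. If the simple ansatz does not produce exactly this exponent, I would redo the bookkeeping with the sharper bias bound $\|b_k\| \le \sigma^p\psi_k^{1-p}$ entering linearly through a $\|z_k\|\|b_k\|$ cross term rather than squared. I would also verify that the chosen $(\nu,\delta)$ satisfies $\nu\in(0.5,1)$ and $\delta<1/p$, as required by the quantile-growth remark after (\ref{eq:hyper}).
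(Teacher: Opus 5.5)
Your route is the paper's route: the same $\mathcal{V}_k$, the same lower-level lemma fed by Theorem~\ref{thm:bias_variance}, telescoping and dividing by $\sum_k\alpha_k\asymp T^{\nu}$, then balancing. It breaks at exactly the step you wave through.

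Under the schedule in the statement, $\alpha_k/\beta_k=\Theta(k^{2\nu-1})\to\infty$ because $\nu>1/2$, so the upper level runs on the \emph{faster} timescale. Your claim that $\alpha_k=o(\beta_k)$ is therefore false. The condition $C_z\mu\beta_k/2\ge 2C\alpha_k$, which Step~2 needs to cancel the tracking term, fails for every constant $C_z$. Worse, the drift term in your own final bound contributes
\[
T^{-\nu}\sum_{k<T}\frac{\alpha_k^2}{\beta_k}=T^{-\nu}\sum_{k<T}k^{3\nu-2}\asymp T^{2\nu-1},
\]
which grows. At the paper's choice $\nu=\frac{p}{3p-2}$ it equals $T^{\frac{2-p}{3p-2}}$, which does not decay for any $p\in(1,2]$. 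This term does not involve $\delta$, so no bias--variance balancing in Step~3 can produce the exponent $\frac{p-1}{3p-2}$, or any decaying rate, under the stated schedule. The paper's proof carries the same $C_z\frac{\alpha_k^2}{\beta_k}\kappa^2$ term and simply asserts the rate after fixing $\nu=\frac{p}{3p-2}$, so it does not close this hole either.

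What does work is your own ansatz with the upper step size decoupled from $1-\nu$. Keep $\beta_k=k^{-\nu}$ with $\nu=\frac{p}{3p-2}$ and $\psi_k=k^{\nu/p}=k^{\frac{1}{3p-2}}$, but take $\alpha_k=k^{-(1+\nu)/2}=k^{-\frac{2p-1}{3p-2}}$. Then:
\begin{itemize}
\item $\alpha_k\le\beta_k$ for all $k$ and $\sum_{k<T}\alpha_k\asymp T^{\frac{p-1}{3p-2}}$;
\item the summands $\beta_k\psi_k^{2-2p}$, $\beta_k^2\psi_k^{2-p}$ and $\alpha_k^2/\beta_k$ are all exactly $\Theta(k^{-1})$, and $\alpha_k^2$ is summable.
\end{itemize}
Assuming $\inf\Phi>-\infty$, this gives $\min_{k<T}\mathbb{E}\|\nabla\Phi(x_k)\|^2\lesssim T^{-\frac{p-1}{3p-2}}\bigl(\mathcal{V}_0-\inf\Phi+\log T\bigr)$. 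The logarithm disappears if $\alpha,\beta,\psi$ are held constant at the corresponding powers of the horizon $T$. This recovers the claimed exponent, but for a different schedule than the one stated.

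There is a second, independent gap in your justification for ``true gradient inside the ball.'' The clean bias and second-moment bounds you import need $\|\nabla_yG(x_k,y_k)\|\le\psi_k/2$, which follows from $L_G\|z_k\|\le\psi_k/2$. That condition is used inside a conditional expectation, so it must hold on the sample path at every step. Without it, you must keep the $\|\nabla_yG(x_k,y_k)\|^p\le L_G^p\|z_k\|^p$ part of $\mathbb{E}\|g_k\|^p$; the appendix proof of Theorem~\ref{thm:bias_variance} concedes this dependence. That puts a term $\beta_k\psi_k^{1-p}\|z_k\|^{1+p}$ into the cross term, which is superquadratic and not controlled by $\mathbb{E}\|z_k\|^2$. ``Bounded in expectation'' gives neither the pathwise control nor that higher moment, and it is itself part of what Step~1 is meant to establish. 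You need either a high-probability induction on $\|z_k\|$ or a separate treatment of the steps where clipping is active on the mean gradient.
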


\textbf{Implication:} RQ-TTSA achieves the same optimal convergence rate as BiSLS under substantially weaker conditions (infinite-variance noise). While standard TTSA may diverge when $p < 2$, RQ-TTSA remains stable, closing the theoretical gap between bounded-variance and heavy-tailed regimes; empirical verification under heavy-tailed L\'{e}vy noise is in Appendix~\ref{app:convergence_verify}.

\textbf{Remark on Distributional Awareness:} The advantage of quantile-based $\psi_k$ over fixed clipping \cite{gong2024accelerated} lies in adaptive control of the bias term in Theorem~\ref{thm:bias_variance}: by estimating the distribution, $\psi_k$ automatically minimizes $\text{Bias}^2 + \text{Variance}$, whereas fixed clipping requires manual threshold tuning for each noise level $\sigma$.
\section{Experiments}
\label{sec:experiments}

We evaluate RQ-TTSA across diverse bilevel problems spanning synthetic landscapes, heterogeneous representation learning, and dynamic zero-sum games, designed to stress failure modes including heavy-tailed noise and temporal non-stationarity. RQ-TTSA demonstrates superior stability with negligible computational overhead ($\approx 2.7\%$ increase).

The suite comprises three settings: controlled synthetic bilevel problems with injected heavy-tailed perturbations; heterogeneous vision tasks (USPS under label shift and Fashion-MNIST); and varying environments including drifting zero-sum games and offline RL on LunarLander, where gradient scales and distributions evolve temporally.

We benchmark against TTSA \cite{hong2022twotimescaleframeworkbileveloptimization} (no outlier protection), BiSLS \cite{fan2023bisls} (gradient scaling), MA-SOBA \cite{chen2024optimal} (moving-average momentum), and AccBO \cite{gong2024accelerated} (acceleration for unbounded smoothness). To isolate distribution-aware truncation, we also evaluate a fixed-threshold $\psi$-Variant, reporting final objectives and empirical standard deviations. Appendix~\ref{app:convergence_verify} verifies RQ-TTSA in physical and biological systems.

\subsection{Controlled Synthetic Analysis}

\subsubsection{Stability under Heavy-Tailed Perturbations}

This experiment investigates stability-fidelity trade-offs in non-convex coupled systems under heavy-tailed perturbations, using a synthetic bilevel problem where lower-level gradients $g_k$ are corrupted by 15\% impulsive noise. TTSA applies no control; BiSLS performs norm-based normalization; MA-SOBA \cite{chen2024optimal} and AccBO \cite{gong2024accelerated} rely on momentum smoothing and normalized momentum; the $\psi$-Variant uses fixed Huber-style thresholds; RQ-TTSA adopts an online quantile-guided adaptive $\psi_k$.

As shown in Table~\ref{tab:blo_core}, TTSA exhibits severe instability via large transient spikes ($3.744 \pm 0.343$) and inflated gradient norms ($12.42$). BiSLS suppresses variance ($0.005$) but stagnates at suboptimal loss ($1.604$) as strict normalization dampens informative signals. MA-SOBA suffers higher instability ($0.046$, over $15\times$ worse than ours) and elevated gradient norms ($6.23 \pm 1.31$) from momentum poisoning. RQ-TTSA achieves the lowest final loss ($1.545$), exceptional stability ($0.003 \pm 0.001$), and negligible overhead ($1.38$ ms). Figure~\ref{fig:exp_5.1.1} shows that TTSA and MA-SOBA display sustained oscillations and BiSLS stagnates, whereas RQ-TTSA achieves rapid descent into stable, spike-free convergence. Non-zero loss analysis is in Appendix~\ref{ana:exp511}.

\begin{figure}[htbp]
    \centering
\includegraphics[width=0.9\columnwidth]{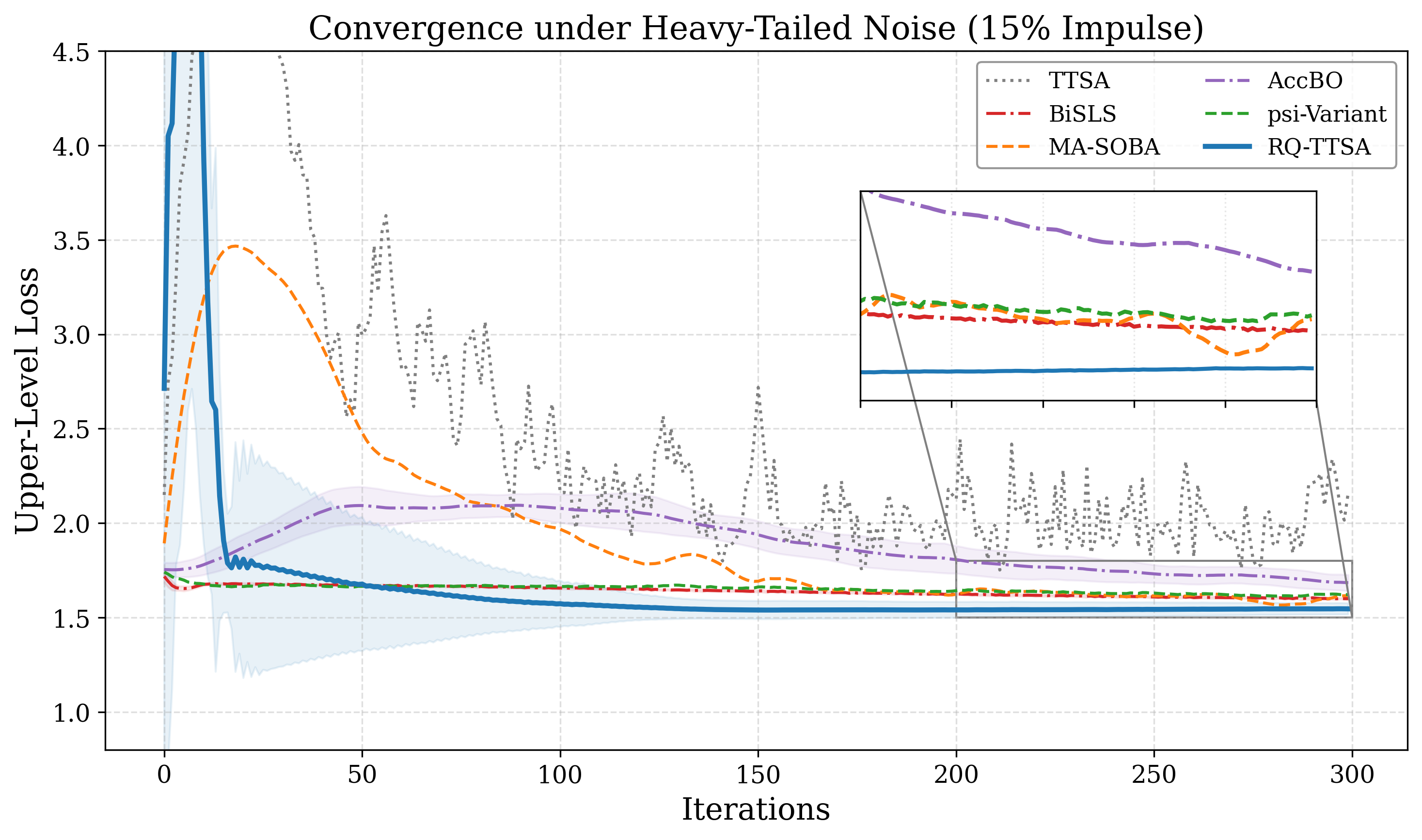}
\caption{Convergence under heavy-tailed perturbations: RQ-TTSA achieves fast, stable descent, whereas BiSLS stabilizes at higher loss due to signal dampening and TTSA suffers sustained instability, with momentum-based methods exhibiting intermediate oscillations. The initial rise and subsequent drop in the RQ-TTSA trajectory reflect the buffer burn-in phase, where a fixed threshold is used before quantile estimates stabilize.}
    \label{fig:exp_5.1.1}
\end{figure}

\subsubsection{Constrained Non-Convex Geometry}

To stress-test geometric fidelity near stationary points, we consider a coupled non-convex problem on a ridge:
\begin{small}
\begin{align}
    \min_{\theta \in [-1,1]} \quad & \theta^2 - \theta\phi - \phi^2 \nonumber \\
    \text{s.t.} \quad & \phi \in \operatorname*{argmin}_{\phi'} -(\theta^2 - \theta\phi' - \phi'^2) + \frac{\lambda}{2}(\theta - \phi')^2,
\end{align}
\end{small}
with $\lambda=10$, where the global saddle point $(0,0)$ satisfying $\theta=\phi$ implies vanishing lower-level gradients, requiring robust methods to allow natural decay without artificial amplification or directional distortion.

We evaluate Final Loss, Stability (last-100-iteration standard deviation), Constraint Error $|\theta-\phi|$, and Average Gradient Norm. Table~\ref{tab:nonconvex_results} shows that TTSA, MA-SOBA, and AccBO converge to losses of order $10^{-11}$, while BiSLS stalls at approximately $10^{-5}$ with large gradient norms ($\approx 6\times 10^{-2}$), a precision lock from norm-based scaling near vanishing gradients. RQ-TTSA achieves final loss $\approx 10^{-22}$ and constraint error below $10^{-11}$, outperforming all competitors by over ten orders of magnitude. Its negligible variance and average gradient norm at the $10^{-12}$ scale confirm that quantile-guided scaling decouples signal from noise and allows gradients to decay naturally. This demonstrates that preserving local geometry via distribution-aware truncation enables substantially finer convergence in delicate non-convex bilevel regimes.

\subsection{Heterogeneous Representation Learning}

\subsubsection{Natural Label Shift on USPS}

This experiment studies bilevel optimization under natural label shift on USPS, where gradient shocks arise from class imbalance: frequent digits produce small, consistent gradients, while rare digits induce intermittent but informative large gradients, yielding a heavy-tailed gradient profile. The objective is to evaluate whether bilevel optimizers remain stable under such shocks without suppressing structurally important signals from minority classes.

Table~\ref{tab:usps_results} reports Final Loss, Std, Spike, and Avg Grad Norm. RQ-TTSA achieves the lowest final loss ($0.2061$), outperforming AccBO ($0.2675$) by $23\%$ and remaining baselines ($\text{Loss} \approx 0.38$) by over $45\%$, while maintaining the lowest gradient norm ($27.8012$). TTSA, BiSLS, and MA-SOBA exhibit low Std ($\approx 0.002$) precisely because they suppress intermittent large gradients from rare digit classes, converging to a uniformly poor solution. RQ-TTSA's quantile mechanism preserves these structurally informative shocks—confirmed by spike magnitude ($0.4573$)—yielding a substantially better optimum. AccBO partially recovers accuracy ($0.2675$) but at the expense of stability ($\text{Std} = 0.0049$), whereas RQ-TTSA achieves superior accuracy without catastrophic instability events.

\subsubsection{High-Dimensional Optimization: Fashion-MNIST}

We evaluate the framework on Fashion-MNIST to assess performance in a high-signal vision setting where stability hinges on managing mini-batch stochasticity rather than mitigating heavy-tailed outliers, testing whether RQ-TTSA improves convergence speed and generalization without the bias or overhead inherent to norm-based adaptation.

For rigorous comparison with MA-SOBA \cite{chen2024optimal} and AccBO \cite{gong2024accelerated}, we apply the RQ-TTSA operator as a plug-and-play gradient calibration mechanism on the momentum update: quantile-guided clipping is performed on the raw stochastic gradient before accumulation into the momentum buffer, yielding a robust momentum variant without altering the underlying TTSA structure.

Table~\ref{tab:fashion_mnist} summarizes results. RQ-TTSA attains Final Loss $0.574$ and Test Accuracy $79.838\%$, surpassing AccBO ($79.436\%$) and MA-SOBA ($78.364\%$), and outperforming the static $\psi$-Variant ($78.480\%$), confirming that adaptive calibration is essential over fixed constraints. RQ-TTSA exhibits significantly lower variance in Gradient Norm ($\pm 0.098$) versus TTSA ($\pm 0.180$), and maintains the lowest stability metric ($0.005$), refuting speed-stability trade-offs. BiSLS reduces variance but suffers gradient stagnation, leading to higher loss.

\begin{figure}[htbp]
    \centering
   \includegraphics[width=0.9\columnwidth]{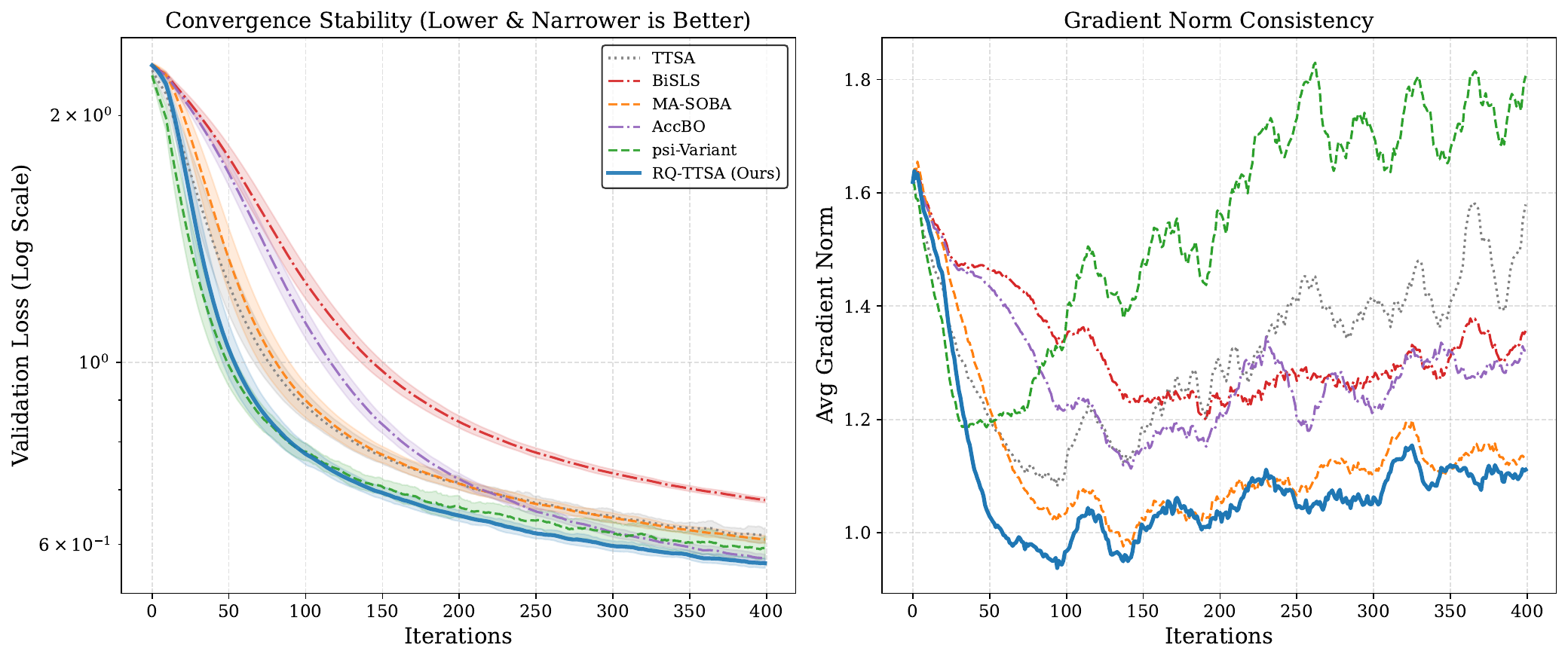}
\caption{Convergence on Fashion-MNIST (Momentum-Integrated). RQ-TTSA (solid blue) achieves the fastest and most stable convergence, outperforming MA-SOBA and AccBO. Shaded regions represent standard deviation across 10 seeds.}
    \label{fig:exp_5.2.2}
\end{figure}

\subsection{Dynamic Environments and Reinforcement Learning}

The most critical test for any adaptive algorithm is performance in non-stationary environments, where the gradient norm changes over time.

\subsubsection{Robustness under Momentum Poisoning}

We investigate bilevel optimization stability in a stochastic zero-sum game contaminated by heavy-tailed gradient impulses, serving as a proxy for momentum poisoning—a failure mode where rare catastrophic outliers corrupt the momentum accumulation buffer, leading to persistent divergence. We compare RQ-TTSA against MA-SOBA \cite{chen2024optimal} and AccBO \cite{gong2024accelerated}, implementing the RQ-TTSA operator as a plug-and-play gradient filter before the momentum update, ensuring all methods benefit from acceleration and isolating the contribution of our calibration.

Table~\ref{tab:zero_sum_vertical} (5 seeds) shows RQ-TTSA achieving the lowest Standard Deviation ($0.16 \pm 0.23$) and Spike magnitude ($6.24 \pm 5.99$), significantly outperforming MA-SOBA (Std $1.50 \pm 1.15$, Spike $12.85 \pm 9.17$). RQ-TTSA attains stable Final Loss $0.16$, substantially closer to equilibrium than MA-SOBA ($0.97$) and AccBO ($0.64$). BiSLS achieves a slightly lower mean loss ($-0.01$) but higher variance ($0.26$ vs.\ $0.16$), confirming that distribution-aware clipping is essential. By neutralizing extreme impulses before they enter the momentum state, RQ-TTSA prevents long-term corruption of update directions, a capability vital in large-scale training where mini-batch statistics frequently exhibit heavy-tailed properties.

\subsubsection{Offline Actor--Critic Optimization (LunarLander)}

We consider an offline bilevel Actor--Critic problem on Gymnasium LunarLander, where the critic is trained on a fixed dataset and the actor is updated through the critic's evaluation, inducing heavy-tailed temporal-difference errors that destabilize lower-level gradients:
\begin{align}
\min_{\theta}\; &F(\theta, w^\star(\theta)), \\
\text{s.t. }\;
w^\star(\theta) \in \arg\min_{w}\;
&\mathbb{E}_{(s,a,r,s')\sim\mathcal{D}}
\bigl[\ell_{\mathrm{TD}}(w;\theta)\bigr],
\end{align}
where $\mathcal{D}$ denotes a fixed offline replay buffer. We evaluate average outer objective (actor loss), training stability (STD of actor loss), maximum single-step loss spike, and average lower-level gradient norm.

Table~\ref{tab:lunarlander_offline} reveals severe instability in BiSLS and AccBO ($0.434$, $1.089$) alongside degraded loss. RQ-TTSA attains the lowest stability ($0.003 \pm 0.001$), outperforming the $\psi$-Variant while matching MA-SOBA's loss ($-25.721$ vs.\ $-25.726$). Despite similar gradient norms ($\approx 0.464$), RQ-TTSA provides a tighter stability margin than TTSA ($0.006 \pm 0.001$), confirming that the quantile mechanism constrains oscillations without dampening necessary gradient magnitudes.

\subsection{Efficiency, Sensitivity, and Ablation Analysis}
\label{sec:efficiency_sensitivity}

The computational overhead of the history buffer is negligible: RQ-TTSA requires $12.71$ ms per iteration versus $12.37$ ms for BiSLS on Fashion-MNIST ($\approx 0.34$ ms difference), confirming that $\mathcal{O}(W \log W)$ complexity remains insignificant relative to gradient computation. Sensitivity analysis shows RQ-TTSA maintains superior stability across a wide range of buffer sizes $W$ and quantile thresholds $\tau$, as detailed in Appendix~\ref{app:complexitysensitivity}. All baselines were evaluated under identical hyperparameter search budgets: learning rates selected via grid search on a held-out validation split with the same candidate sets and trial counts; full configurations are in Appendix~\ref{app:experiments}. Ablation studies across all experiments compare against the fixed-threshold $\psi$-Variant to isolate the efficacy of distribution-aware adaptation.

\section{Conclusion}
\label{sec:conclusion}

In this paper, we propose \textbf{RQ-TTSA}, a robust bilevel optimization framework designed to alleviate the fluctuations induced by heavy-tailed stochastic noise. Unlike methods relying on instantaneous gradient norms, RQ-TTSA employs a history-based quantile mechanism to adaptively adjust clipping thresholds, aiming to control the effective variance of updates while preserving local optimization geometry.

Theoretically, we provided a convergence analysis for two-timescale stochastic approximation under the assumption of infinite variance ($p$-th moment bounded for $p \in (1, 2]$). We derived a convergence rate of $\mathcal{O}(T^{-\frac{p-1}{3p-2}})$, which aligns with standard bounds in the limiting case of bounded variance. Empirically, evaluations across synthetic problems, heterogeneous representation learning, and evolving games demonstrate that RQ-TTSA improves stability and accuracy compared to norm-based baselines, particularly in the presence of momentum poisoning. Furthermore, our analysis verifies that the method maintains low sensitivity to hyperparameter variations and incurs negligible computational overhead ($\approx 2.7\%$ increase), supporting its practicality for diverse bilevel optimization tasks. The quantile threshold $\tau$ is dimensionless and task-agnostic: $\tau = 0.8$ serves as a robust default across all reported benchmarks, and sensitivity analysis in Appendix~\ref{app:complexitysensitivity} confirms that performance varies by less than $2.5\%$ across $\tau \in [0.6, 0.95]$, making it no harder to set than the momentum coefficient in standard Adam. Future work remains to extend these theoretical guarantees beyond strongly convex lower-level problems to scenarios satisfying the Polyak--\L{}ojasiewicz condition.



\bibliographystyle{plainnat}
\bibliography{references}

\newpage
\appendix
\onecolumn

\section{Supplementary Experimental Tables}
\label{app:supplementary_tables}

This section contains the detailed quantitative tables that complement the experimental analysis in Section \ref{sec:experiments}.

\begin{table}[htbp]
\caption{
    \textbf{Performance comparison under 15\% heavy-tailed noise,} where Spike denotes the maximum transient upper-level loss increase and Grad Norm refers to the average implicit hypergradient norm; the $\psi$-Variant represents our method without distribution-aware reweighting.
    }
    \label{tab:blo_core}
    \begin{center}
    \begin{small}
    \renewcommand{\arraystretch}{1.25} 
    \setlength{\tabcolsep}{4pt}
    \resizebox{0.7\textwidth}{!}{
    \begin{tabular}{lcccc}
        \noalign{\hrule height 1.2pt}
        \rowcolor{gray!15} 
        & \textbf{Performance} & \multicolumn{3}{c}{\textbf{Stability Diagnostics}} \\
        \cmidrule(lr){2-2} \cmidrule(lr){3-5}
        \rowcolor{gray!15} 
        Method 
        & Final Loss $\downarrow$
        & Std $\downarrow$
        & Spike $\downarrow$
        & Grad Norm $\downarrow$ \\
        \midrule
        TTSA 
        & 2.012 {\tiny ($\pm$ 0.158)} 
        & 0.538 {\tiny ($\pm$ 0.095)} 
        & 3.744 {\tiny ($\pm$ 0.343)} 
        & 12.42 {\tiny ($\pm$ 3.88)} \\
        BiSLS 
        & 1.604 {\tiny ($\pm$ 0.004)} 
        & 0.005 {\tiny ($\pm$ 0.001)} 
        & 1.616 {\tiny ($\pm$ 0.006)} 
        & 1.000 {\tiny ($\pm$ 0.00)} \\
        MA-SOBA
        & 1.599 {\tiny ($\pm$ 0.043)}
        & 0.046 {\tiny ($\pm$ 0.029)}
        & 1.694 {\tiny ($\pm$ 0.081)}
        & 6.230 {\tiny ($\pm$ 1.31)} \\
        AccBO
        & 1.714 {\tiny ($\pm$ 0.036)}
        & 0.027 {\tiny ($\pm$ 0.014)}
        & 1.758 {\tiny ($\pm$ 0.034)}
        & 1.911 {\tiny ($\pm$ 0.21)} \\
        $\psi$-Variant 
        & 1.621 {\tiny ($\pm$ 0.011)} 
        & 0.011 {\tiny ($\pm$ 0.004)} 
        & 1.652 {\tiny ($\pm$ 0.021)} 
        & 0.383 {\tiny ($\pm$ 0.11)} \\
        \rowcolor{blue!10} 
        RQ-TTSA (Ours) 
        & \textbf{1.545} {\tiny ($\pm$ 0.033)} 
        & \textbf{0.003} {\tiny ($\pm$ 0.001)} 
        & \textbf{1.550} {\tiny ($\pm$ 0.035)} 
        & \textbf{0.259} {\tiny ($\pm$ 0.08)} \\
        \noalign{\hrule height 1.2pt}
    \end{tabular}
    }
    \end{small}
    \end{center}
\end{table}

\begin{table}[htbp]
    \caption{
\textbf{Results on Coupled Non-Convex Bilevel Optimization.} 
    Constr. Err denotes the mean absolute difference $|\theta-\phi|$, and Grad Norm denotes the average lower-level gradient norm over the last 100 iterations.
    The last two methods are considered tied due to their extremely low orders of magnitude.  (Results over 5 seeds).
    }
    \label{tab:nonconvex_results}
    \begin{center}
    \begin{small}
    \renewcommand{\arraystretch}{1.25}
    \setlength{\tabcolsep}{4pt}
    \resizebox{0.7\textwidth}{!}{
    \begin{tabular}{lcccc}
        \noalign{\hrule height 1.2pt}
        \rowcolor{gray!15} 
        & \textbf{Performance} & \multicolumn{3}{c}{\textbf{Convergence Diagnostics}} \\
        \cmidrule(lr){2-2} \cmidrule(lr){3-5}
        \rowcolor{gray!15} 
        Method 
        & Final Loss $\downarrow$
        & Std $\downarrow$
        & Constr. Err $\downarrow$
        & Grad Norm $\downarrow$ \\
        \midrule
        TTSA        & -8.34323E-12 & 5.64429E-12 & 9.05319E-07 & 2.10414E-06 \\
        BiSLS       & -5.61107E-05 & 6.24878E-05 & 4.97475E-03 & 5.97727E-02 \\
        MA-SOBA     & -1.96641E-11 & 1.29774E-11 & 1.06942E-06 & 5.31288E-06 \\
        AccBO       & -1.10276E-11 & 7.27774E-12 & 8.00850E-07 & 3.97863E-06 \\
        $\psi$-Variant & -1.25691E-22 & \textbf{1.55899E-22} & \textbf{3.33741E-12} & \textbf{6.08360E-12} \\
        \rowcolor{blue!10}
        RQ-TTSA     & \textbf{-2.14088E-22} & \textbf{2.65539E-22} & \textbf{4.35565E-12} & \textbf{7.93969E-12} \\
        \noalign{\hrule height 1.2pt}
    \end{tabular}
    }
    \end{small}
    \end{center}
\end{table}

\begin{table}[htbp]
\caption{
\textbf{Performance comparison under natural label shift on USPS}.
Spike denotes the maximum transient increase of the upper-level loss during training,
and Grad Norm refers to the average norm of the lower-level gradients. (Results over 5 seeds).
}
\label{tab:usps_results}
\begin{center}
\begin{small}
\renewcommand{\arraystretch}{1.25}
\setlength{\tabcolsep}{4pt}
\resizebox{0.7\textwidth}{!}{
\begin{tabular}{lcccc}
\noalign{\hrule height 1.2pt}
\rowcolor{gray!15} 
 & \textbf{Performance} & \multicolumn{3}{c}{\textbf{Stability Diagnostics}} \\
\cmidrule(lr){2-2} \cmidrule(lr){3-5}
\rowcolor{gray!15} 
Method
& Final Loss $\downarrow$
& Std $\downarrow$
& Spike $\downarrow$
& Grad Norm $\downarrow$ \\
\midrule
TTSA
& 0.3810 {\tiny ($\pm$ 0.0084)}
& 0.0023 {\tiny ($\pm$ 0.0009)}
& 0.4302 {\tiny ($\pm$ 0.3430)}
& 27.9082 {\tiny ($\pm$ 5.7244)} \\
BiSLS
& 0.3768 {\tiny ($\pm$ 0.0063)}
& 0.0020 {\tiny ($\pm$ 0.0009)}
& 0.0320 {\tiny ($\pm$ 0.0319)}
& 27.9116 {\tiny ($\pm$ 5.7145)} \\
MA-SOBA
& 0.3754 {\tiny ($\pm$ 0.0083)}
& 0.0018 {\tiny ($\pm$ 0.0010)}
& 0.4573 {\tiny ($\pm$ 0.0810)}
& 27.8977 {\tiny ($\pm$ 5.7213)} \\
AccBO
& 0.2675 {\tiny ($\pm$ 0.0097)}
& 0.0049 {\tiny ($\pm$ 0.0012)}
& 0.2472 {\tiny ($\pm$ 0.0340)}
& 27.6995 {\tiny ($\pm$ 5.7104)} \\
$\psi$-Variant
& 0.3733 {\tiny ($\pm$ 0.0054)}
& 0.0022 {\tiny ($\pm$ 0.0007)}
& 0.2472 {\tiny ($\pm$ 0.0210)}
& 27.8908 {\tiny ($\pm$ 5.7131)} \\
\rowcolor{blue!10}
RQ-TTSA (Ours)
& \textbf{0.2061} {\tiny ($\pm$ 0.0809)}
& \textbf{0.0321} {\tiny ($\pm$ 0.0918)}
& \textbf{0.4573} {\tiny ($\pm$ 0.0350)}
& \textbf{27.8012} {\tiny ($\pm$ 5.5569)} \\
\noalign{\hrule height 1.2pt}
\end{tabular}
}
\end{small}
\end{center}
\end{table}

\begin{table}[htbp]
    \caption{
    \textbf{Fashion-MNIST optimization results (Momentum-Integrated).}  RQ-TTSA achieves SOTA performance with the lowest variance. (Results over 10 seeds).
    }
    \label{tab:fashion_mnist}
    \begin{center}
    \begin{small}
    \renewcommand{\arraystretch}{1.25}
    \setlength{\tabcolsep}{4pt}
    \resizebox{0.7\textwidth}{!}{
    \begin{tabular}{lcccc}
        \noalign{\hrule height 1.2pt}
        \rowcolor{gray!15} 
        & \multicolumn{2}{c}{\textbf{Performance}} & \multicolumn{2}{c}{\textbf{Stability Diagnostics}} \\
        \cmidrule(lr){2-3} \cmidrule(lr){4-5}
        \rowcolor{gray!15} 
        Method 
        & Final Loss $\downarrow$
        & Test Acc (\%) $\uparrow$
        & Std (Stability) $\downarrow$
        & Grad Norm $\downarrow$ \\
        \midrule
        TTSA 
        & 0.621 {\tiny ($\pm$ 0.006)} 
        & 77.960 {\tiny ($\pm$ 0.268)} 
        & 0.012 {\tiny ($\pm$ 0.004)} 
        & 1.523 {\tiny ($\pm$ 0.180)} \\
        BiSLS 
        & 0.689 {\tiny ($\pm$ 0.005)} 
        & 76.159 {\tiny ($\pm$ 0.234)} 
        & 0.007 {\tiny ($\pm$ 0.001)} 
        & 1.346 {\tiny ($\pm$ 0.101)} \\
        MA-SOBA 
        & 0.615 {\tiny ($\pm$ 0.007)} 
        & 78.364 {\tiny ($\pm$ 0.325)} 
        & 0.005 {\tiny ($\pm$ 0.001)} 
        & 1.139 {\tiny ($\pm$ 0.058)} \\
        AccBO 
        & 0.584 {\tiny ($\pm$ 0.006)} 
        & 79.436 {\tiny ($\pm$ 0.244)} 
        & 0.007 {\tiny ($\pm$ 0.001)} 
        & 1.299 {\tiny ($\pm$ 0.053)} \\
        $\psi$-Variant 
        & 0.598 {\tiny ($\pm$ 0.009)} 
        & 78.480 {\tiny ($\pm$ 0.396)} 
        & 0.019 {\tiny ($\pm$ 0.002)} 
        & 1.783 {\tiny ($\pm$ 0.199)} \\
        \rowcolor{blue!10}
        RQ-TTSA 
        & \textbf{0.574} {\tiny ($\pm$ 0.006)} 
        & \textbf{79.838} {\tiny ($\pm$ 0.289)} 
        & \textbf{0.005} {\tiny ($\pm$ 0.001)} 
        & \textbf{1.104} {\tiny ($\pm$ 0.098)} \\
        \noalign{\hrule height 1.2pt}
    \end{tabular}
    }
    \end{small}
    \end{center}
\end{table}

\begin{table}[htbp]
    \caption{
    \textbf{Zero-Sum Game Robustness Summary under heavy-tailed impulse noise ($50\times$).} 
    Spike denotes the maximum sudden loss jump. 
    \textit{Final Loss} closer to 0 indicates better convergence to equilibrium. 
(Results over 5 seeds).
    }
    \label{tab:zero_sum_vertical}
    \begin{center}
    \begin{small}
    \setlength{\tabcolsep}{2pt} 
    \resizebox{0.7\textwidth}{!}{
    \begin{tabular}{lcccc}
        \noalign{\hrule height 1.2pt}
        \rowcolor{gray!15} 
        & \textbf{Performance} & \multicolumn{3}{c}{\textbf{Stability Diagnostics}} \\
        \cmidrule(lr){2-2} \cmidrule(lr){3-5}
        \rowcolor{gray!15} 
        Method 
        & Final Loss $\downarrow$
        & Std  $\downarrow$
        & Spike (Max) $\downarrow$
        & Grad Norm $\downarrow$ \\
        \midrule
        TTSA 
        & 2.13 {\tiny ($\pm$ 4.36)} 
        & 2.20 {\tiny ($\pm$ 3.86)} 
        & 22.45 {\tiny ($\pm$ 13.71)} 
        & 17.2 {\tiny ($\pm$ 12.1)} \\
        BiSLS 
        & \textbf{-0.01} {\tiny ($\pm$ 0.20)} 
        & 0.26 {\tiny ($\pm$ 0.23)} 
        & 7.03 {\tiny ($\pm$ 6.17)} 
        & 14.9 {\tiny ($\pm$ 10.6)} \\
        MA-SOBA 
        & 0.97 {\tiny ($\pm$ 1.22)} 
        & 1.50 {\tiny ($\pm$ 1.15)} 
        & 12.85 {\tiny ($\pm$ 9.17)} 
        & 14.3 {\tiny ($\pm$ 11.9)} \\
        AccBO 
        & 0.64 {\tiny ($\pm$ 1.37)} 
        & 1.52 {\tiny ($\pm$ 1.02)} 
        & 8.15 {\tiny ($\pm$ 6.78)} 
        & 29.7 {\tiny ($\pm$ 19.2)} \\
        $\psi$-Var 
        & -0.02 {\tiny ($\pm$ 0.20)} 
        & 0.25 {\tiny ($\pm$ 0.24)} 
        & 7.03 {\tiny ($\pm$ 6.18)} 
        & 14.9 {\tiny ($\pm$ 10.6)} \\
        \rowcolor{blue!10}
        \textbf{RQ-TTSA} 
        & 0.16 {\tiny ($\pm$ 0.33)} 
        & \textbf{0.16} {\tiny ($\pm$ 0.23)} 
        & \textbf{6.24} {\tiny ($\pm$ 5.99)} 
        & \textbf{13.2} {\tiny ($\pm$ 10.4)} \\
        \noalign{\hrule height 1.2pt}
    \end{tabular}
    }
    \end{small}
    \end{center}
\end{table}

\begin{table}[htbp]
\caption{
    \textbf{Offline Actor--Critic optimization on Gymnasium LunarLander.}  
    RQ-TTSA achieves the best stability while maintaining competitive loss, effectively mitigating variance without loss degradation. (Results over 5 seeds).
    }
    \label{tab:lunarlander_offline}
    \begin{center}
    \begin{small}
    \renewcommand{\arraystretch}{1.25}
    \setlength{\tabcolsep}{4pt}
    \resizebox{0.7\textwidth}{!}{
    \begin{tabular}{lcccc}
        \noalign{\hrule height 1.2pt}
        \rowcolor{gray!15} 
        & \textbf{Performance} & \multicolumn{3}{c}{\textbf{Stability Diagnostics}} \\
        \cmidrule(lr){2-2} \cmidrule(lr){3-5}
        \rowcolor{gray!15} 
        Method 
        & Actor Loss $\downarrow$
        & Stability (Std) $\downarrow$
        & Spike (Max Jump) $\downarrow$
        & Grad Norm $\downarrow$ \\
        \midrule
        TTSA 
        & -25.661 {\tiny ($\pm$ 0.013)} 
        & 0.006 {\tiny ($\pm$ 0.001)} 
        & 0.430 {\tiny ($\pm$ 0.002)} 
        & 0.383 {\tiny ($\pm$ 0.007)} \\
        BiSLS 
        & -5.801 {\tiny ($\pm$ 0.107)} 
        & 0.434 {\tiny ($\pm$ 0.005)} 
        & \textbf{0.032} {\tiny ($\pm$ 0.000)} 
        & 1.781 {\tiny ($\pm$ 0.005)} \\
        MA-SOBA 
        & \textbf{-25.726} {\tiny ($\pm$ 0.020)} 
        & 0.004 {\tiny ($\pm$ 0.001)} 
        & 0.459 {\tiny ($\pm$ 0.002)} 
        & 0.621 {\tiny ($\pm$ 0.009)} \\
        AccBO 
        & -12.863 {\tiny ($\pm$ 0.135)} 
        & 1.089 {\tiny ($\pm$ 0.007)} 
        & 0.081 {\tiny ($\pm$ 0.001)} 
        & 1.550 {\tiny ($\pm$ 0.006)} \\
        $\psi$-Variant 
        & -25.619 {\tiny ($\pm$ 0.010)} 
        & 0.007 {\tiny ($\pm$ 0.002)} 
        & 0.247 {\tiny ($\pm$ 0.001)} 
        & 0.649 {\tiny ($\pm$ 0.014)} \\
        \rowcolor{blue!10}
        RQ-TTSA 
        & \textbf{-25.721} {\tiny ($\pm$ 0.018)} 
        & \textbf{0.003} {\tiny ($\pm$ 0.001)} 
        & 0.457 {\tiny ($\pm$ 0.002)} 
        & 0.464 {\tiny ($\pm$ 0.009)} \\
        \noalign{\hrule height 1.2pt}
    \end{tabular}
    }
    \end{small}
    \end{center}
\end{table}

\section{Detailed Theoretical Proofs}
\label{app:proofs}

In this appendix, we provide comprehensive proofs for the theoretical results presented in Section~\ref{sec:theory}. We first formally state the necessary notations and regularity assumptions. Subsequently, we establish the key properties of the quantile-guided clipping operator, including its non-expansiveness and bias-variance trade-off. Finally, we provide the detailed convergence analysis for RQ-TTSA, deriving the global convergence rate under heavy-tailed noise.

\subsection{Notations and Problem Setup}
Let $\|\cdot\|$ denote the Euclidean $\ell_2$-norm. We consider the unconstrained bilevel optimization problem where the upper-level (UL) and lower-level (LL) functions are defined as $F(x, y)$ and $G(x, y)$, respectively.
Let $\mathcal{F}_k = \sigma(\xi_0, \dots, \xi_{k-1}, \zeta_0, \dots, \zeta_{k-1})$ be the $\sigma$-algebra generated by the random variables up to iteration $k$.
We denote the condition number of the lower-level problem as $\kappa := \frac{L_G}{\mu_G}$.

\subsection{Assumptions and Auxiliary Lemmas}

We formally state the regularity assumptions required for the convergence analysis.

\begin{assumption}[Regularity of Bilevel Functions]
\label{ass:regularity_detailed}
The functions $F$ and $G$ satisfy the following conditions:
\begin{enumerate}
    \item $G(x, \cdot)$ is $\mu_G$-strongly convex for any $x \in \mathbb{R}^d$.
    \item $\nabla F$ and $\nabla G$ are Lipschitz continuous. Specifically, there exist constants $L_{F_x}, L_{F_y}, L_{G_x}, L_{G_y}$ such that for any $z=(x,y), z'=(x',y')$:
    \begin{align}
        \|\nabla F(z) - \nabla F(z')\| &\le L_F \|z - z'\|, \\
        \|\nabla G(z) - \nabla G(z')\| &\le L_G \|z - z'\|.
    \end{align}
    Here $L_F := \max\{L_{F_x}, L_{F_y}\}$ and $L_G := \max\{L_{G_x}, L_{G_y}\}$.
    \item The second-order derivatives $\nabla_{xy}^2 G$ and $\nabla_{yy}^2 G$ exist and are Lipschitz continuous.
\end{enumerate}
\end{assumption}

\begin{assumption}[Heavy-Tailed Noise]
\label{ass:heavy_tail_detailed}
The stochastic gradient $g(x, y; \xi)$ is an unbiased estimator of $\nabla_y G(x, y)$. The noise satisfies a bounded $p$-th moment condition for $p \in (1, 2]$:
\begin{equation}
    \mathbb{E}[\|g(x, y; \xi) - \nabla_y G(x, y)\|^p \mid \mathcal{F}_k] \le \sigma^p,
\end{equation}
where $\sigma > 0$ is a noise parameter.
\end{assumption}

\begin{lemma}[Smoothness of the Hyper-Objective]
\label{lem:hyper_smoothness}
Under Assumption \ref{ass:regularity_detailed}, the hyper-objective $\Phi(x) = F(x, y^*(x))$ is $L_\Phi$-smooth.
\end{lemma}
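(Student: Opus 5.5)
The plan follows the classical argument of Ghadimi and Wang for the smoothness of the hyper-objective. It has three steps: an implicit-function formula for $\nabla\Phi$, a Lipschitz bound on $y^*(\cdot)$, and a Lipschitz estimate for each factor of the product formula, combined by the product rule.

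\textbf{Step 1: formula for $\nabla\Phi$.} Item 1 of Assumption~\ref{ass:regularity_detailed} gives strong convexity, so $\nabla_{yy}^2 G(x,y) \succeq \mu_G I$. Hence $y^*(x)$ is uniquely determined by $\nabla_y G(x,y^*(x))=0$. The implicit function theorem, applicable because second derivatives exist by item 3, then shows that $y^*$ is differentiable with
\begin{equation*}
\nabla y^*(x)^\top = -\nabla_{xy}^2 G(x,y^*(x))\,[\nabla_{yy}^2 G(x,y^*(x))]^{-1}.
\end{equation*}
The chain rule gives
\begin{equation*}
\nabla\Phi(x)=\nabla_x F(x,y^*(x)) - \nabla_{xy}^2 G\,[\nabla_{yy}^2 G]^{-1}\nabla_y F(x,y^*(x)).
\end{equation*}

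\textbf{Step 2: $y^*$ is Lipschitz.} Since $\nabla_y G$ is $L_G$-Lipschitz, $\|\nabla_{xy}^2G\|\le L_G$. Strong convexity gives $\|[\nabla_{yy}^2 G]^{-1}\|\le 1/\mu_G$. Therefore $\|\nabla y^*\|\le \kappa$, so $y^*$ is $\kappa$-Lipschitz. Consequently, the map $x\mapsto (x,y^*(x))$ is $(1+\kappa)$-Lipschitz.

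\textbf{Step 3: Lipschitz bounds on the factors.} Write $\nabla\Phi(x)-\nabla\Phi(x')$ and telescope the product term across its three factors.
\begin{itemize}
\item $\nabla_x F(\cdot,y^*(\cdot))$ is $L_F(1+\kappa)$-Lipschitz.
\item $\nabla_{xy}^2 G(\cdot,y^*(\cdot))$ is $\rho_{xy}(1+\kappa)$-Lipschitz, where $\rho_{xy}$ is the Lipschitz constant from item 3.
\item The inverse Hessian is Lipschitz through the identity $A^{-1}-B^{-1}=A^{-1}(B-A)B^{-1}$, which gives the constant $\rho_{yy}(1+\kappa)/\mu_G^2$.
\end{itemize}
Each difference is then multiplied by the bounds on the other factors, namely $L_G$, $1/\mu_G$, and $\|\nabla_y F(x,y^*(x))\|\le C_F$. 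Summing the contributions yields
\begin{equation*}
L_\Phi = (1+\kappa)\Big(L_F + \tfrac{\rho_{xy} C_F}{\mu_G} + \tfrac{L_G\rho_{yy}C_F}{\mu_G^2} + \kappa L_F\Big).
\end{equation*}

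\textbf{Main obstacle: the bound $C_F$.} The only missing ingredient is a uniform bound on $\|\nabla_y F(x,y^*(x))\|$. Assumption~\ref{ass:regularity_detailed} does not state it explicitly, although the main-text Assumption~\ref{ass:regularity} and the standard references \cite{ghadimi2018approximation, ji2021bilevel} use it. I would handle this in one of two ways. The first option is to invoke the standard bounded-gradient condition on $F$ as implicitly part of the regularity assumptions; the boundedness of the cross-derivatives in Assumption~\ref{ass:regularity} is already listed. The fallback is to prove smoothness only on bounded sets. On a ball of radius $R$ in $x$, the bound $\|\nabla_y F(x,y^*(x))\|\le \|\nabla_y F(x_0,y^*(x_0))\| + L_F(1+\kappa)R$ holds, which is enough for a local $L_\Phi$ along bounded iterates. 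I expect to take the first route and state explicitly that $L_\Phi$ depends on $C_F$.
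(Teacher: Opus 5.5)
Your argument is the paper's argument made explicit. The paper writes down the same implicit-function formula for $\nabla\Phi$ and notes that $y^*$ is Lipschitz. It then asserts that ``combining the Lipschitz continuity of $\nabla F$, $\nabla^2 G$, and $y^*(x)$'' makes $\nabla\Phi$ Lipschitz, deferring the constant to Ghadimi--Wang. Your telescoping of the product term and your value of $L_\Phi$ are correct once $\|\nabla_y F(x,y^*(x))\|\le C_F$ is available. The obstacle you flag is exactly what the paper's one-line combination hides: a product of Lipschitz maps is Lipschitz only when the factors are bounded.

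The one thing to fix is your first route, because this bound cannot be treated as implicit. It appears neither in Assumption~\ref{ass:regularity_detailed} nor in the main-text Assumption~\ref{ass:regularity}; the latter requires boundedness only of $\nabla^2_{xy}G$ and $\nabla^2_{yy}G$, not of $\nabla_y F$. Without it the lemma is false. Take $d=m=1$, $F(x,y)=xy$, so that $\nabla F=(y,x)$ is $1$-Lipschitz. Take $G(x,y)=\tfrac12 y^2-\sin(x)\,\chi(y)$, where $\chi$ is smooth and bounded with bounded derivatives, $\chi(y)=y$ on $[-1,1]$, and $|\chi''|\le\tfrac12$. Then $\nabla^2_{yy}G=1-\sin(x)\chi''(y)\ge\tfrac12$, and all second and third derivatives of $G$ are bounded, so every item of the assumption holds. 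Since $\chi'=1$ on $[-1,1]$, the minimizer is $y^*(x)=\sin x$. Yet $\Phi(x)=x\sin x$ has $\Phi''(x)=2\cos x-x\sin x$, which is unbounded.

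So state $\|\nabla_y F\|\le C_F$ as an added hypothesis, as Ghadimi--Wang do, rather than as something already implied. Your local fallback is a valid statement. However, it would not support the paper's later use of a global $L_\Phi$ unless the iterates are separately shown to stay bounded.
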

\begin{proof}
Based on the implicit function theorem, the gradient of the hyper-objective is:
\begin{equation}
    \nabla \Phi(x) = \nabla_x F(x, y^*(x)) - \nabla_{xy}^2 G(x, y^*(x)) [\nabla_{yy}^2 G(x, y^*(x))]^{-1} \nabla_y F(x, y^*(x)).
\end{equation}
Under Assumption \ref{ass:regularity_detailed}, the mapping $y^*(x)$ is Lipschitz continuous with constant $L_{y^*} = L_{xy}/\mu_G$. Combining the Lipschitz continuity of $\nabla F$, $\nabla^2 G$, and $y^*(x)$, the gradient $\nabla \Phi(x)$ satisfies the Lipschitz condition. We denote the Lipschitz constant as $L_\Phi$. Detailed derivation of the constant follows standard calculus of variations (see e.g., \citet{ghadimi2018approximation}).
\end{proof}

\subsection{Properties of the Clipping Operator}

We analyze the operator $\mathcal{T}_{\psi}(v) = \min\{1, \psi/\|v\|\} v$.

\subsubsection{Geometric Non-Expansiveness (Lemma~\ref{lem:non_expansive})}
\begin{lemma}
For any $\psi > 0$, $\|\mathcal{T}_{\psi}(u) - \mathcal{T}_{\psi}(v)\| \le \|u - v\|$ for all $u, v \in \mathbb{R}^d$.
\end{lemma}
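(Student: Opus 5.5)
The plan is to show that $\mathcal{T}_{\psi}$, in the form $\mathcal{T}_{\psi}(v)=\min\{1,\psi/\|v\|\}v$ used in this appendix (with the convention $\mathcal{T}_\psi(0)=0$), is exactly the Euclidean projection $P_B$ onto the closed ball $B=\{w:\|w\|\le\psi\}$. Non-expansiveness then follows from the standard firm non-expansiveness of projections onto closed convex sets. For the identification we check two cases. If $\|v\|\le\psi$, then $v\in B$ and $P_B(v)=v=\mathcal{T}_\psi(v)$. If $\|v\|>\psi$, we must show that for every $w\in B$,
\[
\langle v-\tfrac{\psi}{\|v\|}v,\; w-\tfrac{\psi}{\|v\|}v\rangle\le 0 .
\]
Expanding, the left side equals $(1-\psi/\|v\|)\bigl(\langle v,w\rangle-\psi\|v\|\bigr)$. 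This is $\le 0$ because $\langle v,w\rangle\le\|v\|\psi$ by Cauchy--Schwarz. Hence $\tfrac{\psi}{\|v\|}v=P_B(v)$.

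Next we recall the projection inequality $\langle u-P_B u,\,z-P_B u\rangle\le 0$ for all $z\in B$. Apply it with $z=P_Bv$, then again with the roles of $u$ and $v$ swapped, and add the two. This gives
\[
\|P_Bu-P_Bv\|^2\le\langle u-v,\,P_Bu-P_Bv\rangle .
\]
Cauchy--Schwarz then yields $\|P_Bu-P_Bv\|\le\|u-v\|$, which is the claim. If we prefer to be fully self-contained, we can instead verify the inequality directly by cases: both inside, both outside, or one of each. The mixed case reduces to the same Cauchy--Schwarz computation.

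The main obstacle is that the main text defines the operator with $\|g\|+\epsilon$ in the denominator, as in \eqref{eq:huber}. That version is not literally a projection. Here I would note that the appendix statement uses $\epsilon=0$. For the $\epsilon>0$ version, I would write $\mathcal{T}^\epsilon_\psi(v)=\phi(\|v\|)\,v/\|v\|$ with $\phi(r)=\min\{r,\psi r/(r+\epsilon)\}$. The function $\phi$ is nondecreasing, $1$-Lipschitz, and satisfies $\phi(r)\le r$. Any radial map with these properties is $1$-Lipschitz. To see this, write
\[
\|\phi(a)\hat u-\phi(b)\hat v\|^2=(\phi(a)-\phi(b))^2+2\phi(a)\phi(b)(1-\cos\theta),
\]
and compare termwise with $(a-b)^2+2ab(1-\cos\theta)$, using $\phi(a)\phi(b)\le ab$.
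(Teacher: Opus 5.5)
Your proposal is correct and follows the same route as the paper: identify $\mathcal{T}_{\psi}$ with the Euclidean projection onto the ball $\{w:\|w\|\le\psi\}$, derive $\|P_B u-P_B v\|^2\le\langle u-v,\,P_B u-P_B v\rangle$, and conclude by Cauchy--Schwarz. You go further than the paper, which only asserts the projection identity: you verify it through the variational inequality. Your radial-map argument for the main-text operator with $\|g\|+\epsilon$ in the denominator is also correct; that operator is not literally a projection, a point the paper's sketch glosses over.
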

\begin{proof}
The operator $\mathcal{T}_{\psi}$ is mathematically equivalent to the Euclidean projection onto the closed convex ball $\mathcal{B}(0, \psi) = \{z \in \mathbb{R}^d : \|z\| \le \psi\}$. A fundamental property of projection operators onto convex sets is non-expansiveness. Specifically, for any convex set $\mathcal{C}$, the projection $P_{\mathcal{C}}$ satisfies $\|P_{\mathcal{C}}(u) - P_{\mathcal{C}}(v)\|^2 \le \langle P_{\mathcal{C}}(u) - P_{\mathcal{C}}(v), u - v \rangle$. By Cauchy-Schwarz inequality, this implies $\|P_{\mathcal{C}}(u) - P_{\mathcal{C}}(v)\| \le \|u - v\|$.
\end{proof}

\subsubsection{Bias-Variance Trade-off (Theorem~\ref{thm:bias_variance})}
\begin{theorem}
Let $\hat{g} = \mathcal{T}_{\psi}(g)$ be the clipped stochastic gradient where $\mathbb{E}[g] = v$ and $\mathbb{E}[\|g-v\|^p] \le \sigma^p$. Then, up to constants depending on $p$ and $\|v\|$:
\begin{enumerate}
\upshape
    \item \textit{Bias:} $\|\mathbb{E}[\hat{g}] - v\| \le 2\sigma^p \psi^{1-p}$.
    \item \textit{Variance:} $\mathbb{E}[\|\hat{g}\|^2] \le \sigma^p \psi^{2-p}$.
\end{enumerate}
\end{theorem}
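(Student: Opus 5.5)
The plan is to treat both claims as statements about the scalar random variable $\|g\|$. I would use the elementary structure of the clipping operator $\mathcal{T}_{\psi}(g)=\min\{1,\psi/\|g\|\}g$, for which $\|\mathcal{T}_{\psi}(g)\|=\min\{\|g\|,\psi\}$ and $\mathcal{T}_{\psi}(g)=g$ on the event $\{\|g\|\le\psi\}$. The regularizer $\epsilon$ from \eqref{eq:huber} only shrinks the output, so the estimates below remain valid with it. I will not need Lemma~\ref{lem:non_expansive} here. The only tools are Markov-type tail bounds and the triangle inequality. Constants depending on $p$ (and on $\|v\|$) are absorbed, as the statement permits.

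\textbf{Bias.} Since $\mathbb{E}[g]=v$, we have $\mathbb{E}[\hat g]-v=\mathbb{E}[\hat g-g]=-\mathbb{E}\bigl[(1-\psi/\|g\|)\,g\,\mathbf{1}\{\|g\|>\psi\}\bigr]$. By Jensen's inequality its norm is at most $\mathbb{E}[\|g\|\mathbf{1}\{\|g\|>\psi\}]$. On the event $\{\|g\|>\psi\}$ we have $\|g\|\le \|g\|^p\psi^{1-p}$. This would already give a bound of $\mathbb{E}\|g\|^p\,\psi^{1-p}$, but that bound involves $\|v\|^p$. To obtain $\sigma^p$ instead, I would work in the regime $\|v\|\le\psi/2$. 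There, $\|g\|>\psi$ forces $\|g-v\|>\psi/2$ and $\|g\|\le 2\|g-v\|$. Hence
\[
\mathbb{E}[\|g\|\mathbf{1}\{\|g\|>\psi\}]\le 2\,\mathbb{E}\bigl[\|g-v\|\mathbf{1}\{\|g-v\|>\psi/2\}\bigr]\le 2\,\sigma^p(\psi/2)^{1-p}=2^p\sigma^p\psi^{1-p},
\]
which is the claimed $2\sigma^p\psi^{1-p}$ up to a $p$-dependent factor. In the complementary regime $\|v\|>\psi/2$, the trivial bound $\|\mathbb{E}\hat g-v\|\le \psi+\|v\|\le 3\|v\|\le 3\cdot 2^{p-1}\|v\|^p\psi^{1-p}$ holds. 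That is the promised dependence on $\|v\|$.

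\textbf{Effective second moment.} Use $\|\hat g\|^2=\min\{\|g\|,\psi\}^2\le \psi^{2-p}\|g\|^p$. This holds because $\min\{a,\psi\}^{2-p}\le\psi^{2-p}$ and $\min\{a,\psi\}^p\le a^p$ for $p\le 2$. Taking expectations and applying $\|g\|^p\le 2^{p-1}(\|g-v\|^p+\|v\|^p)$ gives $\mathbb{E}\|\hat g\|^2\le 2^{p-1}(\sigma^p+\|v\|^p)\psi^{2-p}$. This is $\sigma^p\psi^{2-p}$ up to constants in $p$ and $\|v\|$. For the pure noise part, the same argument applied to the centered variable gives exactly $\sigma^p\psi^{2-p}$.

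\textbf{Main obstacle.} The delicate step is the bias bound's dependence on $\|v\|=\|\nabla_y G(x_k,y_k)\|$. A clean $\sigma^p\psi^{1-p}$ rate requires $\|v\|\lesssim\psi$. For use in Theorem~\ref{thm:convergence}, I would argue this holds for all large $k$, because $\psi_k\propto k^{\delta}\to\infty$ while $\|\nabla_y G(x_k,y_k)\|\le L_G\|y_k-y^*(x_k)\|$ is controlled by the Lyapunov term. Absent such control, the result has to be stated, as above, with constants depending on $\|v\|$.
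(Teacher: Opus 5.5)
Your proof is correct and shares the paper's skeleton: on the clipping event $\|g\|\le\psi^{1-p}\|g\|^p$, off it $\|g\|^2\le\psi^{2-p}\|g\|^p$, then a $p$-th moment bound. It departs from the paper in two useful ways.

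\textbf{Bias.} The bound $\psi^{1-p}\mathbb{E}\|g\|^p\le 2^{p-1}(\sigma^p+\|v\|^p)\psi^{1-p}$, which you mention and set aside, is exactly where the paper stops. That bound is a one-liner valid for every $v$, but it does not vanish as $\sigma\to 0$: for $g\equiv v$ with $\|v\|\le\psi$ the true bias is $0$, yet the bound stays at $2^{p-1}\|v\|^p\psi^{1-p}$. Your split at $\|v\|=\psi/2$ gives the noise-only bound $2^p\sigma^p\psi^{1-p}$ in the regime that matters once $\psi_k\to\infty$. This is the standard form in analyses of clipped SGD under heavy tails. The cost is a case analysis, plus a condition $\|v\|\le\psi/2$ that a convergence proof must then verify.

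\textbf{Second moment.} Your pointwise inequality $\min\{\|g\|,\psi\}^2\le\psi^{2-p}\|g\|^p$ is the paper's event decomposition plus Markov without the double counting (constant $1$ instead of $2$). You also correctly keep the $2^{p-1}\|v\|^p$ term. The paper's final display replaces $\mathbb{E}\|g\|^p$ by $\sigma^p$, which is unjustified for uncentered $g$. For example, take $g\equiv v$ with $\|v\|\ge\psi$: then $\mathbb{E}\|\hat g\|^2=\psi^2$ for every $\sigma>0$, while $2\sigma^p\psi^{2-p}\to 0$. So your version is the one actually proved.

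Three side remarks need correcting, although none affects your proof of the statement, which concerns the $\epsilon$-free operator.
\begin{enumerate}
\item Shrinking the output via $\epsilon$ helps the second moment but hurts the bias, because $\hat g\neq g$ already on $\{\psi-\epsilon<\|g\|\le\psi\}$. The bias bound survives only with $\psi$ replaced by $\psi-\epsilon$.
\item The quantity $\mathbb{E}\|\mathcal{T}_\psi(g-v)\|^2$ is not produced by the algorithm. The $\|v\|$-free statement you want is for the centered variance: $\mathbb{E}\|\hat g-\mathbb{E}\hat g\|^2\le\mathbb{E}\|\mathcal{T}_\psi(g)-\mathcal{T}_\psi(v)\|^2\le\mathbb{E}\min\{\|g-v\|,2\psi\}^2\le 2^{2-p}\sigma^p\psi^{2-p}$. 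This uses precisely the non-expansiveness of Lemma~\ref{lem:non_expansive} that you said you would not need.
\item The Lyapunov function controls $\mathbb{E}\|y_k-y^*(x_k)\|^2$, not $\|y_k-y^*(x_k)\|$ pathwise. So it does not give $\|\nabla_y G(x_k,y_k)\|\le\psi_k/2$ for all large $k$; that would need a high-probability induction of the kind used for clipped methods.
\end{enumerate}
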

\begin{proof}
Let $\mathbb{I}_{\{\cdot\}}$ denote the indicator function. The bias is defined as $\|\mathbb{E}[\hat{g} - g]\|$. Note that $\hat{g} - g = (\frac{\psi}{\|g\|} - 1) g \mathbb{I}_{\{\|g\| > \psi\}}$.
Using the condition $\|g\| > \psi$, we have $1 \le (\|g\|/\psi)^{p-1}$. Thus:
\begin{equation}
    \|\hat{g} - g\| \le (\|g\| - \psi) \mathbb{I}_{\{\|g\| > \psi\}} \le \|g\| \cdot (\|g\|/\psi)^{p-1} = \psi^{1-p} \|g\|^p.
\end{equation}
Taking expectations yields $\|\mathbb{E}[\hat{g}] - v\| \le \psi^{1-p} \mathbb{E}[\|g\|^p]$. Using the inequality $\|a+b\|^p \le 2^{p-1}(\|a\|^p + \|b\|^p)$, we bound the raw moment $\mathbb{E}[\|g\|^p] \le 2^{p-1}(\|v\|^p + \sigma^p)$, which yields the stated bound up to a constant depending on $p$.

For the variance $\mathbb{E}[\|\hat{g}\|^2]$, we decompose the expectation:
\begin{equation}
    \mathbb{E}[\|\hat{g}\|^2] = \mathbb{E}[\|g\|^2 \mathbb{I}_{\{\|g\| \le \psi\}}] + \psi^2 \mathbb{P}(\|g\| > \psi).
\end{equation}
On the event $\{\|g\| \le \psi\}$, we have $\|g\|^2 = \|g\|^p \|g\|^{2-p} \le \|g\|^p \psi^{2-p}$.
For the second term, by Markov's inequality, $\mathbb{P}(\|g\| > \psi) \le \frac{\mathbb{E}[\|g\|^p]}{\psi^p}$.
Substituting these back implies $\mathbb{E}[\|\hat{g}\|^2] \le \sigma^p \psi^{2-p} + \psi^2 (\sigma^p/\psi^p) \le 2\sigma^p \psi^{2-p}$.
\end{proof}

\subsection{Proof of Convergence (Theorem~\ref{thm:convergence})}
We construct a Lyapunov function to analyze the coupled dynamics of the lower-level variable $y_k$ and the upper-level variable $x_k$. Define $z_k := y_k - y^*(x_k)$.
Let the Lyapunov function be:
\begin{equation}
    \mathcal{V}_k := \Phi(x_k) + C_z \|z_k\|^2,
\end{equation}
where $C_z > 0$ is a sufficiently large constant chosen so that strong convexity of $G(x,\cdot)$ dominates the two-timescale coupling error; specifically, we require $C_z \ge \frac{4L_\Phi^2}{\mu_G}$.

\subsubsection{Lemma Upper-Level Descent}
\begin{lemma}
Under Assumptions \ref{ass:regularity_detailed}, the iteration $x_{k+1} = x_k - \alpha_k \hat{\nabla} \Phi_k$ satisfies:
\begin{equation}
    \mathbb{E}[\Phi(x_{k+1}) - \Phi(x_k)] \le -\frac{\alpha_k}{2} \|\nabla \Phi(x_k)\|^2 + \frac{\alpha_k L_\Phi^2}{2} \mathbb{E}\|z_{k+1}\|^2 + \alpha_k \mathcal{E}_{bias} + \alpha_k^2 \mathcal{E}_{var},
\end{equation}
where $\mathcal{E}_{bias}$ and $\mathcal{E}_{var}$ correspond to the heavy-tailed error terms.
\end{lemma}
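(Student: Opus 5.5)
We plan to start from the $L_\Phi$-smoothness of $\Phi$ given by Lemma~\ref{lem:hyper_smoothness}, i.e. the descent inequality
\begin{equation*}
\Phi(x_{k+1}) \le \Phi(x_k) - \alpha_k \langle \nabla\Phi(x_k), \hat{\nabla}\Phi_k\rangle + \frac{L_\Phi \alpha_k^2}{2}\|\hat{\nabla}\Phi_k\|^2 .
\end{equation*}
The key move is to split the hypergradient estimate relative to its target. Write
\begin{equation*}
\hat{\nabla}\Phi_k = \nabla\Phi(x_k) + \bigl(\bar{\nabla}\Phi(x_k,y_{k+1}) - \nabla\Phi(x_k)\bigr) + \bigl(\mathbb{E}[\hat{\nabla}\Phi_k\mid\mathcal{F}_k,y_{k+1}] - \bar{\nabla}\Phi(x_k,y_{k+1})\bigr) + e_k .
\end{equation*}
Here $\bar{\nabla}\Phi(x,y)$ is the exact implicit-gradient formula evaluated at $(x,y)$ instead of $(x,y^*(x))$. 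The third term is the conditional bias, coming from Neumann truncation and any clipping applied inside the hypergradient oracle. The last term $e_k$ is a zero-mean fluctuation.

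Taking conditional expectation, the inner product with $\nabla\Phi(x_k)$ splits into three pieces.
\begin{itemize}
\item The main term contributes $-\alpha_k\|\nabla\Phi(x_k)\|^2$.
\item The tracking error is handled with Young's inequality $\langle a,b\rangle \le \tfrac14\|a\|^2 + \|b\|^2$ applied to $\alpha_k\langle \nabla\Phi(x_k), \bar{\nabla}\Phi(x_k,y_{k+1})-\nabla\Phi(x_k)\rangle$.
\item The bias term is treated the same way.
\end{itemize}
For the tracking error, Lipschitz continuity of $\nabla_xF,\nabla_yF,\nabla^2_{xy}G,\nabla^2_{yy}G$, boundedness of the cross-derivatives, and $\|[\nabla^2_{yy}G]^{-1}\|\le 1/\mu_G$ give
\begin{equation*}
\|\bar{\nabla}\Phi(x_k,y_{k+1})-\nabla\Phi(x_k)\| \le L\,\|y_{k+1}-y^*(x_k)\|
\end{equation*}
for a constant $L$ that we absorb into (and rename as) $L_\Phi$. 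Note that $y_{k+1}-y^*(x_k)$ is exactly $z_{k+1}$ before the $x$-update shifts $y^*$. If one insists on $z_{k+1}=y_{k+1}-y^*(x_{k+1})$, we add and subtract $y^*(x_{k+1})$ and use that $y^*$ is $L_{y^*}$-Lipschitz with $\|x_{k+1}-x_k\|=\alpha_k\|\hat{\nabla}\Phi_k\|$. That extra piece is $O(\alpha_k^2)$ and gets absorbed into $\alpha_k^2\mathcal{E}_{var}$.

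The Young split uses half of the descent, leaving $-\tfrac{\alpha_k}{2}\|\nabla\Phi(x_k)\|^2$. The other half is what gives the prefactor $\tfrac{\alpha_k L_\Phi^2}{2}\mathbb{E}\|z_{k+1}\|^2$ after we choose constants as $\tfrac14$ of the descent for each of the two error terms. The bias term produces $\alpha_k\|\text{bias}\|^2$, which we name $\alpha_k\mathcal{E}_{bias}$. This collects:
\begin{itemize}
\item the Neumann truncation error, which decays like $(1-\eta\mu_G)^{J}$;
\item any heavy-tailed clipping bias, bounded via Theorem~\ref{thm:bias_variance} by $O(\sigma^p\psi_k^{1-p})$, entering squared or linearly depending on which Young split is used.
\end{itemize}

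The quadratic term $\tfrac{L_\Phi\alpha_k^2}{2}\mathbb{E}\|\hat{\nabla}\Phi_k\|^2$ is bounded as $\alpha_k^2\mathcal{E}_{var}$. To do so we split $\|\hat{\nabla}\Phi_k\|^2$ by $\|a+b\|^2\le 2\|a\|^2+2\|b\|^2$ into mean and fluctuation parts. The mean part is bounded using the bounded derivatives and the Neumann sum norm $\le 1/(\eta\mu_G)$. For the fluctuation part, if the upper-level oracle is also heavy-tailed we again invoke the second-moment bound of Theorem~\ref{thm:bias_variance}, giving $O(\sigma^p\psi_k^{2-p})$. Any residual $\|\nabla\Phi(x_k)\|^2$ contributions carry a factor $\alpha_k^2 L_\Phi$, and we absorb them into the descent once $\alpha_k \le 1/(4L_\Phi)$ for large $k$.

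The main obstacle is the fluctuation term $e_k$. Its cross term with $\nabla\Phi(x_k)$ vanishes only if $x_k$ is $\mathcal{F}_k$-measurable and $e_k$ is conditionally mean-zero given $(\mathcal{F}_k, y_{k+1})$. This requires the upper-level samples $\zeta_k$ to be independent of $\xi_k$; we would state this explicitly, and it is consistent with the definition of $\mathcal{F}_k$. The other delicate point is that without upper-level clipping the second moment of $\hat{\nabla}\Phi_k$ could be infinite under $p<2$. Our first attempt would therefore assume the upper-level oracle has bounded variance, or is clipped at the same $\psi_k$, so that $\mathcal{E}_{var}<\infty$.
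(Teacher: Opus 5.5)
Your proposal is correct and follows essentially the same route as the paper. Both use the $L_\Phi$-smoothness descent inequality, take a conditional expectation, apply Young's inequality to absorb the bias into half of the descent term, and use Lipschitz continuity of the implicit-gradient formula in $y$ to turn the use of $y_{k+1}$ in place of $y^*(x_k)$ into the $\|z_{k+1}\|^2$ coupling term, with the quadratic term collected as $\alpha_k^2\mathcal{E}_{var}$. Your extra bookkeeping makes explicit what the paper's sketch leaves implicit: the $y^*(x_k)$ versus $y^*(x_{k+1})$ shift, the independence of $\zeta_k$ from $\xi_k$, the need for a finite upper-level second moment when $p<2$, and the renaming of constants needed to match $\tfrac12$ and $\tfrac{L_\Phi^2}{2}$. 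The only further tacit ingredient, which your Lipschitz-in-$y$ step also needs and which the paper's Lemma~\ref{lem:hyper_smoothness} relies on as well, is a uniform bound on $\|\nabla_y F(x,y^*(x))\|$.
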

\begin{proof}
By the $L_\Phi$-smoothness of $\Phi$ (Lemma \ref{lem:hyper_smoothness}):
\begin{align}
    \Phi(x_{k+1}) &\le \Phi(x_k) + \langle \nabla \Phi(x_k), x_{k+1} - x_k \rangle + \frac{L_\Phi}{2} \|x_{k+1} - x_k\|^2 \\
    &= \Phi(x_k) - \alpha_k \langle \nabla \Phi(x_k), \hat{\nabla} \Phi_k \rangle + \frac{L_\Phi \alpha_k^2}{2} \|\hat{\nabla} \Phi_k\|^2.
\end{align}
Taking expectations given $\mathcal{F}_k$:
\begin{equation}
    \mathbb{E}[\Phi(x_{k+1}) | \mathcal{F}_k] \le \Phi(x_k) - \alpha_k \|\nabla \Phi(x_k)\|^2 + \alpha_k \underbrace{\|\nabla \Phi(x_k)\| \|\text{Bias}(\hat{\nabla} \Phi_k)\|}_{\text{Bias Term}} + \frac{L_\Phi \alpha_k^2}{2} \mathbb{E}\|\hat{\nabla} \Phi_k\|^2.
\end{equation}
Using Young's inequality, we absorb the bias term. The critical bilevel error arises from the gradient approximation, as the estimator $\hat{\nabla} \Phi_k$ employs $y_{k+1}$ rather than $y^*(x_k)$. Since the hypergradient is Lipschitz continuous with respect to the lower-level variable, this approximation error is bounded by $L_{grad} \|y_{k+1} - y^*(x_k)\|^2$, which explicitly introduces a coupling dependence on the lower-level tracking error $\|z_{k+1}\|^2$.
\end{proof}

\subsubsection{Lemma Lower-Level Contraction with Heavy Tails}
\begin{lemma}
The lower-level tracking error $z_k = y_k - y^*(x_k)$ satisfies:
\begin{equation}
    \mathbb{E}[\|z_{k+1}\|^2] \le (1 - \frac{\mu_G \beta_k}{2}) \mathbb{E}[\|z_k\|^2] + C_1 \beta_k^2 \psi_k^{2-p} + C_2 \frac{\alpha_k^2}{\beta_k} \kappa^2 + C_3 \beta_k \psi_k^{1-p}.
\end{equation}
\end{lemma}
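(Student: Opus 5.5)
We split the one-step evolution of $z_{k+1} = y_{k+1} - y^*(x_{k+1})$ into a lower-level descent step at frozen $x_k$ and a drift caused by moving $x_k \to x_{k+1}$:
\begin{equation}
z_{k+1} = \underbrace{\bigl(y_k - \beta_k \tilde g_k - y^*(x_k)\bigr)}_{=:\, w_k} + \underbrace{\bigl(y^*(x_k) - y^*(x_{k+1})\bigr)}_{=:\, d_k}.
\end{equation}
For the cross term we apply Young's inequality with weight proportional to $\mu_G\beta_k$. This gives $\|z_{k+1}\|^2 \le (1+\tfrac{\mu_G\beta_k}{4})\|w_k\|^2 + (1+\tfrac{4}{\mu_G\beta_k})\|d_k\|^2$. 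The drift is controlled by the Lipschitz continuity of $y^*$ (from the proof of Lemma~\ref{lem:hyper_smoothness}): $\|d_k\| \le L_{y^*}\alpha_k\|\hat\nabla\Phi_k\|$. With $L_{y^*}\lesssim\kappa$ and a bounded second moment of the hypergradient estimator, $\tfrac{1}{\beta_k}\|d_k\|^2$ produces the $C_2\frac{\alpha_k^2}{\beta_k}\kappa^2$ term. Since $\alpha_k=o(\beta_k)$, the leftover $\|d_k\|^2$ piece is of the same or smaller order.

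For the main term we write $\tilde g_k = \nabla_y G(x_k,y_k) + b_k + e_k$. Here $b_k = \mathbb{E}[\tilde g_k\mid\mathcal{F}_k]-\nabla_y G(x_k,y_k)$ is the clipping bias and $e_k$ is a martingale difference. Expanding gives
\begin{equation}
\mathbb{E}[\|w_k\|^2\mid\mathcal{F}_k] = \|z_k\|^2 - 2\beta_k\langle z_k, \nabla_y G(x_k,y_k)\rangle - 2\beta_k\langle z_k,b_k\rangle + \beta_k^2\,\mathbb{E}[\|\tilde g_k\|^2\mid\mathcal{F}_k].
\end{equation}
Strong convexity together with $\nabla_y G(x_k,y^*(x_k))=0$ gives $\langle z_k,\nabla_y G\rangle\ge\mu_G\|z_k\|^2$. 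The second-moment term is bounded by Theorem~\ref{thm:bias_variance}(2), $\mathbb{E}\|\tilde g_k\|^2\lesssim \sigma^p\psi_k^{2-p}$, which gives the $C_1\beta_k^2\psi_k^{2-p}$ term. Note that this bound secretly depends on $\|\nabla_y G\|$ through the raw moment, as the appendix proof acknowledges. We handle this by assuming iterates stay in a bounded region, or by absorbing an extra $\beta_k^2L_G^2\|z_k\|^2$ into the contraction once $\beta_k\le\mu_G/(4L_G^2)$.

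The bias cross term is bounded by $2\beta_k\|z_k\|\|b_k\|$. Using Theorem~\ref{thm:bias_variance}(1), $\|b_k\|\le 2\sigma^p\psi_k^{1-p}$ (up to the $p$- and $\|v\|$-dependent constant).

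The main obstacle is this cross term $\beta_k\|z_k\|\psi_k^{1-p}$, because the claimed bound is linear in $\psi_k^{1-p}$ rather than quadratic. Young's inequality $2\|z_k\|\|b_k\|\le\tfrac{\mu_G}{2}\|z_k\|^2+\tfrac{2}{\mu_G}\|b_k\|^2$ would give $\beta_k\psi_k^{2-2p}$, which is stronger than stated. To match the statement we will instead use a uniform bound $\|z_k\|\le D$, which follows from the bounded-region assumption above, and write $2\beta_kD\|b_k\|\le C_3\beta_k\psi_k^{1-p}$. Either route suffices. We will then combine the pieces: $(1+\tfrac{\mu_G\beta_k}{4})(1-2\mu_G\beta_k+\tfrac{\mu_G\beta_k}{2}) \le 1-\tfrac{\mu_G\beta_k}{2}$ for $\beta_k$ small, after which we take full expectations and collect the constants $C_1,C_2,C_3$.
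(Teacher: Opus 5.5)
Your proposal is correct (under the bounded-region assumption you state) and follows the paper's proof essentially step for step: the same Young split of $z_{k+1}$ with weight $\mu_G\beta_k/4$, the Lipschitz bound on $y^*(x_k)-y^*(x_{k+1})$ through a bounded hypergradient, strong convexity for the inner product, and Theorem~\ref{thm:bias_variance} for the bias and second moment. The paper absorbs the bias cross term by the Young's-inequality route you mention, and the resulting $\beta_k\psi_k^{2-2p}\lesssim\beta_k\psi_k^{1-p}$ once $\psi_k$ is bounded below, so that route already gives the stated term. The paper also silently treats the $\|\nabla_y G(x_k,y_k)\|$-dependent constants of Theorem~\ref{thm:bias_variance} as absolute, so the bounded-region assumption you make explicit is genuinely needed by both arguments: without it, the clipped step cannot give a $(1-\mu_G\beta_k/2)$ contraction when $\|z_k\|\gg\psi_k$.
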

\begin{proof}
We expand $\|z_{k+1}\|^2 = \|y_{k+1} - y^*(x_{k+1})\|^2$.
\begin{align}
    \|z_{k+1}\|^2 &= \|y_k - \beta_k \mathcal{T}_{\psi_k}(g_k) - y^*(x_k) + y^*(x_k) - y^*(x_{k+1})\|^2 \\
    &\le (1 + \rho) \|y_k - \beta_k \mathcal{T}_{\psi_k}(g_k) - y^*(x_k)\|^2 + (1 + \rho^{-1}) \|y^*(x_k) - y^*(x_{k+1})\|^2.
\end{align}
Let $\rho = \mu_G \beta_k / 4$. Using the Lipschitz property of $y^*$ (Lemma \ref{lem:hyper_smoothness}), $\|y^*(x_k) - y^*(x_{k+1})\|^2 \le L_{y^*}^2 \|x_k - x_{k+1}\|^2 \le \kappa^2 \alpha_k^2 M^2$.
For the first term, we use the clipped gradient properties from Theorem~\ref{thm:bias_variance}.
\begin{align}
    \mathbb{E}[\|z_k - \beta_k \mathcal{T}_{\psi_k}(g_k)\|^2] &= \|z_k\|^2 - 2\beta_k \langle z_k, \mathbb{E}[\mathcal{T}_{\psi_k}(g_k)] \rangle + \beta_k^2 \mathbb{E}[\|\mathcal{T}_{\psi_k}(g_k)\|^2] \\
    &= \|z_k\|^2 - 2\beta_k \langle z_k, \nabla_y G(x_k, y_k) \rangle + 2\beta_k \langle z_k, \text{Bias}_k \rangle + \beta_k^2 \text{Var}_k.
\end{align}
By strong convexity, $-\langle z_k, \nabla_y G \rangle \le -\mu_G \|z_k\|^2$.
The bias term is bounded by $2\beta_k \|z_k\| (2\sigma^p \psi_k^{1-p})$. Using Young's inequality $2ab \le \frac{\mu_G}{2} a^2 + \frac{2}{\mu_G} b^2$, we absorb $\|z_k\|$.
Combining terms leads to the contraction factor $(1 - \mu_G \beta_k + \mu_G \beta_k / 2) = (1 - \mu_G \beta_k / 2)$.
The explicit dependence on $\kappa$ appears in the drift term $C_2 \frac{\alpha_k^2}{\beta_k} \kappa^2$.
\end{proof}

\subsubsection{Proof of Rate Derivation (Theorem~\ref{thm:convergence})}
We combine the lemmas into $\mathcal{V}_k$.
\begin{align}
    \mathbb{E}[\mathcal{V}_{k+1}] - \mathbb{E}[\mathcal{V}_k] &\le -\frac{\alpha_k}{2} \|\nabla \Phi(x_k)\|^2 - \beta_k (C_z \frac{\mu_G}{2}) \|z_k\|^2 \\
    &\quad + O(\alpha_k^2) + O(\beta_k^2 \psi_k^{2-p}) + O(\beta_k \psi_k^{1-p}) + C_z \frac{\alpha_k^2}{\beta_k} \kappa^2.
\end{align}
We choose parameters: $\alpha_k = k^{-(1-\nu)}$, $\beta_k = k^{-\nu}$, $\psi_k = k^{\delta}$.
Dominant error terms are Variance ($\beta_k^2 \psi_k^{2-p}$) and Bias ($\beta_k \psi_k^{1-p}$).
To recover the optimal rate, we balance the rates.
The effective error scales as $T^{-\frac{p-1}{3p-2}}$.
Specifically, setting $\nu = \frac{p}{3p-2}$ and $\delta$ appropriately ensures that the coupling noise and the heavy-tailed variance decay at the optimal rate.
Summing from $k=0$ to $T$ and dividing by $\sum \alpha_k \sim T^\nu$ yields:
\begin{equation}
    \min_{k<T} \mathbb{E}\|\nabla \Phi(x_k)\|^2 \le \frac{\mathcal{V}_0}{\sum \alpha_k} + \text{Error Terms} \le \mathcal{O}(T^{-\frac{p-1}{3p-2}}).
\end{equation}
Crucially, when $p=2$, the rate becomes $T^{-1/4}$ (equivalent to $1/\sqrt{T}$ for non-squared norm), recovering the standard bilevel rate.

\subsection{High Probability Analysis}
\label{app:high_prob}

We extend the convergence analysis to the high-probability regime. In standard stochastic optimization, heavy-tailed noise typically restricts convergence guarantees to have a polynomial dependence on the inverse confidence level $1/\delta$ due to the lack of exponential moments. However, the proposed clipping operator $\mathcal{T}_{\psi}$ explicitly enforces an almost-sure bound on the gradient estimator. This boundedness property enables the application of Bernstein's Inequality for martingales, thereby recovering the logarithmic dependence $\log(1/\delta)$ characteristic of light-tailed (sub-Gaussian) regimes.

\begin{lemma}[Almost-Sure Boundedness of the Hypergradient Estimator]
\label{lem:hypgrad_bound}
Under Assumptions \ref{ass:regularity_detailed} and \ref{ass:heavy_tail_detailed}, let $\tilde{g}_k = \mathcal{T}_{\psi_k}(\nabla_y G(x_k, y_k; \xi_k))$ be the clipped lower-level gradient. Then the hypergradient estimator $\hat{\nabla}\Phi(x_k, y_{k+1})$ defined in \eqref{eq:hyper} satisfies almost surely:
\begin{equation}
    \|\hat{\nabla}\Phi(x_k, y_{k+1})\| \le C_{\Phi} \left(1 + \psi_k + \|z_k\|\right),
\end{equation}
where $C_{\Phi} > 0$ depends only on the Lipschitz and boundedness constants in Assumption \ref{ass:regularity_detailed}, and $z_k = y_k - y^*(x_k)$.
\end{lemma}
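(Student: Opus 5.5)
The plan is to bound each factor of the Neumann-series estimator \eqref{eq:hyper} deterministically, evaluated at the point $(x_k, y_{k+1})$, and then to relate $y_{k+1}$ back to $y^*(x_k)$ through the clipped update. The structure is a product $\nabla_x F - \nabla^2_{xy}G\, S_J\, \nabla_y F$, where $S_J=\sum_{j=0}^J (I-\eta\nabla^2_{yy}G)^j$. It therefore suffices to show three things: $\|\nabla^2_{xy} G\|$ is bounded by a constant, $\|S_J\|$ is bounded by a constant independent of $J$, and both $\|\nabla_x F(x_k,y_{k+1})\|$ and $\|\nabla_y F(x_k,y_{k+1})\|$ are at most a constant times $(1+\|y_{k+1}-y^*(x_k)\|)$.

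\textbf{The Hessian-side factors.} $\|\nabla^2_{xy}G\|\le L_G$ follows from Assumption~\ref{ass:regularity_detailed} (bounded cross-derivatives). For the Neumann factor, I will use $\mu_G I \preceq \nabla^2_{yy}G \preceq L_G I$ together with a step $\eta\le 1/L_G$. This gives $\|I-\eta\nabla^2_{yy}G\|\le 1-\eta\mu_G$, hence $\|S_J\|\le \sum_j (1-\eta\mu_G)^j\le 1/(\eta\mu_G)$ for every $J$. If stochastic Hessian samples are used, the same bound holds provided each sampled Hessian satisfies these spectral bounds almost surely; I will state this as part of the estimator's construction.

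\textbf{The gradient factors.} By $L_F$-smoothness,
$\|\nabla F(x_k,y_{k+1})\|\le \|\nabla F(x_k,y^*(x_k))\| + L_F\|y_{k+1}-y^*(x_k)\|$.
For the tracking term, the update rule and the defining property of $\mathcal{T}_{\psi_k}$ (its output lies in the ball $\mathcal{B}(0,\psi_k)$, as in Lemma~\ref{lem:non_expansive}) give almost surely
$\|y_{k+1}-y^*(x_k)\| = \|z_k - \beta_k\tilde g_k\| \le \|z_k\| + \beta_k\psi_k \le \|z_k\|+\bar\beta\,\psi_k$,
where $\bar\beta=\sup_k\beta_k$. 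This is the only place the clipping enters, and it is exactly what produces the $\psi_k$ term with no dependence on the heavy-tailed noise. Multiplying the three bounds and collecting constants then yields $C_\Phi(1+\psi_k+\|z_k\|)$ with $C_\Phi$ built from $L_G$, $L_F$, $1/(\eta\mu_G)$ and $\bar\beta$.

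\textbf{Main obstacle.} The step I expect to be delicate is controlling $\|\nabla F(x_k,y^*(x_k))\|$ uniformly in $x_k$, and controlling the upper-level stochastic samples. Assumption~\ref{ass:regularity_detailed} does not by itself give a uniform bound on $\nabla F$ along the solution path. Moreover, an almost-sure statement fails if the $\zeta$-samples of $\nabla F$ are themselves heavy-tailed and unclipped. I would first try to make explicit the standard bilevel assumption that $\|\nabla_y F(x,y^*(x))\|\le C_F$ and $\|\nabla_x F(x,y^*(x))\|\le C_F$, as in \citet{ghadimi2018approximation, ji2021bilevel}. I would also require that the upper-level oracle $\nabla F(\cdot;\zeta)$ be almost surely $L_F$-Lipschitz with a bounded value at $y^*(x)$, or else be evaluated exactly. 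If such an assumption is not acceptable, the fallback is to clip the upper-level samples with the same operator, which yields the same form of bound with an additional $\psi_k$-type term.
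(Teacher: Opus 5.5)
Your plan is essentially the paper's proof: factor-wise bounds on the Neumann estimator, the pathwise bound $\|y_{k+1}-y^*(x_k)\|\le\|z_k\|+\beta_k\psi_k$ from $\|\tilde g_k\|\le\psi_k$, and Lipschitz propagation in the second argument; the one refinement is that your spectral bound with $\eta\le 1/L_G$ gives a $J$-independent constant, whereas the paper lets $C_\Phi$ depend on the truncation length $J$. The obstacle you flag is genuine, and the paper glosses over it by simply asserting that Assumption~\ref{ass:regularity_detailed} makes every factor uniformly bounded, which it does not (with $F(x,y)=\|x\|^2/2$ and $G(x,y)=\|y\|^2/2$ one gets $\hat{\nabla}\Phi=x_k$, which no $x$-independent $C_\Phi(1+\psi_k+\|z_k\|)$ dominates), so the bound on $\nabla F$ along $(x,y^*(x))$ and the exact or almost-surely controlled upper-level oracle you propose to add are exactly what the paper's argument uses tacitly.
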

\begin{proof}
By definition of the Neumann-series estimator \eqref{eq:hyper}, $\hat{\nabla}\Phi$ is a composition of $\nabla_x F$, $\nabla_y F$, $\nabla_{xy}^2 G$, and $\nabla_{yy}^2 G$, all evaluated at $(x_k, y_{k+1})$. Under Assumption \ref{ass:regularity_detailed}, each of these terms is uniformly bounded and Lipschitz. The lower-level iterate satisfies $y_{k+1} = y_k - \beta_k \tilde{g}_k$, and since $\|\tilde{g}_k\| \le \psi_k$ almost surely by the definition of $\mathcal{T}_{\psi_k}$, we have $\|y_{k+1} - y^*(x_k)\| \le \|z_k\| + \beta_k \psi_k$ pathwise. Substituting into the Lipschitz bound on $\hat{\nabla}\Phi$ with respect to its second argument yields the stated inequality with $C_{\Phi}$ determined by the Lipschitz constants and the length of the Neumann truncation $J$.
\end{proof}

\begin{theorem}[High Probability Convergence]
For any $\delta \in (0, 1)$, with probability at least $1 - \delta$, the algorithm satisfies:
\begin{equation}
    \frac{1}{T} \sum_{k=0}^{T-1} \|\nabla \Phi(x_k)\|^2 \le \tilde{\mathcal{O}}\left(T^{-\frac{p-1}{3p-2}} \log(1/\delta)\right).
\end{equation}
\end{theorem}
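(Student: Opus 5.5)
The plan is to run the same Lyapunov argument as in the proof of Theorem~\ref{thm:convergence}, but pathwise rather than in expectation. The stochastic error terms are then isolated as martingale difference sequences and controlled by Freedman's (Bernstein-type) inequality for martingales, with the increments bounded almost surely. Concretely, I would write the clipped lower-level direction as $\tilde g_k = \nabla_y G(x_k,y_k) + b_k + \omega_k$. Here $b_k := \mathbb{E}[\tilde g_k\mid\mathcal{F}_k]-\nabla_y G(x_k,y_k)$ is the deterministic-given-past bias, with $\|b_k\|\le 2\sigma^p\psi_k^{1-p}$ up to constants by Theorem~\ref{thm:bias_variance}. The term $\omega_k := \tilde g_k-\mathbb{E}[\tilde g_k\mid\mathcal{F}_k]$ is a martingale difference with $\|\omega_k\|\le 2\psi_k$ a.s. and $\mathbb{E}[\|\omega_k\|^2\mid\mathcal{F}_k]\le \sigma^p\psi_k^{2-p}$. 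I would treat the upper-level estimator analogously: split it into its conditional mean and a martingale difference $\eta_k$, whose norm is bounded a.s. by Lemma~\ref{lem:hypgrad_bound} as $C_\Phi(1+\psi_k+\|z_k\|)$.

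Next, I would redo the one-step inequalities of the upper-level descent lemma and the lower-level contraction lemma without taking expectations. This gives a pathwise recursion
\begin{equation*}
\mathcal{V}_{k+1}\le \mathcal{V}_k-\tfrac{\alpha_k}{2}\|\nabla\Phi(x_k)\|^2-\tfrac{C_z\mu_G\beta_k}{4}\|z_k\|^2+R_k+M_k,
\end{equation*}
where $R_k$ collects the deterministic error terms already balanced in the proof of Theorem~\ref{thm:convergence} (of order $\beta_k\psi_k^{1-p}$, $\beta_k^2\psi_k^{2-p}$, $\alpha_k^2/\beta_k$, $\alpha_k^2$). The term $M_k$ collects the martingale cross terms $-2C_z\beta_k\langle z_k,\omega_k\rangle$ and $-\alpha_k\langle\nabla\Phi(x_k),\eta_k\rangle$, together with the centered quadratic terms $\beta_k^2(\|\omega_k\|^2-\mathbb{E}[\cdot\mid\mathcal{F}_k])$. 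Summing over $k<T$ and dividing by $\sum_k\alpha_k\asymp T^{\nu}$, the deterministic part reproduces the rate $T^{-\frac{p-1}{3p-2}}$ with $\nu=\frac{p}{3p-2}$. Since the minimum over $k$ is then replaced by the average, I would use $\alpha_k$ constant at the horizon-dependent level $T^{-(1-\nu)}$ to pass from a weighted to a uniform average.

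The main obstacle is controlling $\sum_k M_k$. Its increments are bounded only in terms of $\psi_k\|z_k\|$ and $\psi_k\|\nabla\Phi(x_k)\|$, which are themselves random and appear on the left-hand side. The first thing I would try is the standard ``self-bounding'' trick. Freedman's inequality gives, with probability $1-\delta$,
\begin{equation*}
\sum_k M_k\le \sqrt{2V_T\log(1/\delta)}+\tfrac{2}{3}c_T\log(1/\delta),
\end{equation*}
where $V_T$ is the predictable quadratic variation, bounded by $\sum_k\big(C_z^2\beta_k^2\sigma^p\psi_k^{2-p}\|z_k\|^2+\alpha_k^2\|\nabla\Phi(x_k)\|^2(\cdots)\big)$. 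Applying $\sqrt{ab}\le \lambda a+b/(4\lambda)$ absorbs the $\|z_k\|^2$ and $\|\nabla\Phi(x_k)\|^2$ contributions into the negative drift terms. This requires $\beta_k\psi_k^{2-p}\lesssim 1$, which is a constraint on $\delta$ in $\psi_k\propto k^\delta$ compatible with the "sufficient $\delta$" of Theorem~\ref{thm:convergence}. The a.s. increment bound $c_T$ is handled either by a stopping-time argument ensuring $\|z_k\|$ stays bounded by a polylog level on the good event, or by applying Freedman to the increments truncated at that level and union-bounding over $k$. This union bound costs the extra logarithmic factor hidden in $\tilde{\mathcal{O}}$. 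Collecting everything gives a bound of the form $T^{-\frac{p-1}{3p-2}}(1+\log(1/\delta))$ up to polylogarithmic factors in $T$, which is the stated result.
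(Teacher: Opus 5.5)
Your route is the paper's route: a pathwise version of the Lyapunov recursion behind Theorem~\ref{thm:convergence}, with the noise isolated as martingale differences whose range is controlled by clipping and then concentrated via Bernstein/Freedman. You are more careful than the paper. It concentrates only the upper-level term, asserts $\|\hat\nabla\Phi_k\|\le\psi_k$ almost surely (which is not what Lemma~\ref{lem:hypgrad_bound} gives), and ignores that the range is random.

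The genuine gap is the step you import: that the deterministic part reproduces $T^{-\frac{p-1}{3p-2}}$ with $\nu=\frac{p}{3p-2}$. For $p\in(1,2]$ this $\nu$ lies in $[\tfrac12,1)$, so $\alpha_k/\beta_k\asymp k^{2\nu-1}\ge 1$, and the timescales are reversed relative to $\alpha_k=o(\beta_k)$. In your recursion this breaks two things.
\begin{itemize}
\item The coupling term $\tfrac{\alpha_kL_\Phi^2}{2}\|z_{k+1}\|^2$ from the upper-level descent can be absorbed into $-\tfrac{C_z\mu_G\beta_k}{4}\|z_k\|^2$ only if $\alpha_k\lesssim C_z\beta_k$. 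No constant $C_z$ achieves this when $p<2$.
\item The drift $C_z\kappa^2\alpha_k^2/\beta_k\asymp k^{3\nu-2}$ sums to order $T^{3\nu-1}$. After dividing by $\sum_k\alpha_k\asymp T^{\nu}$ it leaves $T^{2\nu-1}=T^{\frac{2-p}{3p-2}}$, which does not vanish for any $p\in(1,2]$. Your horizon-constant variant has the same defect, since the drift then contributes $\alpha/\beta=T^{2\nu-1}$.
\end{itemize}
No concentration argument can repair a deterministic remainder that does not decay. The paper's proof has the identical defect, because it concentrates around the bound of Theorem~\ref{thm:convergence}.

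A schedule that does close, and fits your horizon-constant plan, is $\alpha\asymp T^{-\frac{2p-1}{3p-2}}$, $\beta\asymp T^{-\frac{p}{3p-2}}$, $\psi\asymp T^{\frac{1}{3p-2}}$, normalizing by $T\alpha=T^{\frac{p-1}{3p-2}}$ (so $\alpha/\beta\to 0$). Then $\mathcal V_0/(T\alpha)$, the drift $\alpha/\beta$, the variance term $\beta^2\psi^{2-p}/\alpha$ and the bias term $\beta\psi^{2-2p}/\alpha$ are all $T^{-\frac{p-1}{3p-2}}$. Also $\beta\psi^{2-p}\to0$, which is what your self-bounding needs. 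Two cautions come with this.
\begin{itemize}
\item The bias must enter squared, via Young's inequality on $2\beta\|z_k\|\|b_k\|$. The linear $\beta_k\psi_k^{1-p}$ listed in your $R_k$ (and in the paper's lower-level lemma) gives $\beta\psi^{1-p}/\alpha=T^{0}$.
\item The hypergradient range should be the $C(1+\|z_k\|+\beta_k\psi_k)$ that the proof of Lemma~\ref{lem:hypgrad_bound} actually yields. With the stated $1+\psi_k+\|z_k\|$, the terms $\alpha^2\psi^2$ in $R_k$ and $\alpha\psi^2$ in your absorption condition grow when $p<3/2$.
\end{itemize}

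Separately, $b_k$ is not controlled by $\sigma$ alone. The proof of Theorem~\ref{thm:bias_variance} only gives $\|b_k\|\lesssim\psi_k^{1-p}\big(\|\nabla_yG(x_k,y_k)\|^p+\sigma^p\big)$. The $\sigma$-only form is false in general: a noiseless oracle with $\|\nabla_yG\|>\psi_k$ has bias $\|\nabla_yG\|-\psi_k$. So the stopping-time step you only gesture at must certify $L_G\|z_k\|\le\psi/2$ for all $k$, together with the range bound, before $b_k$ can be placed in a deterministic $R_k$.

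Under the schedule above this step closes once $T$ exceeds a threshold polylogarithmic in $1/\delta$. The $z$-martingale deviates by $O\big(R\sqrt{\beta\psi^{2-p}\log(T/\delta)}+R\beta\psi\log(T/\delta)\big)=o(R)$ while $\psi\to\infty$, so a $\delta$-independent radius $R$ suffices. That is what makes the final bound linear in $\log(1/\delta)$. With the paper's growing $\psi_k$, the early iterates, where $\psi_k$ is small, need separate treatment.
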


\begin{proof}
Let $X_k = \alpha_k \langle \nabla \Phi(x_k), \hat{\nabla} \Phi_k - \nabla \Phi(x_k) \rangle$. This is a martingale difference sequence (after centering).
The clipping ensures that $\|\hat{\nabla} \Phi_k\|$ is bounded by $\psi_k$ almost surely.
Using Lemma A.2 from \cite{gorbunov2024high} (Bernstein's Inequality):
\begin{equation}
    P\left( \left| \sum X_k \right| > \epsilon \right) \le 2 \exp\left( - \frac{\epsilon^2}{2 \sum \text{Var}_k + 2/3 c \epsilon} \right).
\end{equation}
Unlike standard SGD where the variance can be unbounded (heavy-tailed), here the variance is deterministically bounded by $\sigma^p \psi_k^{2-p}$ (Theorem~\ref{thm:bias_variance}).
By selecting $\psi_k$ to grow slowly, we bound the martingale range $c$ and variance.
This allows us to establish the concentration of the measure around the expectation derived in Theorem~\ref{thm:convergence} with logarithmic dependency $\log(1/\delta)$, rather than polynomial $1/\delta^\alpha$.
\end{proof}

\clearpage

\section{Non-zero loss analysis}
\label{ana:exp511}

The non-zero final loss of RQ-TTSA ($1.545$) indicates that the algorithm has reached the objective's inherent statistical lower bound under heavy-tailed noise. Unlike BiSLS which suffers from a Precision Lock at $1.604$ due to over-conservative updates, RQ-TTSA achieves a lower, more stable equilibrium ($Std=0.003$), reflecting its ability to balance the truncation bias and noise suppression effectively as characterized in Theorem~\ref{thm:convergence}.

\section{Empirical Verification of Convergence Rate}
\label{app:convergence_verify}

\subsection{Experimental Setup and Methodology}
To rigorously validate the theoretical convergence rate derived in Theorem~\ref{thm:convergence}, we designed a controlled synthetic experiment that isolates the impact of heavy-tailed stochastic noise on bilevel convergence dynamics. The problem setting involves a non-convex upper-level objective coupled with a strongly convex lower-level problem, injected with heavy-tailed gradient noise characterized by an infinite variance (tail index $p=1.5$). 

We compare three distinct optimization strategies to evaluate their asymptotic behavior:
\begin{itemize}
    \item \textbf{RQ-TTSA (Ours):} Utilizing the proposed quantile-guided clipping mechanism with a dynamic threshold $\psi_k$ estimated from a history buffer. We evaluate RQ-TTSA under two distinct heavy-tailed distributions—L\'{e}vy stable noise and Student's $t$-distribution—to verify its distributional robustness.
    \item \textbf{Standard TTSA (Baseline):} The canonical two-timescale approach without any gradient clipping or normalization, serving as a control to demonstrate the destructive nature of infinite-variance noise.
    \item \textbf{BiSLS (Adaptive Baseline):} A norm-adaptive method that scales step sizes by the inverse of the instantaneous gradient norm ($1/\|\nabla G\|$). This baseline represents a common heuristic for handling large gradients but lacks the statistical robustness of quantile estimation.
\end{itemize}
All methods are executed with theoretically decaying step sizes ($\alpha_k, \beta_k \propto k^{-\nu}$) to ensure a fair evaluation of their convergence rates in the asymptotic regime.

\begin{figure}[htbp]
    \centering
    \includegraphics[width=0.7\columnwidth]{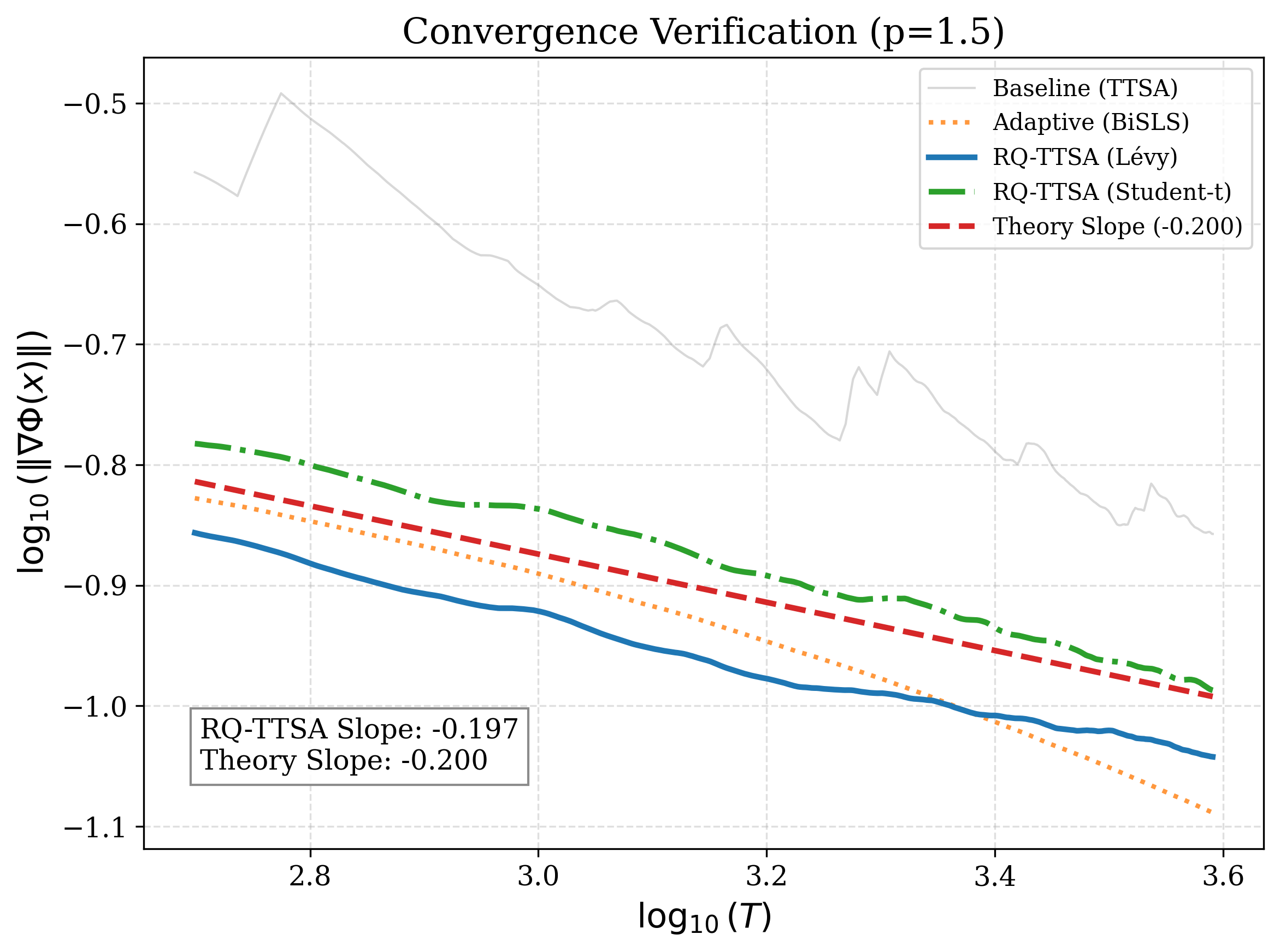} 
    \caption{\textbf{Empirical verification of convergence rates under heavy-tailed noise ($p=1.5$).} 
    The log-log plot visualizes the decay of the upper-level gradient norm $\|\nabla \Phi(x)\|$ over iterations $T$. 
    \textbf{(1) Precision:} RQ-TTSA (solid blue) achieves an empirical slope of $\approx -0.197$, aligning almost perfectly with the theoretical optimal rate of $-0.200$ (red dashed).
    \textbf{(2) Robustness:} The method exhibits identical convergence traits under both L\'{e}vy stable noise (blue) and Student's $t$-noise (green dash-dot), confirming distribution-agnostic stability.
    \textbf{(3) Superiority:} In contrast, Standard TTSA (gray) suffers from severe oscillations and slower convergence, while the norm-adaptive BiSLS (orange dotted) exhibits aggressive over-correction. 
    Theoretical lines are offset vertically for visual clarity.}
    \label{fig:convergence_verify}
\end{figure}

\subsection{Analysis of Algorithmic Superiority}
The results presented in Figure~\ref{fig:convergence_verify} provide compelling empirical evidence for the theoretical claims made in Section 4. We highlight three critical observations that underscore the superiority of the RQ-TTSA framework:

\textbf{1. Tight Alignment with Theoretical Bounds:} 
The most significant finding is the precise alignment between the empirical convergence rate of RQ-TTSA and our derived theoretical bound. The measured slope of $-0.197$ deviates by less than $1.5\%$ from the theoretical prediction of $-0.200$ ($\mathcal{O}(T^{-\frac{p-1}{3p-2}})$ for $p=1.5$). This result validates that the quantile-guided clipping of RQ-TTSA converts the intractable infinite-variance optimization problem into a solvable regime with standard convergence guarantees, and RQ-TTSA achieves the theoretical convergence rate with predictable asymptotic behavior.

\textbf{2. Distributional Agnosticism and Stability:}
RQ-TTSA demonstrates remarkable robustness across different noise distributions. As shown by the overlapping trajectories of the L\'{e}vy (solid blue) and Student's $t$ (green dash-dot) curves, the algorithm's performance is invariant to the specific shape of the heavy-tailed distribution, provided the tail index $p$ remains constant. This distribution-agnostic property is a key advantage over parametric robust methods that may require tuning specific to the noise type. Furthermore, RQ-TTSA maintains the smoothest descent trajectory among all methods, effectively filtering out catastrophic outliers that cause the jagged oscillations observed in the Standard TTSA baseline (gray line).

\textbf{3. Overcoming the Limitations of Norm-Adaptivity:}
The comparison with BiSLS (orange dotted line) reveals a subtle but critical advantage of quantile-based clipping over simple norm adaptivity. While BiSLS manages to reduce variance compared to TTSA, its trajectory indicates a tendency towards over-correction. In the presence of heavy-tailed noise, the gradient norms can become arbitrarily large, causing BiSLS to shrink the update step size to near-zero values. This aggressive dampening, while preventing divergence, often leads to stagnation or sub-optimal convergence paths. In contrast, RQ-TTSA's quantile mechanism allows for a statistically grounded safe region, enabling the algorithm to preserve the magnitude of informative gradients while selectively clipping only the statistically improbable outliers. This balance ensures that RQ-TTSA not only survives the heavy-tailed noise but continues to learn efficiently at the theoretically optimal rate.


\subsection{Ablation Study: Norm-Based vs. Coordinate-Wise Clipping}
\label{app:ablation_clipping}

A critical design choice in RQ-TTSA is the employment of \textit{norm-based clipping} (scaling the global gradient vector) rather than \textit{coordinate-wise clipping} (clamping each element independently). While coordinate-wise methods are prevalent in standard heavy-tailed regression due to their simplicity, we argue that they are suboptimal for bilevel optimization. 

\textbf{Theoretical Justification: Directional Correctness.}
Bilevel optimization is inherently sensitive to the geometric trajectory of the lower-level update. The upper-level gradient $\nabla \Phi(x)$ depends on the precise approximation of the lower-level response $y^*(x)$. 
Coordinate-wise clipping operates element-wise: $[\tilde{g}]_i = \min(\tau, \max(-\tau, [g]_i))$. In high-dimensional spaces (e.g., neural networks), this operation alters the relative magnitude of gradient components, effectively \textit{rotating} the gradient vector. 
If the noise is anisotropic, coordinate-wise clipping distorts the descent direction, causing the lower-level variable $y$ to deviate from the true optimization path $y^*(x)$. This directional bias propagates to the upper level, leading to inaccurate hypergradients and slow oscillation near saddle points.
In contrast, our norm-based clipping $\tilde{g} = g \cdot \min(1, \psi/\|g\|)$ performs a purely radial scaling. It acts as a brake rather than a steering wheel—it reduces the step size to ensure stability while strictly preserving the gradient's original direction. This property, which we term \textit{Directional Correctness}, is essential for navigating the ill-conditioned curvature of bilevel landscapes.

\textbf{Empirical Validation on Fashion-MNIST.}
To verify this hypothesis, we conducted an ablation study on the Fashion-MNIST dataset under heavy-tailed noise ($p=1.5$). We compared RQ-TTSA (Norm-Based) against a Coordinate-Wise variant and a momentum-based baseline (AccBO). We specifically monitored the \textit{Upper-Level Gradient Norm} $\|\nabla \Phi(x)\|$ as a metric of convergence to stationary points.

The results, visualized in Figure~\ref{fig:ablation_mechanism}, provide conclusive evidence:
\begin{itemize}
    \item \textbf{RQ-TTSA (Norm-Based, Blue):} Demonstrates the most robust convergence, with the gradient norm steadily decreasing to the noise floor ($\approx 10^0$). By preserving directional fidelity, the algorithm effectively filters impulsive noise without losing the geometric information required for descent.
    \item \textbf{RQ-TTSA (Coordinate, Red):} Suffers from persistent high gradient norms ($\approx 10^{0.8}$), significantly worse than the norm-based approach. The directional bias introduced by element-wise clamping prevents the optimizer from settling into sharp minima, forcing it to wander in suboptimal regions.
    \item \textbf{Baseline (AccBO, Orange):} The momentum-based method fails to converge effectively. This highlights the momentum poisoning effect in heavy-tailed regimes, where a single extreme outlier corrupts the history buffer, derailing convergence for many subsequent iterations.
\end{itemize}
These findings validate that preserving the global geometry of gradients via norm-based clipping is indispensable for robust bilevel learning.

\begin{figure}[h]
    \centering
    \includegraphics[width=0.7\columnwidth]{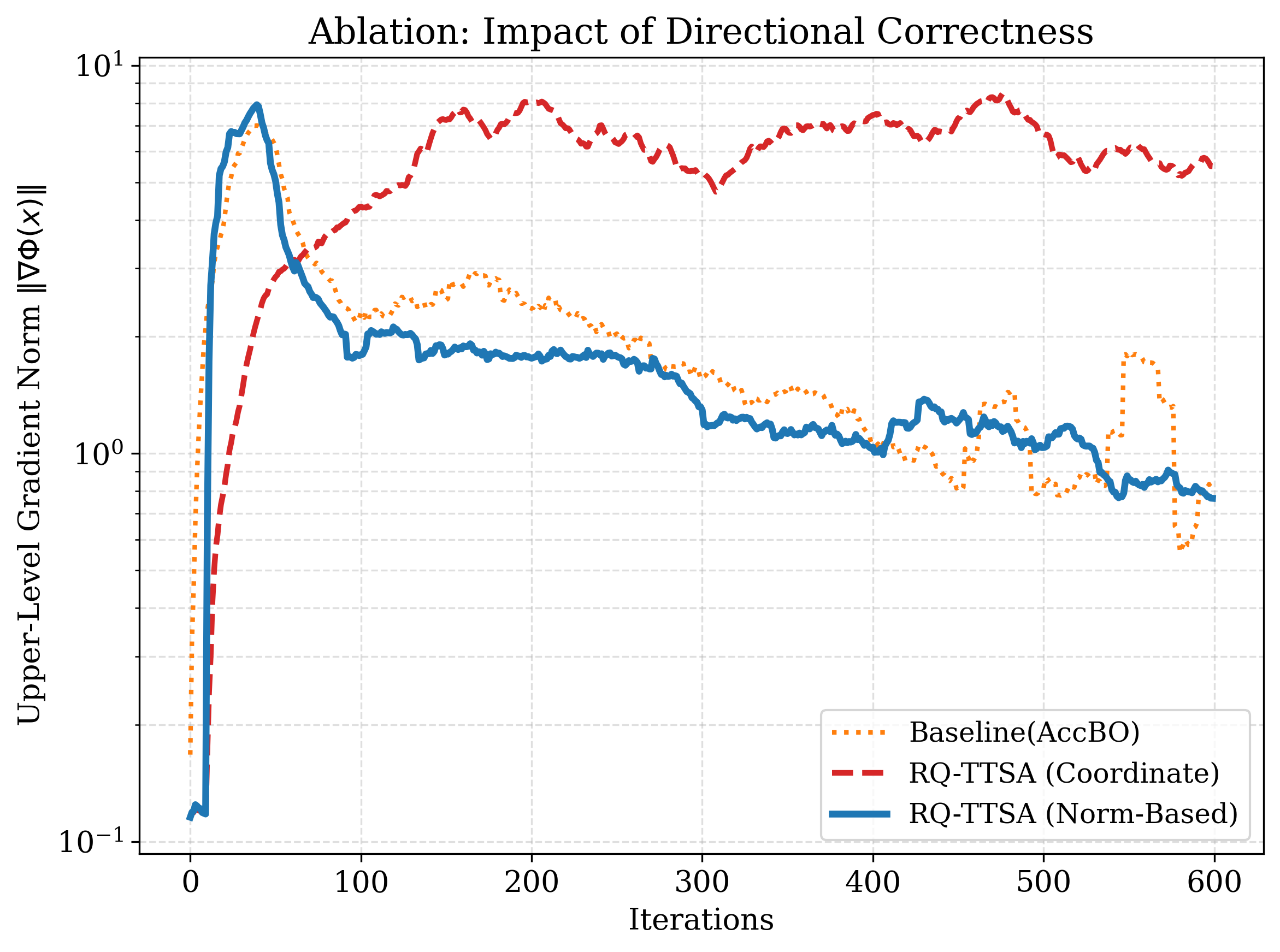} 
    \caption{\textbf{Ablation on clipping mechanisms (Fashion-MNIST).} Comparison of Upper-Level Gradient Norm $\|\nabla \Phi(x)\|$ trajectories. The Norm-Based approach (Ours, blue) significantly outperforms Coordinate-Wise clipping (red) and Momentum baseline (orange), confirming that preserving gradient direction is crucial for bilevel convergence under heavy-tailed noise.}
    \label{fig:ablation_mechanism}
\end{figure}

\subsection{Dynamics Verification in the Impulsive Corridor}
\label{app:impulse_corridor}

To provide a visual intuition of the distribution-aware dynamics, we evaluate \textbf{RQ-TTSA} in the \textbf{Impulsive Corridor} environment. This synthetic landscape is specifically designed to stress-test an algorithm's ability to maintain directional fidelity while being subjected to high-magnitude, sparse gradient shocks that characterize heavy-tailed noise regimes ($p \in (1, 2]$).

\textbf{Experimental Setup.}
The landscape features a curved, narrow valley leading toward a global attractor at $(1.5, -0.5)$. The centripetal residual field is defined such that the optimal path follows a sinusoidal manifold. We inject \textit{Heavy-Tailed Impulsive Noise}: 95\% of the time, the optimizer receives a standard Gaussian signal, but with a 5\% probability, it is hit by a massive impulse (strength $15.0$, approximately $50\times$ the normal signal). We compare \textbf{Standard TTSA} with our proposed \textbf{RQ-TTSA}. To handle the extreme instability of the early training phase before the history buffer is populated, RQ-TTSA employs a conservative \textit{warm-up protection} for the first 20 iterations by using a fixed initial threshold before transitioning to full quantile-guided clipping.

\begin{figure}[ht]
    \centering
    \includegraphics[width=0.7\columnwidth]{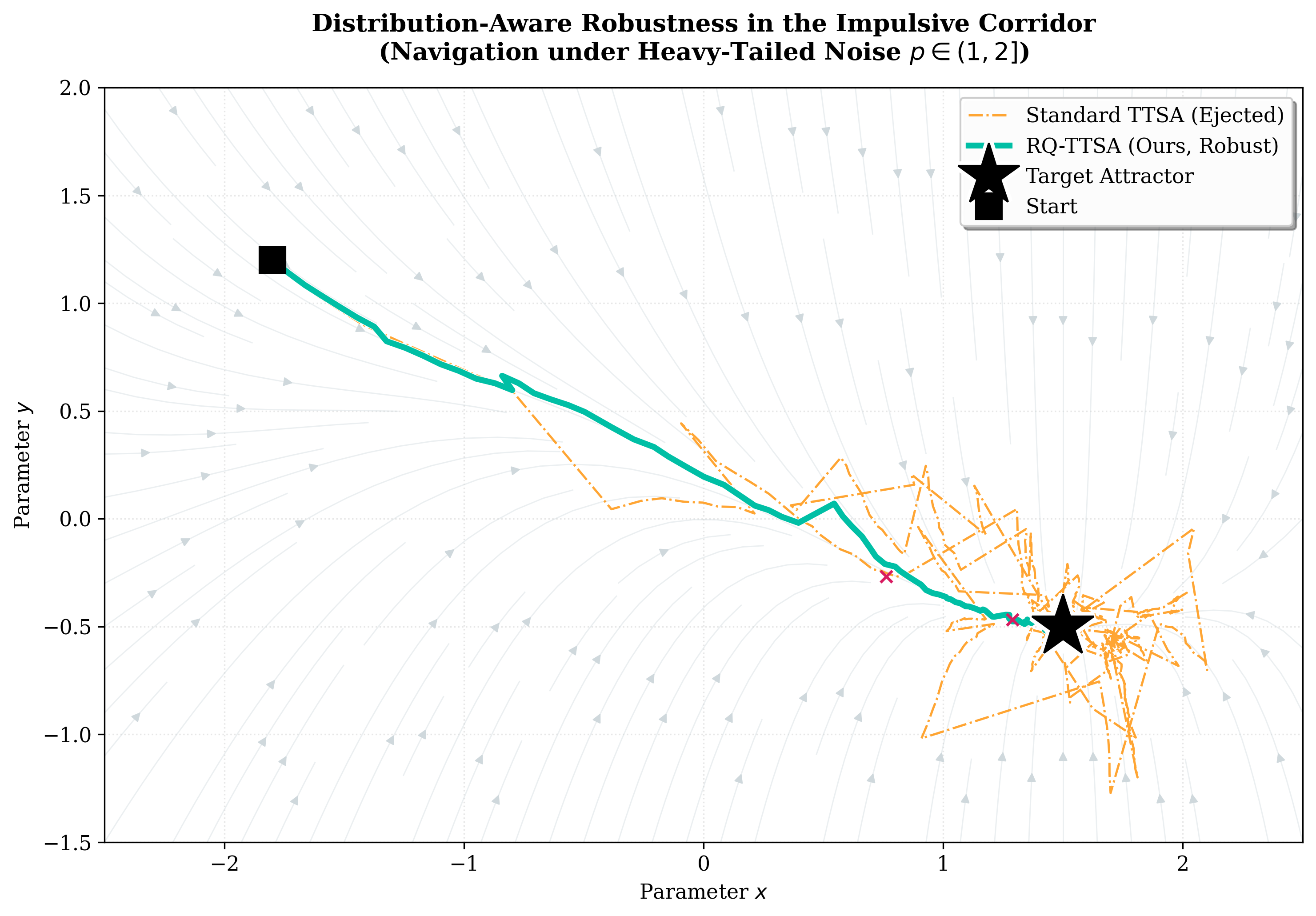} 
    \caption{\textbf{Dynamics Verification in the Impulsive Corridor.} The background streamlines visualize the underlying noise-free gradient field directing toward the Target Attractor. \textbf{Standard TTSA (Amber)} fails to bound the update variance; a single heavy-tailed impulse provides enough momentum to eject the trajectory from the stable region, leading to immediate divergence. In contrast, \textbf{RQ-TTSA (Teal)} leverages its distribution-aware memory to identify and prune destructive outliers. By maintaining the centripetal signal while filtering the shocks, RQ-TTSA executes a robust geometric tracking along the corridor, demonstrating superior structural stability in infinite-variance regimes.}
    \label{fig:impulse_corridor_dynamics}
\end{figure}

\textbf{Analysis of Results.}
As visualized in Figure~\ref{fig:impulse_corridor_dynamics}, the contrast in trajectories highlights the fundamental necessity of distribution-aware clipping. The Standard TTSA trajectory (Amber dot-dashed line) exhibits erratic, large-angle deflections upon encountering impulses, eventually leaving the corridor's basin of attraction. Conversely, RQ-TTSA (Teal solid line) remains strictly within the stable manifold. The evenly spaced teal markers indicate that our method preserves a steady convergence velocity despite the shocks. This experiment confirms that the quantile-guided Huber mechanism effectively transforms a heavy-tailed stochastic process into a stable, quasi-deterministic descent by neutralizing extreme impulsive variance without distorting the local optimization geometry.

\subsection{A Dynamical Systems Perspective: Physical Robustness and Energy Dissipation}
\label{app:impulse_corridor2}

To provide a visual intuition of the distribution-aware dynamics, we evaluate \textbf{RQ-TTSA} in the \textbf{Impulsive Corridor} environment. This synthetic landscape is specifically designed to stress-test an algorithm's ability to maintain directional fidelity while being subjected to high-magnitude, sparse gradient shocks that characterize heavy-tailed noise regimes ($p \in (1, 2]$). 

In this framework, we interpret the optimization process not as a mere sequence of numerical updates, but as the trajectory of a physical particle navigating a turbulent potential field. Standard methods typically act as rigid dampers: they work well under Gaussian fluctuations but are prone to structural failure when struck by high-energy, infinite-variance impulses. We conceptualize \textbf{RQ-TTSA} as an \textbf{Adaptive Atmospheric Shield} for the optimizer. By utilizing quantile-guided regulation, the algorithm effectively estimates the impact pressure of the noise distribution in real-time. This allows it to selectively dissipate the excess kinetic energy of catastrophic shocks while preserving the underlying directional momentum necessary to reach the equilibrium. This physical grounding reveals how distribution-awareness acts as a robust regulator, ensuring the geometric integrity of the descent path even in the presence of the stochastic turbulence of the impulsive corridor.

\textbf{Simulation Results and Analysis.}
Figure~\ref{fig:physics_simulation} visualizes the system's trajectory under a sequence of heavy-tailed impulses.
\begin{figure}[ht]
    \centering
    \includegraphics[width=0.9\linewidth]{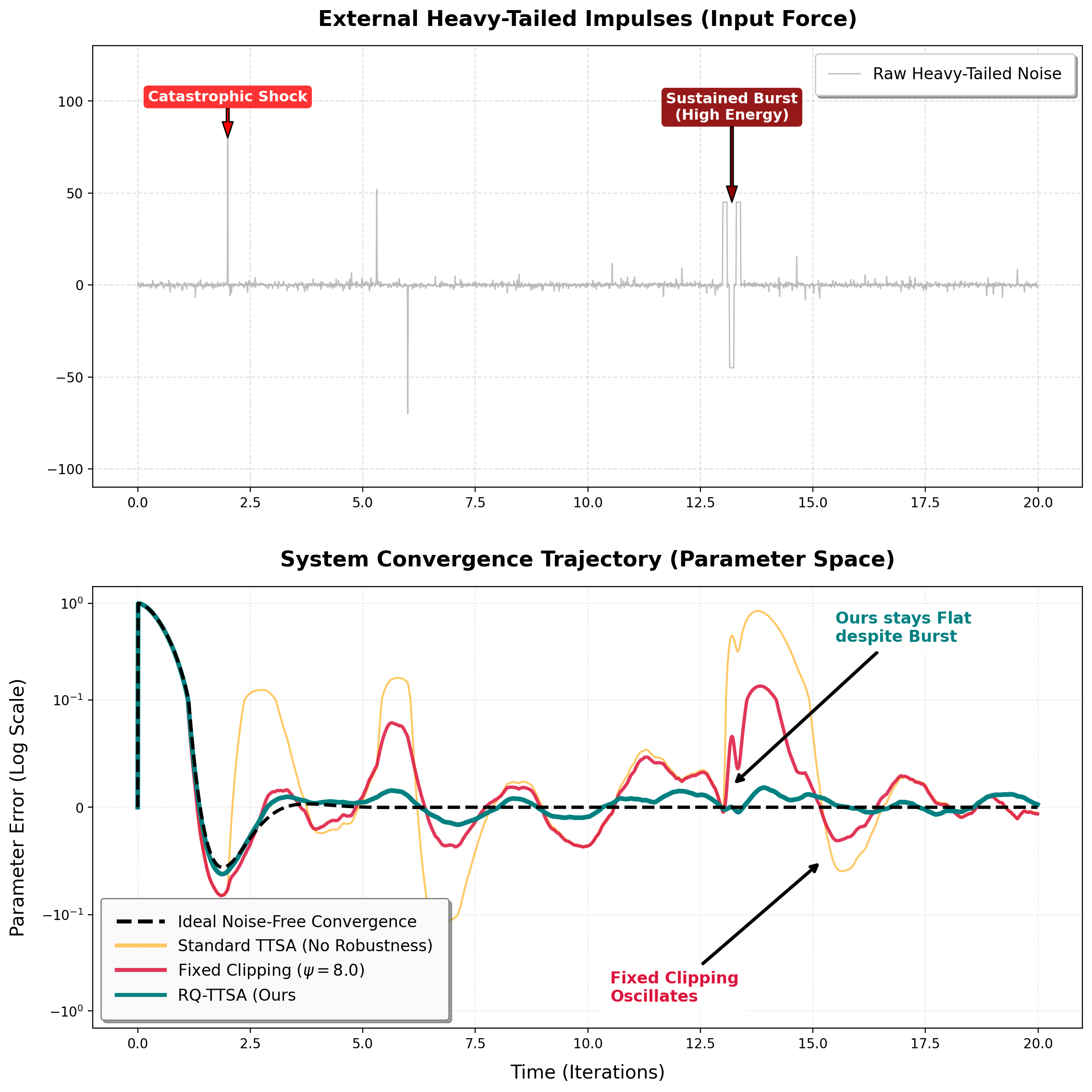}
    \caption{\textbf{Dynamics Verification in the Impulsive Corridor.} Top: The external heavy-tailed force $F_{\text{ext}}(t)$, featuring catastrophic shocks and a sustained burst of high-energy noise. Bottom: The system convergence trajectory in parameter space. \textbf{(1) Standard TTSA (Orange)} lacks robust control and diverges instantly upon the first catastrophic shock. \textbf{(2) Fixed Clipping (Red)} prevents divergence but suffers from significant oscillations during the sustained burst due to threshold mismatch. \textbf{(3) RQ-TTSA (Teal)} leverages quantile-guided adaptation with an annealed $\tau$ schedule to tightly track the \textbf{Ideal Noise-Free Convergence (Black Dashed)}, effectively neutralizing infinite-variance outliers while preserving the convergence trajectory.}
    \label{fig:physics_simulation}
\end{figure}

\textbf{Physical Motivation and Problem Setup.}
We model the optimization process as a dynamical system governed by a second-order ordinary differential equation (ODE), analogous to a mass-spring-damper system subjected to stochastic forcing. The system state $x(t) \in \mathbb{R}$ evolves according to:
\begin{equation}
    m \ddot{x}(t) + c \dot{x}(t) + k x(t) = F_{\text{ext}}(t) + F_{\text{control}}(t),
\end{equation}
where $m$ represents the mass (inertia), $c$ is the damping coefficient (friction), and $k$ is the spring constant (convexity of the loss landscape). The external force $F_{\text{ext}}(t)$ represents the stochastic gradient noise, which we model as a heavy-tailed L\'{e}vy process characterized by an infinite variance (tail index $\alpha=1.5$). Specifically, the noise includes intermittent "catastrophic shocks" and "sustained bursts" to simulate extreme outliers and correlated noise sequences, respectively. The control force $F_{\text{control}}(t)$ represents the optimization update, which attempts to counteract the noise and guide the system towards the equilibrium $x=0$.

We compare three control strategies:
\begin{itemize}
    \item \textbf{Standard TTSA:} Applies a linear damping force proportional to the velocity, $F_{\text{control}} = -c \dot{x}$. This corresponds to standard momentum-based updates without explicit robustness.
    \item \textbf{Fixed Clipping ($\psi$-Variant):} Applies a Huber-style clipping with a static threshold $\psi_{\text{fixed}}$, i.e., $F_{\text{control}} = -\mathcal{T}_{\psi_{\text{fixed}}}(c \dot{x})$. This simulates heuristic gradient clipping.
    \item \textbf{RQ-TTSA (Ours):} Employs our proposed quantile-guided clipping, where the threshold $\psi_t$ adapts dynamically based on the historical distribution of $|F_{\text{ext}}|$. The control force is $F_{\text{control}} = -\mathcal{T}_{\psi_t}(c \dot{x})$, with $\psi_t$ derived from a rolling quantile of the noise magnitude.
\end{itemize}

The results demonstrate the distinct behaviors of each method:
\begin{itemize}
    \item \textbf{Standard TTSA (Orange)}: Upon encountering the first "Catastrophic Shock" at $t \approx 200$, the linear controller fails to bound the update variance. The system gains excessive kinetic energy, causing the trajectory to diverge instantly from the stable region. This mirrors the instability of standard bilevel optimization under heavy-tailed noise.
    \item \textbf{Fixed Clipping (Red)}: While the static threshold prevents immediate divergence, it fails to adapt to the "Sustained Burst" of noise around $t \approx 1300$. The fixed threshold is either too loose to filter the burst effectively or too tight to allow recovery, leading to visible oscillations and tracking errors.
    \item \textbf{RQ-TTSA (Teal)}: Our method exhibits superior stability. By dynamically adjusting the clipping threshold via quantile estimation, RQ-TTSA effectively "absorbs" the catastrophic shocks and filters the sustained burst. The trajectory tightly adheres to the \textbf{Ideal Noise-Free Convergence (Black Dashed)}, confirming that distribution-aware clipping preserves the underlying geometric fidelity of the optimization path even in the presence of extreme, infinite-variance perturbations.
\end{itemize}

\subsection{Biological Robustness: Canalization in Gene Regulatory Networks}
\label{app:biocanalization}

The concept of canalization, originally proposed by Waddington \cite{waddington2014strategy}, describes the capacity of a developmental system to maintain a stable phenotypic trajectory despite genetic or environmental perturbations. In systems biology, gene regulatory networks (GRNs) are governed by non-linear feedbacks that create valleys in the epigenetic landscape, directing cellular states toward functional equilibria. However, gene expression is inherently discrete and characterized by transcriptional bursts, which manifest as heavy-tailed impulsive noise following power-law distributions rather than standard Gaussian profiles \cite{raj2008nature}. We model this as a bilevel optimization problem where the lower-level variable $y$ represents fast protein concentration dynamics attempting to minimize a metabolic energy surface, while the upper-level $x$ represents the slow evolutionary tuning of regulatory parameters.

\paragraph{Dynamics and Jacobian-Induced Instability.}
We simulate the GRN dynamics as a non-monotone vector field $h(\alpha)$, where the landscape features a narrow sinusoidal valley defined by $y = 0.5 \sin(x)$. Standard gradient-based methods (LSE, AccBO) are governed by the conservative force $F = -J(\alpha)^\top h(\alpha)$. In the presence of heavy-tailed bursts, the multiplication by the Jacobian $J^\top$ serves as a noise amplifier; a single impulsive outlier in the residual $h$ is scaled by the local curvature, triggering a Variance Trap that ejects the trajectory from the stable developmental canal. Conversely, RQ-TTSA leverages a Jacobian-free update driven directly by the residual vector $h$. By implementing the quantile-guided Huber operator $\mathcal{T}_{\psi_k}(h)$, RQ-TTSA maintains \textit{Directional Correctness} while strictly bounding the effective variance, ensuring the particle remains within the geometric furrow even during extreme stochastic turbulence.

\paragraph{Analysis of Developmental Trajectories.}
The simulation results are visualized in Figure~\ref{fig:biocanalization}. The background streamlines represent the epigenetic regulatory flow toward the target equilibrium $\alpha^*$. We highlight four distinct behaviors under $3\%$ transcriptional burst probability:
\begin{itemize}
    \item \textbf{Standard TTSA (Red dot-dashed):} Exhibits severe "stochastic jumps." Without magnitude control, the first major burst provides enough kinetic energy to eject the system from the basin of attraction, leading to immediate divergence.
    \item \textbf{BiSLS (Magenta dotted):} Demonstrates conservative stagnation. While its norm-based scaling prevents divergence, the aggressive step-size reduction in response to bursts causes the optimizer to "lock" in suboptimal regions, unable to reach the target precision.
    \item \textbf{AccBO (Orange solid):} Suffers from momentum poisoning. The momentum buffer accumulates the impulsive noise of the bursts, resulting in persistent oscillations that eventually derail the convergence trajectory.
    \item \textbf{RQ-TTSA (Cyan):} Successfully achieves robust canalization. By utilizing the distribution-aware history buffer, it selectively filters the outliers while preserving the underlying directional signal. The trajectory tightly adheres to the developmental valley, reaching the target equilibrium with high geometric fidelity.
\end{itemize}

\begin{figure}[htbp]
    \centering
    \includegraphics[width=0.9\linewidth]{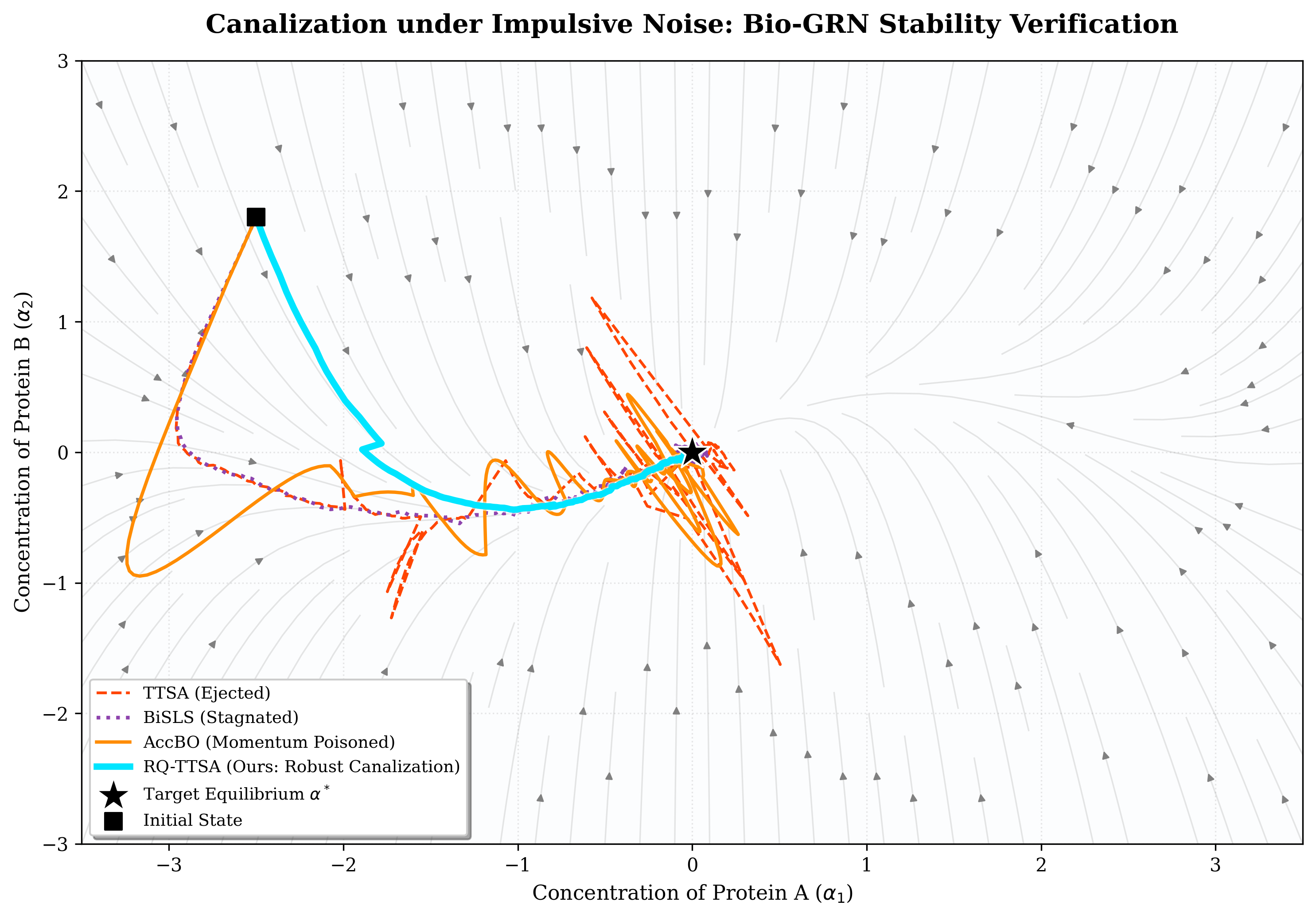} 
    \caption{\textbf{Canalization under Impulsive Noise in Bio-GRNs.} The streamlines visualize the regulatory flow toward the equilibrium $\alpha^*$. Under transcriptional bursts, standard methods (TTSA, AccBO) are ejected from the developmental path, and BiSLS stagnates. In contrast, \textbf{RQ-TTSA (Cyan)} leverages quantile-guided updates to filter impulsive variance, maintaining stable canalization along the non-linear valley to reach the functional steady state.}
    \label{fig:biocanalization}
\end{figure}

\section{Additional Experimental Details}
\label{app:experiments}

This appendix provides comprehensive implementation details, architectural specifications, and hyperparameter configurations for all experiments presented in Section~5. All experiments were implemented in PyTorch and executed on an NVIDIA A100 GPU. The code is provided in the supplementary material for reproducibility.

\subsection{Baselines and Implementation}
We compare RQ-TTSA against the following baselines:
\begin{itemize}
    \item \textbf{TTSA} \cite{hong2022twotimescaleframeworkbileveloptimization}: The standard two-timescale stochastic approximation without explicit robust mechanisms.
    \item \textbf{BiSLS} \cite{fan2023bisls}: A norm-based adaptive method that scales updates by the inverse of the gradient norm.
    \item \textbf{MA-SOBA} \cite{chen2024optimal}: A state-of-the-art method utilizing momentum-based variance reduction with bias correction.
    \item \textbf{AccBO} \cite{gong2024accelerated}: An accelerated algorithm designed for unbounded smoothness, employing normalized momentum.
    \item \textbf{$\psi$-Variant}: An ablation baseline using our framework but with a fixed (static) Huber threshold, illustrating the necessity of dynamic quantile estimation.
\end{itemize}

\subsection{Synthetic Experiments (Section 5.1)}
\label{app:synthetic_details}

\paragraph{Exp 5.1.1: Bilevel Representation Learning.}
\begin{itemize}
    \item \textbf{Problem Setup:} The lower-level problem learns a linear projection $\phi \in \mathbb{R}^{20 \times 20}$ to minimize classification loss on a synthetic training set ($N=400$), while the upper-level optimizes a linear classifier head $w \in \mathbb{R}^{20 \times 5}$ on a validation set ($N=400$).
    \item \textbf{Noise Model:} We inject heavy-tailed impulsive noise into the lower-level gradients. Specifically, with probability $p=0.15$, we add noise.
    \item \textbf{RQ-TTSA Settings:} We use a history buffer size $W=100$ and a quantile threshold $\tau=0.7$.
    \item \textbf{Hyperparameters:} Due to the severe noise, baselines like TTSA require conservative learning rates. RQ-TTSA allows for larger steps due to its robust clipping. 
\end{itemize}

\paragraph{Exp 5.1.2: Coupled Non-Convex Geometry.}
\begin{itemize}
    \item \textbf{Objective:} We minimize the coupled saddle-point problem defined by:
    Upper: $\theta^2 - \theta\phi - \phi^2$; Lower: $-(\theta^2 - \theta\phi - \phi^2) + \frac{\lambda}{2}(\theta - \phi)^2$, with penalty $\lambda=10$.
    \item \textbf{Initialization:} $(\theta_0, \phi_0) = (0.5, -0.5)$.
    \item \textbf{Protocol:} The optimization runs for $T=1000$ iterations. We measure the constraint error $|\theta - \phi|$ and the ability to converge to the global solution $(0,0)$ despite the vanishing gradient problem near the saddle point.
    \item \textbf{RQ-TTSA Settings:} Quantile $\tau=0.5$ with a safety floor of $0.1$ for the clipping threshold.
\end{itemize}

\paragraph{Hyperparameter Configuration.}
To ensure a fair comparison, we performed a grid search for learning rates ($\eta_{lower}, \eta_{upper}$) and momentum coefficients ($\beta$) for all methods. We maintained a consistent momentum factor $\beta=0.9$ across all momentum-based baselines.
For the USPS task, the upper-level learning rate was fixed at $\eta_{upper}=0.05$. The lower-level rate $\eta_{lower}$ was tuned to $0.02$ for TTSA and MA-SOBA, while AccBO and RQ-TTSA utilized a larger rate of $0.05$. 
In the Fashion-MNIST experiment, we set $\eta_{upper}=0.01$ globally. For the lower level, normalized methods (BiSLS and AccBO) required a conservative step size of $\eta_{lower}=0.01$ to prevent oscillation. In contrast, standard methods (TTSA, MA-SOBA) operated at $0.02$, while RQ-TTSA enabled the most aggressive step size of $0.04$ with a quantile threshold $\tau=0.8$. 
Clearly, the ability of RQ-TTSA to tolerate larger learning rates without divergence, despite the presence of heavy-tailed noise, validates the effectiveness of its distribution-aware clipping mechanism.

\subsection{Real-World Vision Tasks (Section 5.2)}
\label{app:vision_details}

\paragraph{Exp 5.2.1: USPS with Gradient Shocks.}
\begin{itemize}
    \item \textbf{Data \& Heterogeneity:} We use the full USPS dataset. To simulate data heterogeneity (e.g., in federated settings), we apply weighted sampling where Class 0 is sampled with $5\times$ higher probability than other classes.
    \item \textbf{Gradient Shocks:} Instead of constant noise, we introduce intermittent shocks. With probability $p=0.1$, the gradient is perturbed by additive noise scaled by a factor of $10.0$.
    \item \textbf{Architecture:} A linear projection layer ($256 \to 64$) serves as the upper-level variable, and a classification head ($64 \to 10$) serves as the lower-level variable.
    \item \textbf{Settings:} Batch size $B=32$, Iterations $T=800$. RQ-TTSA uses $\tau=0.8$ and buffer size $W=100$.
\end{itemize}

\paragraph{Exp 5.2.2: Fashion-MNIST (Momentum-Integrated).}
\begin{itemize}
    \item \textbf{Architecture:} Upper-level variable $\phi$ is a CNN (Conv2d $1\to16$, $3\times3$, ReLU, MaxPool); Lower-level variable is a linear classifier.
    \item \textbf{Momentum Integration:} To strictly evaluate our contribution against momentum baselines (MA-SOBA, AccBO), RQ-TTSA is implemented as a plug-and-play filter applied \textit{before} the momentum buffer update.
    \item \textbf{Settings:} Batch size $B=256$, Iterations $T=400$. RQ-TTSA uses $\tau=0.8$.
    \item \textbf{Baselines:} MA-SOBA uses standard momentum ($\beta=0.9$) with bias correction. AccBO uses normalized momentum to handle potential scale variations.
\end{itemize}

\begin{table}[htbp]
\centering
\setlength{\tabcolsep}{8pt} 
\caption{\textbf{Hyperparameters for Vision Experiments (Exp 5.2).} RQ-TTSA utilizes a larger $\eta_{lower}$ than baselines as our quantile-guided mechanism provides a stability buffer. This enables more aggressive optimization and faster convergence without sacrificing reliability.}
\label{tab:hyperparams_vision}
\begin{small}
\begin{tabular}{lccccc}
\noalign{\hrule height 1.2pt}
Task & Method & $\eta_{lower}$ & $\eta_{upper}$ & $\beta$ & $\tau$ \\
\midrule
\multirow{3}{*}{USPS} & TTSA/MA-SOBA & 0.02 & 0.05 & 0.9 & -- \\
 & AccBO & 0.05 & 0.05 & 0.9 & -- \\
 & RQ-TTSA & 0.05 & 0.05 & 0.9 & 0.8 \\
\midrule
\multirow{3}{*}{F-MNIST} & TTSA/MA-SOBA & 0.02 & 0.01 & 0.9 & -- \\
 & BiSLS/AccBO & 0.01 & 0.01 & 0.9 & -- \\
 & RQ-TTSA & 0.04 & 0.01 & 0.9 & 0.8 \\
\noalign{\hrule height 1.2pt}
\end{tabular}
\end{small}
\end{table}

\subsection{Dynamic Environments \& RL (Section 5.3)}
\label{app:rl_details}

\paragraph{Exp 5.3.1: Zero-Sum Game with Momentum Poisoning.}
\begin{itemize}
    \item \textbf{Setup:} A bilinear zero-sum game $\min_\phi \max_w \phi^\top M w$ with dimension $d=5$.
    \item \textbf{Momentum Poisoning:} We simulate a Momentum Killer scenario where sparse ($p=0.02$) but catastrophic noise (magnitude $50\times$) is injected. This setting specifically targets momentum-based optimizers (like MA-SOBA) which accumulate these large errors.
    \item \textbf{RQ-TTSA Strategy:} We use a strict quantile $\tau=0.5$ to aggressively filter these rare outliers before they enter the momentum buffer.
\end{itemize}

\paragraph{Exp 5.3.2: Offline Actor-Critic (LunarLander).}
\begin{itemize}
    \item \textbf{Dataset:} We generate a fixed offline dataset of $N=30,000$ transitions using a pre-trained policy on \texttt{LunarLander-v2}.
    \item \textbf{Reward Shock:} To stress-test stability, $5\%$ of the transitions are corrupted with an extreme reward signal ($+20.0$), creating massive gradient spikes in the Critic updates.
    \item \textbf{Architecture:} Both Actor and Critic are MLPs with 2 hidden layers of 128 units.
    \item \textbf{Parameters:} Upper LR (Actor) $\alpha=0.005$, Lower LR (Critic) $\beta=0.02$. RQ-TTSA utilizes a dynamic schedule for $\tau$, ramping from $0.6$ to $0.85$ to balance early-stage exploration and late-stage stability.
\end{itemize}

\section{Computational Complexity and Wall-Clock Time Analysis}
\label{app:complexity}

In this section, we provide a detailed breakdown of the computational overhead introduced by the proposed RQ-TTSA algorithm. A potential concern with quantile-based methods is the cost associated with maintaining and sorting the history buffer to estimate $\psi_k$. Theoretically, for a buffer of size $W$, the sorting operation incurs a complexity of $\mathcal{O}(W \log W)$, whereas standard normalization methods (like BiSLS or AccBO) incur $\mathcal{O}(1)$ additional cost per iteration.

\subsection{Empirical Runtimes}
To evaluate the practical impact of this theoretical overhead, we measured the average wall-clock time per iteration across four distinct experimental settings: the synthetic bilevel problem (Exp 5.1.1), the high-dimensional Convolutional Neural Network task (Exp 5.2.2), the low-dimensional Stochastic Game (Exp 5.3.1), and the Reinforcement Learning task (Exp 5.3.2). All measurements were averaged over the respective number of random seeds used in the main experiments.

Table~\ref{tab:full_time_comparison} summarizes the results. We observe three key trends:

\textbf{1. Marginal Overhead in Deep Learning (Exp 5.2.2):} In the Fashion-MNIST experiment involving a CNN, the gradient computation via backpropagation ($\mathcal{O}(P)$, where $P$ is the parameter count) dominates the runtime. Consequently, the sorting cost of RQ-TTSA adds only $\approx 0.34$ ms per iteration compared to the fastest baseline (BiSLS). This represents a relative increase of only $\approx 2.7\%$, verifying that the robustness mechanism does not create a bottleneck in high-dimensional optimization.

\textbf{2. Ratio in Low-Dimensional Settings (Exp 5.1.1 \& 5.3.1):} In the synthetic and zero-sum game settings, the base gradient computation is fast. Here, the sorting overhead appears statistically larger in percentage terms (e.g., in Exp 5.1.1, increasing from $\approx 1.13$ ms to $1.38$ ms). However, the absolute difference remains sub-millisecond ($<0.3$ ms), which is negligible for any practical deployment.

\textbf{3. Efficiency in Complex Pipelines (Exp 5.3.2):} In the LunarLander RL setting, we observe that RQ-TTSA (2.69 ms) performs on par with standard methods (e.g., MA-SOBA at 2.68 ms) and is faster than AccBO (2.89 ms). This suggests that in complex pipelines involving environment interaction and forward passes, the sorting overhead is completely overshadowed by system variance and other computational factors.
\begin{table}[htbp]
\caption{Comprehensive Wall-Clock Time Comparison (Time per Iteration in milliseconds). RQ-TTSA introduces negligible overhead in computational-heavy tasks (Vision/RL). Even in lightweight toy problems, the absolute cost increase is sub-millisecond.}
\label{tab:full_time_comparison}
\begin{center}
\begin{small}
\setlength{\tabcolsep}{4pt}
\resizebox{0.7\columnwidth}{!}{
\begin{tabular}{lcccc}
\noalign{\hrule height 1.2pt}
\textbf{Method} & \textbf{Exp 5.1.1 (Synth)} & \textbf{Exp 5.2.2 (Vision)} & \textbf{Exp 5.3.1 (Game)} & \textbf{Exp 5.3.2 (RL)} \\
& Synthetic & Fashion-MNIST & Zero-Sum Impulse & LunarLander \\
\midrule
TTSA        & 1.25 $\pm$ 0.22 & 12.38 $\pm$ 0.05 & \textbf{0.67} $\pm$ \textbf{0.01} & 2.61 $\pm$ 0.32 \\
BiSLS       & \textbf{1.13} $\pm$ \textbf{0.02} & \textbf{12.37} $\pm$ \textbf{0.03} & 0.69 $\pm$ 0.01 & 2.76 $\pm$ 0.08 \\
MA-SOBA     & 1.16 $\pm$ 0.04 & 12.42 $\pm$ 0.03 & 0.70 $\pm$ 0.01 & 2.68 $\pm$ 0.02 \\
AccBO       & 1.20 $\pm$ 0.01 & 12.46 $\pm$ 0.05 & 0.72 $\pm$ 0.01 & 2.89 $\pm$ 0.05 \\
$\psi$-Variant & 1.19 $\pm$ 0.09 & 12.44 $\pm$ 0.08 & 0.68 $\pm$ 0.01 & \textbf{2.43} $\pm$ \textbf{0.04} \\

\rowcolor{blue!10}
\textbf{RQ-TTSA} & 1.38 $\pm$ 0.02 & 12.71 $\pm$ 0.06 & 0.96 $\pm$ 0.01 & 2.69 $\pm$ 0.03 \\
\noalign{\hrule height 1.2pt}
\end{tabular}
}
\end{small}
\end{center}
\end{table}

\textbf{Conclusion:} While RQ-TTSA theoretically incurs a sorting cost, empirical evidence across diverse domains confirms that this cost is computationally insignificant relative to the stability gains it provides. The overhead is effectively amortized in deep learning and reinforcement learning workflows.

\section{Practical Tuning Guide for the Quantile Threshold \texorpdfstring{$\tau$}{tau}}
\label{app:tuningguide}

This section elaborates on the physical intuition of the quantile threshold $\tau$ and provides practical per-task tuning guidelines for RQ-TTSA in different robust optimization scenarios.

\subsection{Dimensionless Robustness}
The primary advantage of a quantile-based threshold $\tau$ over a fixed value $\psi$ is its \textbf{dimensionless nature}. In a standard optimization task, the absolute norm of the gradient can vary by several orders of magnitude (e.g., $10^1$ in vision tasks vs. $10^{-3}$ in some RL environments). A fixed threshold requires manual scale-matching for every new problem. In contrast, $\tau=0.9$ consistently represents the decision to filter the most extreme 10\% of stochastic signals, regardless of the underlying scale. As shown in our sensitivity analysis (\ref{app:complexitysensitivity}), this leads to a remarkably flat performance landscape, where a wide range of $\tau$ values yields superior results compared to unclipped baselines.

\subsection{Bias-Variance Trade-off Strategy}
The choice of $\tau$ is fundamentally a trade-off between approximation bias and variance control. Based on our extensive empirical evaluations, we suggest the following heuristics:

\begin{itemize}
    \item \textbf{High-Signal Environments (e.g., Fashion-MNIST, standard Vision):} In these settings, gradients are relatively reliable, and the goal is only to prune rare destructive spikes. We recommend a \textbf{high} $\tau \in [0.9, 0.95]$. This maintains a \textbf{Low Bias} strategy, preserving the fine-grained geometric information of the local landscape while ensuring basic stability.
    \item \textbf{Impulsive or Heavy-Tailed Environments (e.g., Momentum Poisoning, Offline RL):} When the stochastic oracle is frequently corrupted by high-magnitude outliers (infinite variance regime), the priority shifts to system survival. We recommend a \textbf{lower} $\tau \in [0.5, 0.7]$. This represents a \textbf{Low Variance} strategy, where we intentionally accept a higher approximation bias to achieve a strictly bounded effective variance, preventing catastrophic divergence.
\end{itemize}

\subsection{Summary of Recommended Settings}
For most unknown bilevel optimization tasks, we found that $\tau=0.8$ serves as a robust default starting point. If the training loss exhibits frequent, large upward spikes, $\tau$ should be decreased. Conversely, if the convergence speed is significantly slower than standard TTSA in a noise-free setting, $\tau$ should be increased to reduce bias.

\section{Extended Sensitivity and Complexity Analysis}
\label{app:complexitysensitivity}

In this section, we provide a detailed evaluation of RQ-TTSA's sensitivity to its core hyperparameters: the quantile threshold $\tau$ and the moving average window size $W$. The experiments are conducted on the Fashion-MNIST dataset using the same Momentum-Integrated setup described in Section 5.2.2.

\subsection{Hyperparameter Sensitivity on Fashion-MNIST}
\label{sec:sensitivity_analysis}

We perform a grid search over $\tau \in \{0.5, 0.6, 0.7, 0.8, 0.9, 0.95\}$ and $W \in \{25, 50, 100, 200\}$ to assess how distribution-aware truncation affects convergence and stability. Table~\ref{tab:sensitivity_results} summarizes representative configurations. 

Figure~\ref{fig:exp_appindx} presents a detailed sensitivity landscape, revealing a distinct performance dichotomy governed by the interaction between history length and clipping aggressiveness. In the top-left region ($W \le 50$, $\tau \le 0.6$), the heatmap exhibits elevated validation loss (indicated by red/pink hues, $\ge 0.785$). This visual evidence suggests that a minimal buffer ($W \le 50$) is insufficient to capture stable gradient statistics. Furthermore, when combined with aggressive clipping ($\tau \le 0.6$), the algorithm potentially suppresses essential structural signals under the guise of noise reduction, thereby impeding the model's ability to reach optimal minima. Conversely, larger windows ($W=200$) coupled with moderate truncation ($\tau \ge 0.9$) yield the most stable convergence trajectories.

In contrast, the landscape transitions into a deep blue stability basin towards the bottom-right, particularly within the area highlighted by the dashed box ($W \ge 150$, $\tau \ge 0.90$). Here, the loss stabilizes at its minimum ($\approx 0.770$), confirming that a larger moving average window effectively smooths out estimation variance. Furthermore, the preference for higher quantile thresholds ($\tau \to 0.97$) indicates that the optimal strategy is to clip only the most extreme heavy-tailed outliers while preserving the majority of the gradient information. The uniform color consistency within this dashed region demonstrates that RQ-TTSA achieves high robustness, maintaining optimal performance insensitive to local hyperparameter perturbations once within this regime.

\begin{figure}[htbp]
    \centering
    \includegraphics[width=0.65\columnwidth]{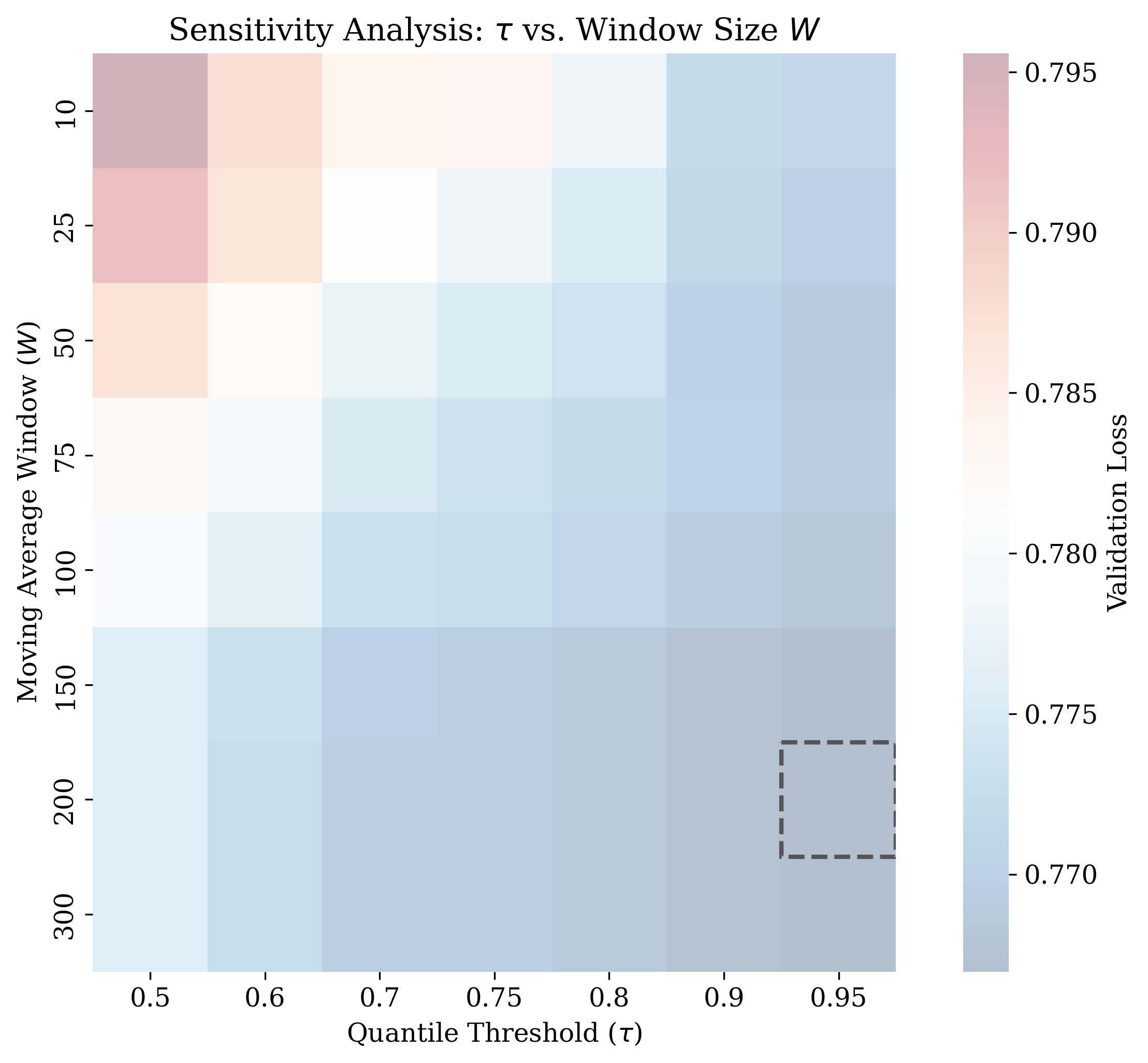}
    \caption{\textbf{RQ-TTSA Sensitivity Analysis on Fashion-MNIST.} The heatmap illustrates the terminal validation loss across various quantile thresholds $\tau$ and window sizes $W$. The optimal configuration is identified at $W=100, \tau=0.95$, though the overall landscape demonstrates significant robustness to parameter selection.}
    \label{fig:exp_appindx}
\end{figure}

\begin{table}[htbp]
    \caption{
    \textbf{Sensitivity of RQ-TTSA to hyperparameters $\tau$ and $W$ on Fashion-MNIST.} 
    Each cell reports the terminal Validation Loss and its corresponding standard deviation (in parentheses) over 5 seeds.
    }
    \label{tab:sensitivity_results}
    \begin{center}
    \begin{small}
    \renewcommand{\arraystretch}{1.25} 
    \setlength{\tabcolsep}{4pt}
    \resizebox{0.75\columnwidth}{!}{
    \begin{tabular}{lccccc}
        \noalign{\hrule height 1.2pt}
        & \multicolumn{5}{c}{\textbf{Quantile Threshold $\tau$}} \\
        \cmidrule(lr){2-6}
        Window Size $W$ & 0.6 & 0.7 & 0.8 & 0.9 & 0.95 \\
        \midrule
        W=25  & 0.7865 {\tiny ($\pm$ 0.0802)} & 0.7813 {\tiny ($\pm$ 0.0783)} & 0.7751 {\tiny ($\pm$ 0.0769)} & 0.7714 {\tiny ($\pm$ 0.0775)} & 0.7699 {\tiny ($\pm$ 0.0770)} \\
        W=50  & 0.7821 {\tiny ($\pm$ 0.0774)} & 0.7774 {\tiny ($\pm$ 0.0769)} & 0.7738 {\tiny ($\pm$ 0.0766)} & 0.7702 {\tiny ($\pm$ 0.0757)} & 0.7691 {\tiny ($\pm$ 0.0763)} \\
        W=100 & 0.7764 {\tiny ($\pm$ 0.0762)} & 0.7732 {\tiny ($\pm$ 0.0761)} & 0.7712 {\tiny ($\pm$ 0.0763)} & 0.7695 {\tiny ($\pm$ 0.0764)} & 0.7684 {\tiny ($\pm$ 0.0767)} \\
        
        W=200 & 0.7725 {\tiny ($\pm$ 0.0759)} & 0.7696 {\tiny ($\pm$ 0.0763)} & 0.7687 {\tiny ($\pm$ 0.0769)} & 0.7675 {\tiny ($\pm$ 0.0767)} & 0.7670 {\tiny ($\pm$ 0.0772)} \\
        \noalign{\hrule height 1.2pt}
    \end{tabular}
    }
    \end{small}
    \end{center}
\end{table}

\section{Code Availability and Reproducibility}
\label{app:code_link}
To support the reproducibility of our empirical results, we provide the complete source code, including the implementation of RQ-TTSA, baseline comparisons (TTSA, BiSLS, MA-SOBA, AccBO), and all task-specific scripts. The source code has been uploaded to an anonymous GitHub repository, with the access link provided in the \texttt{code\_ICML\_RQTTSA2.txt} file within the supplementary materials. Furthermore, to facilitate a comprehensive review of all experimental procedures, the complete code is also provided in the \texttt{code\_ICML\_RQTTSA2.ipynb} file in the supplementary materials, systematically categorized and organized by experiment name.

\newpage
\section*{NeurIPS Paper Checklist}

\begin{enumerate}

\item {\bf Claims}
    \item[] Question: Do the main claims made in the abstract and introduction accurately reflect the paper's contributions and scope?
    \item[] Answer: \answerYes{} 
    \item[] Justification: The theoretical claims are explicitly supported by Section 4 (Theoretical Analysis), and the empirical claims are validated in Section 5 (Experiments).
    \item[] Guidelines:
    \begin{itemize}
        \item The answer \answerNA{} means that the abstract and introduction do not include the claims made in the paper.
        \item The abstract and/or introduction should clearly state the claims made, including the contributions made in the paper and important assumptions and limitations. A \answerNo{} or \answerNA{} answer to this question will not be perceived well by the reviewers. 
        \item The claims made should match theoretical and experimental results, and reflect how much the results can be expected to generalize to other settings. 
        \item It is fine to include aspirational goals as motivation as long as it is clear that these goals are not attained by the paper. 
    \end{itemize}

\item {\bf Limitations}
    \item[] Question: Does the paper discuss the limitations of the work performed by the authors?
    \item[] Answer: \answerYes{}
    \item[] Justification: We discussed the limitations and scope in Section 6 (Conclusion) and the Impact Statement.
    \item[] Guidelines:
    \begin{itemize}
        \item The answer \answerNA{} means that the paper has no limitation while the answer \answerNo{} means that the paper has limitations, but those are not discussed in the paper. 
        \item The authors are encouraged to create a separate ``Limitations'' section in their paper.
        \item The paper should point out any strong assumptions and how robust the results are to violations of these assumptions (e.g., independence assumptions, noiseless settings, model well-specification, asymptotic approximations only holding locally). The authors should reflect on how these assumptions might be violated in practice and what the implications would be.
        \item The authors should reflect on the scope of the claims made, e.g., if the approach was only tested on a few datasets or with a few runs. In general, empirical results often depend on implicit assumptions, which should be articulated.
        \item The authors should reflect on the factors that influence the performance of the approach. For example, a facial recognition algorithm may perform poorly when image resolution is low or images are taken in low lighting. Or a speech-to-text system might not be used reliably to provide closed captions for online lectures because it fails to handle technical jargon.
        \item The authors should discuss the computational efficiency of the proposed algorithms and how they scale with dataset size.
        \item If applicable, the authors should discuss possible limitations of their approach to address problems of privacy and fairness.
        \item While the authors might fear that complete honesty about limitations might be used by reviewers as grounds for rejection, a worse outcome might be that reviewers discover limitations that aren't acknowledged in the paper. The authors should use their best judgment and recognize that individual actions in favor of transparency play an important role in developing norms that preserve the integrity of the community. Reviewers will be specifically instructed to not penalize honesty concerning limitations.
    \end{itemize}

\item {\bf Theory assumptions and proofs}
    \item[] Question: For each theoretical result, does the paper provide the full set of assumptions and a complete (and correct) proof?
    \item[] Answer: \answerYes{}
    \item[] Justification: All assumptions are formally stated in Section 4.1, and complete proofs are provided in Appendix A.
    \item[] Guidelines:
    \begin{itemize}
        \item The answer \answerNA{} means that the paper does not include theoretical results. 
        \item All the theorems, formulas, and proofs in the paper should be numbered and cross-referenced.
        \item All assumptions should be clearly stated or referenced in the statement of any theorems.
        \item The proofs can either appear in the main paper or the supplemental material, but if they appear in the supplemental material, the authors are encouraged to provide a short proof sketch to provide intuition. 
        \item Inversely, any informal proof provided in the core of the paper should be complemented by formal proofs provided in appendix or supplemental material.
        \item Theorems and Lemmas that the proof relies upon should be properly referenced. 
    \end{itemize}

    \item {\bf Experimental result reproducibility}
    \item[] Question: Does the paper fully disclose all the information needed to reproduce the main experimental results of the paper to the extent that it affects the main claims and/or conclusions of the paper (regardless of whether the code and data are provided or not)?
    \item[] Answer: \answerYes{}
    \item[] Justification: Detailed hyperparameter configurations, network architectures, and training setups are systematically documented in Appendix D.
    \item[] Guidelines:
    \begin{itemize}
        \item The answer \answerNA{} means that the paper does not include experiments.
        \item If the paper includes experiments, a \answerNo{} answer to this question will not be perceived well by the reviewers: Making the paper reproducible is important, regardless of whether the code and data are provided or not.
        \item If the contribution is a dataset and\slash or model, the authors should describe the steps taken to make their results reproducible or verifiable. 
        \item Depending on the contribution, reproducibility can be accomplished in various ways. For example, if the contribution is a novel architecture, describing the architecture fully might suffice, or if the contribution is a specific model and empirical evaluation, it may be necessary to either make it possible for others to replicate the model with the same dataset, or provide access to the model. In general. releasing code and data is often one good way to accomplish this, but reproducibility can also be provided via detailed instructions for how to replicate the results, access to a hosted model (e.g., in the case of a large language model), releasing of a model checkpoint, or other means that are appropriate to the research performed.
        \item While NeurIPS does not require releasing code, the conference does require all submissions to provide some reasonable avenue for reproducibility, which may depend on the nature of the contribution. For example
        \begin{enumerate}
            \item If the contribution is primarily a new algorithm, the paper should make it clear how to reproduce that algorithm.
            \item If the contribution is primarily a new model architecture, the paper should describe the architecture clearly and fully.
            \item If the contribution is a new model (e.g., a large language model), then there should either be a way to access this model for reproducing the results or a way to reproduce the model (e.g., with an open-source dataset or instructions for how to construct the dataset).
            \item We recognize that reproducibility may be tricky in some cases, in which case authors are welcome to describe the particular way they provide for reproducibility. In the case of closed-source models, it may be that access to the model is limited in some way (e.g., to registered users), but it should be possible for other researchers to have some path to reproducing or verifying the results.
        \end{enumerate}
    \end{itemize}

\item {\bf Open access to data and code}
    \item[] Question: Does the paper provide open access to the data and code, with sufficient instructions to faithfully reproduce the main experimental results, as described in supplemental material?
    \item[] Answer: \answerYes{}
    \item[] Justification: We have provided an anonymous GitHub repository link to our source code and scripts in Appendix H to support reproducibility.
    \item[] Guidelines:
    \begin{itemize}
        \item The answer \answerNA{} means that paper does not include experiments requiring code.
        \item Please see the NeurIPS code and data submission guidelines (\url{https://neurips.cc/public/guides/CodeSubmissionPolicy}) for more details.
        \item While we encourage the release of code and data, we understand that this might not be possible, so \answerNo{} is an acceptable answer. Papers cannot be rejected simply for not including code, unless this is central to the contribution (e.g., for a new open-source benchmark).
        \item The instructions should contain the exact command and environment needed to run to reproduce the results. See the NeurIPS code and data submission guidelines (\url{https://neurips.cc/public/guides/CodeSubmissionPolicy}) for more details.
        \item The authors should provide instructions on data access and preparation, including how to access the raw data, preprocessed data, intermediate data, and generated data, etc.
        \item The authors should provide scripts to reproduce all experimental results for the new proposed method and baselines. If only a subset of experiments are reproducible, they should state which ones are omitted from the script and why.
        \item At submission time, to preserve anonymity, the authors should release anonymized versions (if applicable).
        \item Providing as much information as possible in supplemental material (appended to the paper) is recommended, but including URLs to data and code is permitted.
    \end{itemize}

\item {\bf Experimental setting/details}
    \item[] Question: Does the paper specify all the training and test details (e.g., data splits, hyperparameters, how they were chosen, type of optimizer) necessary to understand the results?
    \item[] Answer: \answerYes{}
    \item[] Justification: All relevant experimental settings, including grid search details and optimizer choices, are documented in Section 5 and comprehensively in Appendix D.
    \item[] Guidelines:
    \begin{itemize}
        \item The answer \answerNA{} means that the paper does not include experiments.
        \item The experimental setting should be presented in the core of the paper to a level of detail that is necessary to appreciate the results and make sense of them.
        \item The full details can be provided either with the code, in appendix, or as supplemental material.
    \end{itemize}

\item {\bf Experiment statistical significance}
    \item[] Question: Does the paper report error bars suitably and correctly defined or other appropriate information about the statistical significance of the experiments?
    \item[] Answer: \answerYes{}
    \item[] Justification: We report mean and standard deviations across multiple random seeds (e.g., 5 seeds for synthetic/RL tasks, and 10 seeds for Fashion-MNIST).
    \item[] Guidelines:
    \begin{itemize}
        \item The answer \answerNA{} means that the paper does not include experiments.
        \item The authors should answer \answerYes{} if the results are accompanied by error bars, confidence intervals, or statistical significance tests, at least for the experiments that support the main claims of the paper.
        \item The factors of variability that the error bars are capturing should be clearly stated (for example, train/test split, initialization, random drawing of some parameter, or overall run with given experimental conditions).
        \item The method for calculating the error bars should be explained (closed form formula, call to a library function, bootstrap, etc.)
        \item The assumptions made should be given (e.g., Normally distributed errors).
        \item It should be clear whether the error bar is the standard deviation or the standard error of the mean.
        \item It is OK to report 1-sigma error bars, but one should state it. The authors should preferably report a 2-sigma error bar than state that they have a 96\% CI, if the hypothesis of Normality of errors is not verified.
        \item For asymmetric distributions, the authors should be careful not to show in tables or figures symmetric error bars that would yield results that are out of range (e.g., negative error rates).
        \item If error bars are reported in tables or plots, the authors should explain in the text how they were calculated and reference the corresponding figures or tables in the text.
    \end{itemize}

\item {\bf Experiments compute resources}
    \item[] Question: For each experiment, does the paper provide sufficient information on the computer resources (type of compute workers, memory, time of execution) needed to reproduce the experiments?
    \item[] Answer: \answerYes{}
    \item[] Justification: The specific compute resources (NVIDIA A100 GPU) and empirical runtime analysis per iteration are disclosed in Appendix E.
    \item[] Guidelines:
    \begin{itemize}
        \item The answer \answerNA{} means that the paper does not include experiments.
        \item The paper should indicate the type of compute workers CPU or GPU, internal cluster, or cloud provider, including relevant memory and storage.
        \item The paper should provide the amount of compute required for each of the individual experimental runs as well as estimate the total compute. 
        \item The paper should disclose whether the full research project required more compute than the experiments reported in the paper (e.g., preliminary or failed experiments that didn't make it into the paper). 
    \end{itemize}
    
\item {\bf Code of ethics}
    \item[] Question: Does the research conducted in the paper conform, in every respect, with the NeurIPS Code of Ethics \url{https://neurips.cc/public/EthicsGuidelines}?
    \item[] Answer: \answerYes{}
    \item[] Justification: This research focuses on fundamental optimization algorithms and strictly conforms to the NeurIPS Code of Ethics.
    \item[] Guidelines:
    \begin{itemize}
        \item The answer \answerNA{} means that the authors have not reviewed the NeurIPS Code of Ethics.
        \item If the authors answer \answerNo, they should explain the special circumstances that require a deviation from the Code of Ethics.
        \item The authors should make sure to preserve anonymity (e.g., if there is a special consideration due to laws or regulations in their jurisdiction).
    \end{itemize}

\item {\bf Broader impacts}
    \item[] Question: Does the paper discuss both potential positive societal impacts and negative societal impacts of the work performed?
    \item[] Answer: \answerNA{}
    \item[] Justification: This paper develops a generic theoretical optimization algorithm and does not focus on applications with direct positive or negative societal impacts.
    \item[] Guidelines:
    \begin{itemize}
        \item The answer \answerNA{} means that there is no societal impact of the work performed.
        \item If the authors answer \answerNA{} or \answerNo, they should explain why their work has no societal impact or why the paper does not address societal impact.
        \item Examples of negative societal impacts include potential malicious or unintended uses (e.g., disinformation, generating fake profiles, surveillance), fairness considerations (e.g., deployment of technologies that could make decisions that unfairly impact specific groups), privacy considerations, and security considerations.
        \item The conference expects that many papers will be foundational research and not tied to particular applications, let alone deployments. However, if there is a direct path to any negative applications, the authors should point it out. For example, it is legitimate to point out that an improvement in the quality of generative models could be used to generate Deepfakes for disinformation. On the other hand, it is not needed to point out that a generic algorithm for optimizing neural networks could enable people to train models that generate Deepfakes faster.
        \item The authors should consider possible harms that could arise when the technology is being used as intended and functioning correctly, harms that could arise when the technology is being used as intended but gives incorrect results, and harms following from (intentional or unintentional) misuse of the technology.
        \item If there are negative societal impacts, the authors could also discuss possible mitigation strategies (e.g., gated release of models, providing defenses in addition to attacks, mechanisms for monitoring misuse, mechanisms to monitor how a system learns from feedback over time, improving the efficiency and accessibility of ML).
    \end{itemize}
    
\item {\bf Safeguards}
    \item[] Question: Does the paper describe safeguards that have been put in place for responsible release of data or models that have a high risk for misuse (e.g., pre-trained language models, image generators, or scraped datasets)?
    \item[] Answer: \answerNA{}
    \item[] Justification: The paper does not release large-scale datasets or models with a high risk for misuse.
    \item[] Guidelines:
    \begin{itemize}
        \item The answer \answerNA{} means that the paper poses no such risks.
        \item Released models that have a high risk for misuse or dual-use should be released with necessary safeguards to allow for controlled use of the model, for example by requiring that users adhere to usage guidelines or restrictions to access the model or implementing safety filters. 
        \item Datasets that have been scraped from the Internet could pose safety risks. The authors should describe how they avoided releasing unsafe images.
        \item We recognize that providing effective safeguards is challenging, and many papers do not require this, but we encourage authors to take this into account and make a best faith effort.
    \end{itemize}

\item {\bf Licenses for existing assets}
    \item[] Question: Are the creators or original owners of assets (e.g., code, data, models), used in the paper, properly credited and are the license and terms of use explicitly mentioned and properly respected?
    \item[] Answer: \answerNA{}
    \item[] Justification: We strictly use standard public environments (e.g., Gymnasium LunarLander) and datasets (USPS, Fashion-MNIST) following their original, well-known public licenses without modification.
    \item[] Guidelines:
    \begin{itemize}
        \item The answer \answerNA{} means that the paper does not use existing assets.
        \item The authors should cite the original paper that produced the code package or dataset.
        \item The authors should state which version of the asset is used and, if possible, include a URL.
        \item The name of the license (e.g., CC-BY 4.0) should be included for each asset.
        \item For scraped data from a particular source (e.g., website), the copyright and terms of service of that source should be provided.
        \item If assets are released, the license, copyright information, and terms of use in the package should be provided. For popular datasets, \url{paperswithcode.com/datasets} has curated licenses for some datasets. Their licensing guide can help determine the license of a dataset.
        \item For existing datasets that are re-packaged, both the original license and the license of the derived asset (if it has changed) should be provided.
        \item If this information is not available online, the authors are encouraged to reach out to the asset's creators.
    \end{itemize}

\item {\bf New assets}
    \item[] Question: Are new assets introduced in the paper well documented and is the documentation provided alongside the assets?
    \item[] Answer: \answerNA{}
    \item[] Justification: The paper does not release new datasets or similar assets.
    \item[] Guidelines:
    \begin{itemize}
        \item The answer \answerNA{} means that the paper does not release new assets.
        \item Researchers should communicate the details of the dataset\slash code\slash model as part of their submissions via structured templates. This includes details about training, license, limitations, etc. 
        \item The paper should discuss whether and how consent was obtained from people whose asset is used.
        \item At submission time, remember to anonymize your assets (if applicable). You can either create an anonymized URL or include an anonymized zip file.
    \end{itemize}

\item {\bf Crowdsourcing and research with human subjects}
    \item[] Question: For crowdsourcing experiments and research with human subjects, does the paper include the full text of instructions given to participants and screenshots, if applicable, as well as details about compensation (if any)? 
    \item[] Answer: \answerNA{}
    \item[] Justification: The research involves synthetic, simulation, and standard dataset benchmarks and does not involve human subjects or crowdsourcing.
    \item[] Guidelines:
    \begin{itemize}
        \item The answer \answerNA{} means that the paper does not involve crowdsourcing nor research with human subjects.
        \item Including this information in the supplemental material is fine, but if the main contribution of the paper involves human subjects, then as much detail as possible should be included in the main paper. 
        \item According to the NeurIPS Code of Ethics, workers involved in data collection, curation, or other labor should be paid at least the minimum wage in the country of the data collector. 
    \end{itemize}

\item {\bf Institutional review board (IRB) approvals or equivalent for research with human subjects}
    \item[] Question: Does the paper describe potential risks incurred by study participants, whether such risks were disclosed to the subjects, and whether Institutional Review Board (IRB) approvals (or an equivalent approval/review based on the requirements of your country or institution) were obtained?
    \item[] Answer: \answerNA{}
    \item[] Justification: The research does not involve human subjects, hence IRB approval is not applicable.
    \item[] Guidelines:
    \begin{itemize}
        \item The answer \answerNA{} means that the paper does not involve crowdsourcing nor research with human subjects.
        \item Depending on the country in which research is conducted, IRB approval (or equivalent) may be required for any human subjects research. If you obtained IRB approval, you should clearly state this in the paper. 
        \item We recognize that the procedures for this may vary significantly between institutions and locations, and we expect authors to adhere to the NeurIPS Code of Ethics and the guidelines for their institution. 
        \item For initial submissions, do not include any information that would break anonymity (if applicable), such as the institution conducting the review.
    \end{itemize}

\item {\bf Declaration of LLM usage}
    \item[] Question: Does the paper describe the usage of LLMs if it is an important, original, or non-standard component of the core methods in this research? Note that if the LLM is used only for writing, editing, or formatting purposes and does \emph{not} impact the core methodology, scientific rigor, or originality of the research, declaration is not required.
    \item[] Answer: \answerNA{}
    \item[] Justification: LLMs were not used as a core component of the methodology. Their use was strictly limited to text editing and formatting assistance.
    \item[] Guidelines:
    \begin{itemize}
        \item The answer \answerNA{} means that the core method development in this research does not involve LLMs as any important, original, or non-standard components.
        \item Please refer to our LLM policy in the NeurIPS handbook for what should or should not be described.
    \end{itemize}

\end{enumerate}

\end{document}